\documentclass{article} 
\usepackage{iclr2022_conference,times}


\usepackage{amsmath,amsfonts,bm}









\def\eqref#1{equation~\ref{#1}}









\def\1{\bm{1}}










\DeclareMathAlphabet{\mathsfit}{\encodingdefault}{\sfdefault}{m}{sl}
\SetMathAlphabet{\mathsfit}{bold}{\encodingdefault}{\sfdefault}{bx}{n}













\DeclareMathOperator*{\argmax}{arg\,max}
\DeclareMathOperator*{\argmin}{arg\,min}

\usepackage{hyperref}
\usepackage{url}
\usepackage{amsmath,amssymb,algorithm,algorithmicx,algpseudocode,amsthm,bm,bbm}

\newtheorem{assumption}{Assumption}
\newtheorem{theorem}{Theorem}
\newtheorem{lemma}{Lemma}

\newtheorem{definition}{Definition}

\title{Near-Optimal Reward-Free Exploration for Linear Mixture MDPs with Plug-in Solver}


\author{Xiaoyu Chen \& Jiachen Hu  \\
Key Laboratory of Machine Perception, MOE, \\School of Artificial Intelligence,
Peking University \\
\texttt{\{cxy30, NickH\}@pku.edu.cn} \\
\And
Lin F. Yang  \thanks{Corresponding author.} \\ 
Electrical and Computer Engineering Department, \\
University of California, Los Angeles \\
\texttt{linyang@ee.ucla.edu} \\
\And
Liwei Wang \textsuperscript{*}\\
Key Laboratory of Machine Perception, MOE,\\ School of Artificial Intelligence,
Peking University \\
International Center for Machine Learning Research, Peking University \\
\texttt{wanglw@cis.pku.edu.cn}
}

%

\iclrfinalcopy 
\begin{document}

\maketitle

\begin{abstract}
Although model-based reinforcement learning (RL) approaches are considered more sample efficient, existing algorithms are usually relying on sophisticated planning algorithm to couple tightly with the model-learning procedure. Hence the learned models may lack the ability of being re-used with more specialized planners. In this paper we address this issue and provide approaches to learn an RL model efficiently without the guidance of a reward signal. In particular, we take a plug-in solver approach, where we focus on learning a model in the exploration phase and demand that \emph{any planning algorithm} on the learned model can give a near-optimal policy. specifically, we focus on the linear mixture MDP setting, where the probability transition matrix is a (unknown) convex combination of a set of existing models. We show that, by establishing a novel exploration algorithm, the plug-in approach learns a model by taking $\tilde{O}(d^2H^3/\epsilon^2)$ episodes with the environment and \emph{any} $\epsilon$-optimal planner on the model gives an $O(\epsilon)$-optimal policy on the original model. This sample complexity matches our lower bound for non-plug-in approaches and is \emph{statistically optimal}. We achieve this result by leveraging a careful maximum total-variance bound using Bernstein inequality and properties specified to linear mixture MDPs.
\end{abstract}

\section{Introduction}
\label{sec:introduction}
In reinforcement learning, an agent repeatedly interacts with the unknown environment in order to maximize the cumulative reward. To achieve this goal, an RL algorithm must be equipped with effective exploration mechanisms to learn the unknown environment and
find a near-optimal policy. 
Efficient exploration is critical to the success of reinforcement learning algorithms, which has been widely investigated from both the empirical and the theoretical perspectives (e.g.~\cite{stadie2015incentivizing,pathak2017curiosity,azar2017minimax,jin2018q}).
Model-based RL is one of the important approaches to solve for the RL environment. In model-based RL, the agent learns the model of the environment and then performs planning in the estimated model. It  has been widely applied in many RL scenarios, including both online setting~\citep{kaiser2019model,luo2019algorithmic,azar2017minimax} and offline setting~\citep{yu2020mopo,kidambi2020morel}. It is also believed that model-based RL is significantly more sample-efficient than model-free RL, which has been justified by many recent empirical results (e.g.~\cite{kaiser2019model,wang2019benchmarking}). Though the theoretical model-based learning in small scale problems has been studied extensively \citep{azar2017minimax, zhou2020nearly, jin2020reward}, it is still far from complete, especially with the presence of a function approximator. 

As an important implication of model-based approaches, the power of \textit{plug-in} approach have been studied in several works \citep{cui2020plug, agarwal2020model1}. The idea of plug-in approach is rather simple: We construct an empirical Markov Decision Process (MDP) using maximum likelihood estimate, then return the (approximate) optimal policy with \textit{efficient planning algorithm} in this empirical model. The significance of plug-in approaches is two-folded. For one thing, it preserves an empirical model that keeps the value of the policies, which is of independent interests. For another, the empirical model can be used for any down-stream tasks, which makes the application much more flexible. It is shown that the plug-in approach achieves the minimax sample complexity to compute the $\epsilon$-optimal policies with a generative model in the tabular \citep{ agarwal2020model1} and linear settings \citep{cui2020plug}. 

In this paper, we aim to understand the power of plug-in approach in the reward-free exploration with linear function approximation. We study the linear mixture MDPs, where the transition probability kernel is a linear mixture of a number of basis kernels~\citep{ayoub2020model,zhou2020nearly,zhou2020provably}. 
 We first build an empirical model with an estimation of the transition dynamics in the exploration phase, and then find a near-optimal policy by planning with the empirical model via arbitrary plug-in solver in the planning phase. 
Our setting is different from the reward-free exploration with linear function approximation without plug-in model~\citep{wang2020reward,zanette2020provably}, in which the agent can directly observe all history samples and design specialized model-free algorithm in the planning phase. 


Our results show that the plug-in approach can achieve near-optimal sample complexity in the reward-free setting. In particular, we proposed a statistically efficient algorithm for reward-free exploration. Our algorithm samples $\tilde{O}(d^2H^4/\epsilon^2)$ trajectories during the exploration phase, which suffices to obtain $O(\epsilon)$-optimal policies for an arbitrary reward function with an $\epsilon$-optimal pluging solver in the planning phase. Here $d$ is the feature dimension, and $H$ is the planning horizon. Furthermore, with a more refined trajectory-wise uncertainty estimation, we further improve the sample complexity bound to $\tilde{O}\left(d^2H^3/\epsilon^2\right)$ in the regime where $d > H$ and $\epsilon \leq H/\sqrt{d}$. This matches our lower bound $\Omega(d^2H^3/\epsilon^2)$ for reward-free exploration in linear mixture MDPs, which indicates that our upper bound is near-optimal except for logarithmic factors. To the best of our knowledge, this is the first work that obtains minimax sample complexity bounds for the plug-in approach in reward-free exploration with linear function approximation. 

\section{Related Work}
\label{sec: related work}
\paragraph{RL with Linear Function Approximation}
Reinforcement learning with linear function approximation has been widely studied in the recent few years (e.g.~\cite{jiang2017contextual,yang2019sample,yang2020reinforcement,jin2020provably,modi2020sample,du2019good,zanette2020learning,cai2020provably,ayoub2020model,weisz2021exponential,zhou2020nearly,zhou2020provably}). The linear mixture MDPs model studied in our work assumes the transition probability function is parameterized as a linear function of a given feature mapping over state-action-next-state triple~\citep{ayoub2020model,zhou2020provably,zhou2020nearly}. Based on the Bernstein inequality for vector-valued martingales, \cite{zhou2020nearly} proposed an efficient algorithm that obtains minimax regret in the regime where $d > H$. Besides linear mixture MDPs, linear MDPs is another category of RL with linear function approximation, which assumes both the transition probability function and reward function are parameterized as a linear function of a given feature mapping over state-action pairs. The algorithms with best regret bounds were proposed by \cite{jin2020provably} and \cite{yang2020reinforcement}, which studied model-free algorithm and model-based algorithm respectively. The minimax regret bound for linear MDPs is still unclear.

\paragraph{Reward-Free Reinforcement Learning}
In contrast to the standard RL setting, reward-free reinforcement learning  separates the exploration problem and the planning problem, which allows one to handle them in a theoretically principled way. For tabular setting, reward-free reinforcement learning has been well-exploited in many previous results~\citep{jin2020reward,kaufmann2021adaptive,menard2020fast,zhang2020task,zhang2021near,wu2021gap,bai2020provable,liu2021sharp}, where the minimax rate is obtained by \cite{menard2020fast}. For reward-free exploration with linear function approximation, \cite{wang2020reward} proposed the first efficient algorithm that obtains $O(d^3H^6/\epsilon^2)$ sample complexity for linear MDPs. However, their algorithm is model-free in nature and cannot guarantee good performance with any plug-in solver. 
Further, \cite{qiu2021reward} proposed the first provably efficient reward-free algorithm with kernel and neural function approximation.

We also noticed that there is a concurrent work which also studied reward-free exploration for linear mixture MDPs~\citep{zhang2021reward}. Compared with their results, we focus on the setting of the reward-free exploration with plug-in solver, which covers the standard reward-free setting  studied in the previous results. Furthermore, our sample complexity bounds are tighter than theirs by a factor of $H^2$ \footnote{When transformed to the time-homogeneous MDPs setting studied in~\cite{zhang2021reward}, our algorithms can achieve sample complexity bounds $\tilde{O}(d^2H^3/\epsilon^2)$ and $\tilde{O}((d^2H^2+dH^3)/\epsilon^2)$, respectively.}. The above two differences introduce new challenges in both the algorithmic design and the complexity analysis in this work, which makes our algorithms much more complicated than theirs. Besides, our lower bound is tighter than theirs in the dependence on $d$.

\paragraph{Plug-in Approach} The plug-in approach has been studied in tabular/linear case in restrictive settings.
E.g., \citet{agarwal2020model1} and \citet{cui2020plug}  studied the standard plug-in approach with a generative model, where the algorithm is allowed to query the outcome of any state action pair from an oracle. They showed the plug-in approach also achieved the minimax optimal sample complexity to find an $\epsilon$-optimal policy in both tabular MDPs and linear MDPs. The reward-free algorithms proposed by \cite{jin2020reward} are model-based in nature, thus can be regarded as a solution in the plug-in solver setting. 
However, their algorithms are restricted to the tabular case and cannot be applied to the setting with linear function approximation.
\section{Preliminaries}
\subsection{Episodic MDPs}
We consider the setting of episodic Markov decision processes (MDPs), which can be denoted by a six-tuple $(\mathcal{S},\mathcal{A},P,R,H,\nu)$, where $\mathcal{S}$ is the set of states, $\mathcal{A}$ is the action set, $P$ is the transition probability matrix so that $P_h(\cdot|s,a)$ gives the distribution over states if action $a$ is taken on state $s$ at step $h$, $R_h(s,a)$ is the deterministic reward function of taking action $a$ on state $s$ with support $[0,1]$ in step $h$, $H$ is the number of steps in each episode, and $\nu$ is the distribution of the initial state.

In episode $k$, the agent starts from an initial state $s_{k,1}$ sampled from the distribution $\nu$. At each step $h \in [H]$, the agent observes the current state $s_{k,h} \in \mathcal{S}$, takes action $a_{k,h} \in \mathcal{A}$, receives reward $R_h(s_{k,h},a_{k,h})$, and transits to state $s_{k,h+1}$ with probability $P_h(s_{k,h+1}|s_{k,h},a_{k,h})$. The episode ends when $s_{H+1}$ is reached.

A deterministic policy $\pi$ is a collection of $H$ policy functions $\{\pi_h: \mathcal{S} \rightarrow \mathcal{A}\}_{h \in [H]}$. We use $\Pi$ to denote the set of all deterministic policies. For a specific reward function $R$, we use $V_h^{\pi}: \mathcal{S} \times R \rightarrow \mathbb{R}$ to denote the value function at step $h$ under policy $\pi$ w.r.t. reward $R$, which gives the expected sum of the remaining rewards received under policy $\pi$ starting from $s_h = s$, i.e. $V_{h}^{\pi}(s,R)=\mathbb{E}\left[\sum_{h^{\prime}=h}^{H} R\left(s_{h^{\prime}}, \pi_{h^{\prime}}\left(s_{h^{\prime}}\right)\right) \mid s_{h}=s, P\right].$
Accordingly, we define $Q_h^{\pi}(s,a,R)$ as the expected Q-value function at step $h$: $Q^{\pi}_h(s,a,R) = \mathbb{E}\left[R\left(s_{h}, a_{h}\right)+\sum_{h^{\prime}=h+1}^{H} R\left(s_{h^{\prime}}, \pi_{h^{\prime}}\left(s_{h^{\prime}}\right)\right) \mid s_{h}=s, a_{h}=a, P\right].$

We use $\pi^*_R$ to denote the optimal policy w.r.t. reward R, and we use $V^*_h(\cdot,R)$ and $Q^*_h(\cdot,\cdot,R)$ to denote the optimal value and Q-function under optimal policy $\pi^*_R$ at step $h$. We say a policy $\pi$ is $\epsilon$-optimal w.r.t. reward $R$ if
$\mathbb{E}\left[\sum_{h=1}^{H} R_h\left(s_{h}, a_{h}\right) \mid \pi\right] \geq \mathbb{E}\left[\sum_{h=1}^{H} R_h\left(s_{h}, a_{h}\right) \mid \pi_{R}^{*}\right]-\epsilon.$

For the convenience of explanation, we assume the agent always starts from the same state $s_1$ in each episode. It is straightforward to extend to the case with stochastic initialization, by adding a initial state $s_0$ with no rewards and only one action $a_0$, and the transition probability of $(s_0,a_0)$ is the initial distribution $\mu$. We use $P_h V(s,a,R)$ as a shorthand of $\sum_{s'} P_h(s'|s,a) V(s',R)$.

\subsection{Linear Mixture MDPs}
We study a special class of MDPs called linear mixture MDPs, where the transition probability kernel is a linear mixture of a number of basis kernels~\citep{ayoub2020model,zhou2020provably,zhou2020nearly}. This model is defined as follows in the previous literature.



\begin{definition}
\citep{ayoub2020model} Let $\phi(s,a,s'): \mathcal{S}\times\mathcal{A} \times \mathcal{S} \rightarrow \mathbb{R}^d$ be a feature mapping satisfying that for any bounded function $V: \mathcal{S} \rightarrow [0,1]$ and any tuple $(s,a) \in \mathcal{S} \times \mathcal{A}$, we have $\left\|\phi_{V}(s, a)\right\|_{2} \leq 1, \text { where } \phi_{V}(s, a)=\sum_{s^{\prime} \in \mathcal{S}} \phi\left( s, a,s^{\prime}\right) V\left(s^{\prime}\right).$ An MDP is called a linear mixture MDP if there exists a parameter vector $\theta_h \in \mathbb{R}^d$ with $\|\theta_h\|_2 \leq B$ for a constant $B$ and a feature vector $\phi(\cdot,\cdot,\cdot)$, such that $P_h(s'|s,a) = \theta_h^{\top} \phi(s,a,s')$ for any state-action-next-state triplet $(s,a,s') \in \mathcal{S} \times \mathcal{A} \times \mathcal{S}$ and step $h \in [H]$.
\end{definition}

\subsection{Reward-Free Reinforcement Learning}
We study the problem of reward-free exploration with plug-in solver. Our setting is different from the reward-free exploration setting studied in the previous literature. Formally, there are two phases in this setting: exploration phase and planning phase.

During the exploration phase,  the agent interacts with the environment for $K$ episodes. In episode $k$, the agent chooses a policy $\pi_k$ which induces a trajectory. The agent observes the states and actions $s_{k,1},a_{k,1}, \cdots, s_{k,H},a_{k,H}$ as usual, but does not observe any rewards. After $K$ episodes, the agent calculates the estimated model $\{\tilde{P}_h = \tilde{\theta}_h^{\top} \phi\}_{h \in [H]}$, which will be used in the planning phase to calculate the optimal policy.

During the planning phase, the agent is no longer allowed to interact with the MDP. Also, it cannot directly observe the history samples obtained in the exploration phase. Instead, the agent is given a set of reward function $\{R_h\}_{h \in [H]}$, where $R_h: \mathcal{S} \times \mathcal{A} \rightarrow [0,1]$ is the deterministic reward in step $h$. For notation convenience, we occasionally use $R$ as a shorthand of $\{R_h\}_{h \in [H]}$ during the analysis. We define $\hat{V}_{h}^{\pi,\tilde{P}}(s,R)$ as the value function of transition $\tilde{P}$ and reward $R$, i.e. $\hat{V}_{h}^{\pi,\tilde{P}}(s,R)=\mathbb{E}\left[\sum_{h^{\prime}=h}^{H} R_h\left(s_{h^{\prime}}, \pi_{h^{\prime}}\left(s_{h^{\prime}}\right)\right) \mid s_{h}=s, \tilde{P}\right].$

In the planning phase, the agent calculates the optimal policy $\hat{\pi}_R$ with respect to the reward function $R$ in the estimated model $\tilde{P}$ using any $\epsilon_{\rm opt}$-optimal model-based solver. That is, the returned policy $\hat{\pi}_R$ satisfies:
$\hat{V}^{*,\tilde{P}}_{1}(s_1,R) - \hat{V}_1^{\hat{\pi}_R,\tilde{P}}(s_1,R) \leq \epsilon_{\rm opt}.$

The agent's goal is to output an accurate model estimation $\tilde{P}$ after the exploration phase, so that the policy $\hat{\pi}_R$ calculated in planning phase can be $\epsilon+\epsilon_{\rm opt}$-optimal w.r.t. any reward function $R$. 
Compared with the reward-free setting studied in the previous literature~\citep{jin2020reward,wang2020reward,menard2020fast,kaufmann2021adaptive}, the main difference is that we require that the algorithm maintains a model estimation instead of all the history samples after the exploration phase, and can use any model-based solver to calculate the near-optimal policy in the planning phase.
\section{Reward-Free RL with Plug-in Solver}
\label{sec: reward-free algorithm}




\subsection{Algorithm}
\label{subsec: Hoeffding algorithm}

\begin{algorithm}
\caption{Reward-free Exploration: Exploration Phase}
\label{alg: exploration phase}
  \begin{algorithmic}[5]
  \State Input: Failure probability $\delta>0$ and target accuracy $\epsilon > 0$
  \State $\lambda \leftarrow B^{-2}$, $\beta \leftarrow H\sqrt{d\log(4H^3K\lambda^{-1}\delta^{-1})} + \sqrt{\lambda}B$, $\mathcal{V} \leftarrow \{V: \mathcal{S} \rightarrow [0,H]\}$
    \For { episode $k = 1,2,\cdots, K$}
        \State ${Q}_{k,H+1}(\cdot,\cdot) = 0$, ${V}_{k,H+1}(\cdot) = 0$
        \For{step $h=H,H-1,\cdots, 1$}
            \State ${\Lambda}_{k,h} \leftarrow \sum_{t=1}^{k-1}  {\phi}_{t,h}(s_{t,h},a_{t,h}) {\phi}_{t,h}(s_{t,h},a_{t,h})^{\top} + \lambda I$
            \State $\hat{ {\theta}}_{k,h} \leftarrow \left( {\Lambda}_{k,h}\right)^{-1} \sum_{t=1}^{k-1}  {\phi}_{t,h}(s_{t,h},a_{t,h}) \tilde{V}_{t,h+1,s,a}(s_{t,h+1}) $
            \State $\tilde{V}_{k,h+1,s,a} \leftarrow \argmax_{V\in \mathcal{V}}\left\|\sum_{s'} {\phi}(s,a,s')V(s')\right\|_{({ {\Lambda}}_{k,h})^{-1}}, \forall s,a$
            \State $ {\phi}_{k,h}(s,a) \leftarrow \sum_{s'} {\phi}(s,a,s')\tilde{V}_{k,h+1,s,a}(s'), \forall s,a$
            \State $u_{k,h}(s, a) \leftarrow \beta  \sqrt{ {\phi}_{k,h}(s, a)^{\top}\left( {\Lambda}_{k,h}\right)^{-1}  {\phi}_{k,h}(s, a)}, \forall s,a$
            \State Define the exploration-driven reward function $R_{k,h}(s,a) = u_{k,h}(s,a), \forall s,a$
            \State $Q_{k,h}(s, a) \leftarrow \min \left\{\hat{ {\theta}}_{k,h}^{\top} \left(\sum_{s'} {\phi}(s, a,s')V_{k,h+1}(s')\right)+R_{k,h}(s, a)+u_{k,h}(s, a), H\right\}$ 
            \State $V_{k,h}(s)\leftarrow \max_{a\in \mathcal{A}} Q_{k,h}(s,a)$, $\pi_{k,h}(s) = \argmax_{a \in \mathcal{A}} Q_{k,h}(s,a)$
        \EndFor
        \For{step $h = 1,2,\cdots, H$}
            \State Take action $a_{k,h} = \pi_{k,h}(s_{k,h})$ and observe $s_{k,h+1} \sim P_h(s_{k,h},a_{k,h})$
        \EndFor
    \EndFor
    \State Find $\tilde{ {P}}_h$ such that the transition $\tilde{P}_h(\cdot|\cdot,\cdot) = \tilde{\theta}_h^{\top} \phi (\cdot,\cdot,\cdot)$ is well-defined and $\left\|\tilde{ {\theta}}_h-\hat{ {\theta}}_{K,h}\right\|_{ {\Lambda}_{K,h}} \leq \beta $ for $h \in [H]$
    \State Output: $\{\tilde{ {P}}_h\}_{h=1}^{H}$
  \end{algorithmic}
\end{algorithm}

The exploration phase of the algorithm is presented in Algorithm~\ref{alg: exploration phase}. Recall that for a given value function $\tilde{V}_{k,h+1}$, we have ${P}_h\tilde{V}_{k,h+1}(s_{k,h},a_{k,h}) = \theta_h^{\top}\phi_{\tilde{V}_{k,h+1}}(s_{k,h},a_{k,h}) $ for any $k,h$. Therefore, $\tilde{V}_{k,h+1}(s_{k,h+1})$ and $\phi_{\tilde{V}_{k,h+1}}(s_{k,h},a_{k,h})$ can be regarded as the stochastic reward and the linear feature of a linear bandits problem with linear parameter $\theta_h$. We employ the standard least-square regression to learn the underlying parameter $\theta_h$.
In each episode, we first update the estimation $\hat{\theta}_{k}$ based on history samples till episode $k-1$. We define the auxiliary rewards $R_{k,h}$ to guide exploration. We calculate the optimistic Q-function using the parameter estimation $\hat{\theta}_{k}$, and then execute the greedy policy with respect to the updated Q-function to collect new samples. 

The main problem is how to define the exploration-driven reward $R_{k,h}(s,a)$, which measures the uncertainty for state-action pair $(s,a)$ at the current step. In the setting of linear mixture MDPs, the linear feature $\phi_V(s,a)$ is a function of both state-action pair $(s,a)$ and the next-step value function $V$. This is not a big deal in the standard exploration setting~\citep{ayoub2020model,zhou2020nearly}. However, in the reward-free setting, since the reward function is not given beforehand, we need to upper bound the estimation error of value functions for \textit{any} possible rewards. 
To tackle this problem, we use $\max_{V \in \mathcal{V}}\beta \|\phi_V(s,a)\|_{\Lambda^{-1}_{k,h}}=\max_{V \in \mathcal{V}}\beta  \sqrt{ {\phi}_{V}(s, a)^{\top}\left( {\Lambda}_{k,h}\right)^{-1}  {\phi}_{V}(s, a)}$ as a measure of the maximum uncertainty for the state-action pair $(s,a)$, where $\mathcal{V} = \{V: \mathcal{S} \rightarrow [0,H]\}$ is the set of all possible value functions, and $\Lambda_{k,h}$ is the summation of all the history samples $\{s_{t,h},a_{t,h},s_{t,h+1}\}_{t=1}^{k-1}$ with feature $\phi_{t,h}(s_{t,h},a_{t,h}) = \argmax_{\phi_{V}}\beta \|\phi_V(s,a)\|_{\Lambda^{-1}_{t,h}}$.  In each episode, we define $R_{k,h}(s,a) = u_{k,h}(s,a) = \max_{V \in \mathcal{V}}\beta \|\phi_V(s,a)\|_{\Lambda^{-1}_{k,h}}$, where $R_{k,h}(s,a)$ is the exploration-driven reward used to guide exploration, and $u_{k,h}(s,a)$ is the additional bonus term which helps to guarantee that $Q_{k,h}(s,a)$ is an optimistic estimation. 
Finally, the algorithm returns the model estimation $\{\tilde{P}_h\}_h$ that is well defined (i.e. $\sum_{s'} \tilde{P}_h(s'|s,a) = 1, \tilde{P}_h(s'|s,a) \geq 0, \forall s,a,s'$), and satisfy the constraints $\left\|\tilde{ {\theta}}_h-\hat{ {\theta}}_{K,h}\right\|_{ {\Lambda}_{K,h}} \leq \beta $.



\subsection{Implementation Details}
\label{appendix: implementation details}
Algorithm~\ref{alg: exploration phase} involves two optimization problems in line 8 and line 19. These problems can be formulated as the standard convex optimization problem with a slight modification.
Specifically, the optimization problem in line 8 of Algorithm~\ref{alg: exploration phase} can be formulated in the following way:
\begin{align}
    \max_{V}\left\|\sum_{s'} {\phi}(s,a,s')V(s')\right\|_{({ {\Lambda}}_{k,h})^{-1}} \quad s.t. \quad 0 \leq V(s) \leq H, \forall s \in \mathcal{S} 
\end{align}

In general, solving this optimization problem is hard. For the case of finite state space ($S \leq \infty$), a recent work of \cite{zhang2021reward} relaxed the problem to the following linear programming problem:
\begin{align}
    \max_{\mathbf{f}}\left\|\Sigma_{1, k}^{-1 / 2} \boldsymbol{\Phi}(s, a) \mathbf{f}\right\|_1 \quad s.t. \quad \|\mathbf{f}\|_{\infty} \leq H,
\end{align}
where $\boldsymbol{\Phi}(s, a)=\left(\boldsymbol{\phi}\left(s, a, S_{1}\right), \cdots, \boldsymbol{\phi}\left(s, a, S_{|\mathcal{S}|}\right)\right)$ and $\mathbf{f}=\left(f\left(S_{1}\right), \cdots, f\left(S_{|\mathcal{S}|}\right)\right)^{\top}$. As discussed in \cite{zhang2021reward}, the sample complexity will be worse by a factor of $d$ if we solve the linear programming problem as an approximation. For the case where the state apace is infinite, we can use state aggregation methods~\citep{ren2002state,singh1995reinforcement} to reduce the infinite state space to finite state space and apply the approximation approaches to solve it.

The optimization problem in line 19 of Algorithm~\ref{alg: exploration phase} is to find parameter $\tilde{\theta}_h$ satisfying several constraints. For the case where the state space is finite, we can solve this problem in the following way:
\begin{align*}
    \min_{\tilde{\theta}_h} \left\|\tilde{ {\theta}}_h-\hat{ {\theta}}_{K,h}\right\|^2_{ {\Lambda}_{K,h}}  \quad
    s.t. \quad \sum_{s'} \tilde{\theta}^{\top} \phi(s,a,s') = 1,
    \tilde{\theta}^{\top} \phi(s,a,s') \geq 0, \forall s,a \in \mathcal{S} \times \mathcal{A}
\end{align*}

The above problem can be regarded as a quadratic programming problem and can be solved efficiently by the standard optimization methods. By Lemma~\ref{lemma: confidence set for theta}, we know that the true parameter ${\theta}_h$ satisfies $\|\hat{ {\theta}}_{K,h} -  {\theta}_h \|_{ {\Lambda}_{K,h}} \leq \beta$ with high probability. Therefore, the solution $\tilde{\theta}_{h}$ satisfies the constraint $\left\|\tilde{ {\theta}}_h-\hat{ {\theta}}_{K,h}\right\|_{ {\Lambda}_{K,h}} \leq \beta$ with high probability.

For the case where the state space is infinite, we can also solve the above problem using state aggregation methods. In particular, if the linear mixture MDP model can be regarded as a linear combination of several base MDP models (i.e. $\phi(s,a,s') = \left(P_1(s'|s,a), P_2(s'|s,a), \cdots, P_d(s'|s,a)\right)^{\top}$ where $P_i(s'|s,a)$ is the transition probability of certain MDP model), then we can formulate the optimization problem in the following way:
\begin{align*}
    \min_{\tilde{\theta}_h} &\left\|\tilde{ {\theta}}_h-\hat{ {\theta}}_{K,h}\right\|^2_{ {\Lambda}_{K,h}}  \quad
    s.t. \quad \tilde{\theta}^{\top} \boldsymbol{1} = 1, \tilde{\theta} \succeq 0,
\end{align*}
which can also be solved efficiently in the case of infinite state space.

\subsection{Regret}
\label{subsec: Hoeffding analysis}

\begin{theorem}
\label{theorem: main}
With probability at least $1-\delta$, after collecting $K=\tilde{O}\left(\frac{d^2 H^4}{\epsilon^2}\right)$ trajectories , Algorithm~\ref{alg: exploration phase} returns a transition model $\tilde{P}$, then for any given reward in the planning phase, a policy returned by any $\epsilon_{\rm opt}$-optimal plug-in solver on $(S, A, \tilde{P}, R,H,\nu)$ is $O(\epsilon + \epsilon_{\rm opt})$-optimal for the true MDP, $(S, A, P, R,H,\nu)$.
\end{theorem}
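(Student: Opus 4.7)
The plan is to reduce the suboptimality of the returned plug-in policy to a bound on $\max_\pi V_1^{\pi,P}(s_1;u_K)$---the worst-case expected cumulative exploration bonus in the true MDP---and then to control this quantity through optimism and a regret analysis of the exploration phase. The first step is to establish a confidence set for the transition parameters. I would apply the self-normalised vector-valued martingale concentration of Abbasi-Yadkori et al.\ to the value-targeted least-squares estimate $\hat\theta_{K,h}$: since the regression target $\tilde V_{t,h+1,s_{t,h},a_{t,h}}(s_{t,h+1})$ is measurable with respect to the history before step $h+1$ of episode $t$, has conditional mean $\theta_h^\top\phi_{t,h}(s_{t,h},a_{t,h})$, and takes values in $[0,H]$, a union bound over $h\in[H]$ gives $\|\hat\theta_{K,h}-\theta_h\|_{\Lambda_{K,h}}\leq\beta$ simultaneously for every $h$ with probability at least $1-\delta/2$. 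Since the algorithm enforces $\|\tilde\theta_h-\hat\theta_{K,h}\|_{\Lambda_{K,h}}\leq\beta$ (which is feasible because the true $\theta_h$ itself satisfies the constraints on this event), the triangle inequality yields $\|\tilde\theta_h-\theta_h\|_{\Lambda_{K,h}}\leq 2\beta$.

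Next, for any reward $R$ and any policy $\pi$, the simulation lemma applied to the pair $(P,\tilde P)$ gives
\begin{align*}
V_1^{\pi,P}(s_1;R)-\hat V_1^{\pi,\tilde P}(s_1;R)=\sum_{h=1}^H \mathbb{E}_{\pi,P}\!\left[(\theta_h-\tilde\theta_h)^\top \phi_{V_{h+1}^{\pi,\tilde P}(\cdot;R)}(s_h,a_h)\right],
\end{align*}
with the expectation under the trajectory distribution of $\pi$ in the true MDP. Cauchy-Schwarz in the $\Lambda_{K,h}^{-1}$ norm combined with the confidence bound upper-bounds each summand by $2\beta\|\phi_{V_{h+1}^{\pi,\tilde P}}\|_{\Lambda_{K,h}^{-1}}$, which is at most $2u_{K,h}(s_h,a_h)$ because $V_{h+1}^{\pi,\tilde P}(\cdot;R)\in[0,H]$ and $u_{K,h}$ is defined as the maximum over $V\in[0,H]$ of $\beta\|\phi_V\|_{\Lambda_{K,h}^{-1}}$. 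This produces the key pointwise bound $|V_1^{\pi,P}(s_1;R)-\hat V_1^{\pi,\tilde P}(s_1;R)|\leq 2V_1^{\pi,P}(s_1;u_K)$. Decomposing the final suboptimality as $(\pi_R^{*}\text{ in }P\text{ vs.\ }\tilde P)+(\pi_R^{*}\text{ vs.\ }\hat\pi_R\text{ in }\tilde P)+(\hat\pi_R\text{ in }\tilde P\text{ vs.\ }P)$, applying the preceding bound to the outer two terms and the $\epsilon_{\text{opt}}$-optimality of the plug-in solver to the middle term, yields
\begin{align*}
V_1^{*,P}(s_1;R)-V_1^{\hat\pi_R,P}(s_1;R)\leq 4\max_\pi V_1^{\pi,P}(s_1;u_K)+\epsilon_{\text{opt}},
\end{align*}
uniformly in $R$.

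Finally, I would control $\max_\pi V_1^{\pi,P}(s_1;u_K)$ by optimism together with a regret bound on the exploration phase. Since $\Lambda_{k,h}^{-1}$ is monotone non-increasing in $k$, so is $u_{k,h}$, whence $\max_\pi V^{\pi,P}(u_K)\leq\max_\pi V^{\pi,P}(u_k)$ for every $k\leq K$. A truncated-optimism argument for the value iteration with reward $R_k:=u_k$ and $H$-clipping gives $\min\{\max_\pi V_1^{\pi,P}(s_1;u_k),H\}\leq V_{k,1}(s_1)$, so averaging yields $\min\{\max_\pi V_1^{\pi,P}(s_1;u_K),H\}\leq \tfrac{1}{K}\sum_{k=1}^K V_{k,1}(s_1)$. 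To bound the right-hand side I would run a UCBVI-style argument: unroll $V_{k,1}(s_1)$ along the sampled trajectory, replace transition-model errors by $\beta\|\phi_{k,h}(s_{k,h},a_{k,h})\|_{\Lambda_{k,h}^{-1}}$ via the confidence set, handle the resulting martingale with Azuma-Hoeffding, and combine Cauchy-Schwarz with the elliptic potential lemma to obtain $\sum_{k=1}^K V_{k,1}(s_1)=\tilde O(dH^{2}\sqrt{K})$. Setting the per-episode average equal to $\Theta(\epsilon)$ (which is below $H$ in the target regime, rendering the truncation inactive) then gives $K=\tilde O(d^2H^4/\epsilon^2)$. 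I expect the main obstacle to be exactly this regret bound: because $\phi_{k,h}$ corresponds to a $[0,H]$-valued worst-case value maximiser and hence has $\|\phi_{k,h}\|_2\leq H$ rather than $1$, a naive application of the elliptic potential lemma loses an extra $\sqrt H$ factor and delivers only $\tilde O(dH^{5/2}\sqrt K)$. Recovering the target $H^{4}$-rate requires a careful rescaling of $(\phi_{k,h},\Lambda_{k,h})$ by $(H^{-1},H^{-2})$ when invoking the determinant-trace bound, together with the clipping $V_{k,h}\leq H$ inside the value iteration.
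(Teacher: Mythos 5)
Your proposal follows essentially the same route as the paper's proof: the same three-term decomposition of the suboptimality into two simulation-lemma errors plus the $\epsilon_{\rm opt}$ planning error, the same reduction to the worst-case clipped value of the exploration bonus $u_K$, the same optimism-plus-monotonicity argument $\tilde V_1^*(s_1,R_K)\leq \frac{1}{K}\sum_k V_{k,1}(s_1)$, and the same Azuma-plus-elliptic-potential regret bound $\tilde O(dH^2\sqrt K)$ yielding $K=\tilde O(d^2H^4/\epsilon^2)$. Your worry about the $\|\phi_{k,h}\|_2\leq H$ scaling is resolved exactly as you suggest — the paper applies the elliptic potential lemma to $\min\{1,\|\phi_{k,h}\|_{\Lambda_{k,h}^{-1}}\}$ with the determinant--trace bound, so $H$ enters only logarithmically — and the clipping at $H$ you invoke for the optimism step is also how the paper keeps the simulation-lemma error controlled by a bounded (clipped) value function.
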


We also propose a lower bound for reward-free exploration in Appendix~\ref{appendix: lower bound}. Our lower bound indicates that $\Omega(d^2H^3/\epsilon^2)$ episodes are necessary to find an $\epsilon$-optimal policy with constant probability. 
This lower bound is achieved by connecting the sample complexity lower bound with the regret lower bound of certain constructed learning algorithms in the standard online exploration setting. Compared with this bound, our result matches the lower bound w.r.t. the dimension $d$ and the precision $\epsilon$ except for logarithmic factors.

There is also a recent paper of~\cite{wang2020reward} studying reward-free exploration in the setting of linear MDPs. Their sample complexity is $\tilde{O}(d^3H^6/\epsilon^2)$, thus our bound is better than theirs by a factor of $H^2$. 
Though the setting is different, we find that our parameter choice and the more refined analysis is applicable to their setting, which can help to further improve their bound by a factor of $H^2$. Please see Appendix~\ref{appendix: sample complexity, linear MDP} for the detailed discussion. 

\section{Improving the dependence on $H$}
\label{sec: Berstein algorithm}

In this section, we close the gap on $H$ with a maximum total-variance bound using Bernstein inequality.
In the previous results studying regert minimization in online RL setting~\citep{azar2017minimax, zhou2020nearly,wu2021nearly}, one commonly-used approach to obtain the minimax rate is to upper bound the regret using the total variance of the value function by the Bernstein's concentration inequalities, and finally bound the summation of the one-step transition variance by the law of total variance~\citep{lattimore2012pac,azar2013minimax,azar2017minimax}. However, the situation becomes much more complicated in the reward-free setting with linear function approximation. 
Recall that $\tilde{V}_{k,h+1,s,a}(s')$ defined in Line 9 of Algorithm~\ref{alg: exploration phase} is the next-step value function that maximizes the uncertainty $\|\phi_V(s,a)\|_{\Lambda^{-1}_{k,h}}$ for state-action pair $(s,a)$. One naive approach is to still use $\max_{V}\|\phi_V(s,a)\|_{\Lambda^{-1}_{k,h}}$ as the uncertainty measure for $(s,a)$ and upper bound the error rate by the summation of the one-step transition variance of $\tilde{V}_{k,h+1,s_{k,h},a_{k,h}}(s')$. However, we can not upper bound the variance summation of $\tilde{V}_{k,h+1,s_{k,h},a_{k,h}}(s')$ in $H$ steps by $O(H^2)$ similarly by the law of total variance, since $\{\tilde{V}_{k,h,s_{k,h},a_{k,h}}\}_{h \in [H]}$ is not the value functions induced from the same policy and transition dynamics. To tackle the above problem, we need to define the exploration-driven reward and the confidence bonus for each state-action pair in a more refined way. Our basic idea is to carefully measure the expected total uncertainty along the whole trajectories w.r.t. each MDPs in the confidence set. We will explain the detail in the following subsections. 

\subsection{Algorithm}

\begin{algorithm}
\caption{Reward-free Exploration: Exploration Phase}
\label{alg: exploration phase, Bernstein}
  \begin{algorithmic}[5]
  \State Input: Failure probability $\delta>0$ and target accuracy $\epsilon > 0$
  \State $\lambda \leftarrow B^{-2}$, $\hat{\beta}\leftarrow 16\sqrt{d\log(1+KH^2/(d\lambda))\log(32K^2H/\delta)} + \sqrt{\lambda} B$
  \State $\check{\beta}\leftarrow16d\sqrt{\log(1+KH^2/(d\lambda))\log(32K^2H/\delta)}+ \sqrt{\lambda} B$  
  \State $\tilde{\beta}\leftarrow16H^2\sqrt{d\log(1+KH^4/(d\lambda))\log(32K^2H/\delta)}+ \sqrt{\lambda} B$
  \State Set $\Lambda_{i,k,h} \leftarrow \lambda{I}$, $\hat{\theta}_{i,k,h} \leftarrow \boldsymbol{0}$ for $k=1, h \in [H], i =1,2,3,4,5$
  \State Set $\mathcal{U}_{1,h}$ to be the set containing all the $\tilde{\theta}_h$ that makes $\tilde{P}_h$ well-defined, $h \in [H]$.
    \For { episode $k = 1,2,\cdots, K$}
        \State Calculate $\pi_k, \tilde{\theta}_k, R_k = \argmax_{\pi,\tilde{\theta}_h \in \mathcal{U}_{k,h}, R}{V}^{\pi,\tilde{P}}_{k,1}(s_1, R)$, where ${V}$ is defined in Eqn~\ref{eqn: definition of tildeV}.
        \For{step $h = 1,2,\cdots, H$}
            \State Take action according to the policy $\pi_{k,h}$ and observe $s_{k,h+1} \sim P_h(\cdot|s_{k,h},a_{k,h})$
        \EndFor
        \For{step $h = 1,2,\cdots, H$}
            \State Update $\{\Lambda_{i,k+1,h}\}_{i=1}^5$ using Eqn~\ref{eqn: definition of Lambda1}, \ref{eqn: definition of Lambda2}, \ref{eqn: definition of Lambda3}, \ref{eqn: definition of Lambda4} and \ref{eqn: definition of Lambda5}
            \State Update the model estimation $\{\hat{\theta}_{i,k+1,h}\}_{i=1}^{5}$ using Eqn~\ref{eqn: definition of theta1}, \ref{eqn: definition of theta2}, \ref{eqn: definition of theta3}, \ref{eqn: definition of theta4} and \ref{eqn: definition of theta5}
            \State Add the constraints (Eqn~\ref{inq: confidence constraint 1,2,3,4}) to the confidence set $\mathcal{U}_{k,h}$, and obtain $\mathcal{U}_{k+1,h}$
        \EndFor
    \EndFor
    \State Output: $\left\{\tilde{P}_{K,h}(\cdot|\cdot,\cdot) = \tilde{\theta}_{K,h}^{\top} \phi (\cdot,\cdot,\cdot)\right\}_{h=1}^{H}$.
  \end{algorithmic}
\end{algorithm}

 Our algorithm is described in Algorithm~\ref{alg: exploration phase, Bernstein}. At a high level, Algorithm~\ref{alg: exploration phase, Bernstein} maintains a high-confidence set  $\mathcal{U}_{k,h}$ for the real parameter $\theta_h$ in each episode $k$, and calculates an optimistic value function $V_{k,1}^{\pi,\tilde{P}}(s_1,R)$ for any reward function $R$ and transition $\tilde{P}_h = \tilde{\theta}^{\top}_h \phi$ with $\tilde{\theta}_h \in \mathcal{U}_{k,h}$. Roughly speaking, the value function $V_{k,1}^{\pi,\tilde{P}}(s_1,R)$ measures the expected uncertainty along the whole trajectory induced by policy $\pi$ in the MDP with transition $\tilde{P}$ and reward $R$. To collect more ``informative'' samples and minimize the worst-case uncertainty over all possible transition dynamics and reward function, we calculate $\pi_k = \argmax_{\pi} \max_{\tilde{\theta}_h \in \mathcal{U}_{k,h}, R} V_{k,1}^{\pi,\tilde{P}}(s_1,R)$, and execute the policy $\pi_k$ to collect more data in episode $k$. To ensure that the model estimation $\tilde{\theta}_{k+1,h}$ is close to the true model ${\theta}_{h}$ w.r.t. features $\phi_{V}(s,a)$ of different value functions $V$, we use the samples collected so far to calculate five model estimation $\{\hat{\theta}_{i,k,h}\}_{i=1}^{5}$ and the corresponding constraints at the end of the episode $k$. Each constraint is an ellipsoid in the parameter space centered at the parameter estimation $\hat{\theta}_{i,k,h}$ with covariance matrix $\Lambda_{i,k,h}$ and radius $\beta_{i}$, i.e.
 \begin{align}
    \label{inq: confidence constraint 1,2,3,4}
     \|\tilde{\theta}_h - \hat{\theta}_{i,k,h}\|_{\Lambda_{i,k,h}} \leq \beta_i.
 \end{align}

 We update $\mathcal{U}_{k+1,h}$ by adding these constraints to the the confidence set $\mathcal{U}_{k,h}$.

The remaining problems are how to define the value function $V_{k,1}^{\pi,\tilde{P}}(s_1,R)$ that represents the expected uncertainty for policy $\pi$, transition $\tilde{P}$ and reward $R$, and how to update the model estimation $\theta_{i,k,h}$ and the confidence set $\mathcal{U}_{k,h}$.

\paragraph{Uncertainty Measure} Instead of using $\max_{V}\|\phi_V(s,a)\|_{\Lambda^{-1}_{k,h}}$ to measure the maximum uncertainty for the state-action pair $(s,a)$ in Algorithm~\ref{alg: exploration phase}, we separately define the uncertainty along the trajectories induced by different policy $\pi$, reward function $R$ and transition dynamics $\tilde{P}_h = \tilde{\theta}_h^{\top} \phi$. Specifically, Recall that $\hat{V}_{h}^{\pi,\tilde{P}}(s,R)$ is the value function of policy $\pi$ in the MDP model with transition $\tilde{P}$ and reward $R$. We define the following exploration-driven reward for transition $\tilde{P}$, reward $R$ and policy $\pi$:

\begin{align}
    \label{eqn: exploration-driven reward, alg2}
    u^{\pi,\tilde{P}}_{1,k,h}(s,a,R) = \hat{\beta} \left\|\sum_{s'}\phi(s,a,s')\hat{V}_{h+1}^{\pi,\tilde{P}}(s',R)\right\|_{\left(\Lambda_{1,k,h}\right)^{-1}}.
\end{align}

Suppose $\tilde{V}^{\pi,\tilde{P}}_{k,H+1}(s,R) = 0, \forall s \in \mathcal{S}$, we define the corresponding value function recursively from step $H+1$ to step $1$. That is,
\begin{align}
\label{eqn:value function with true transition and exploration reward}
    \tilde{V}^{\pi,\tilde{P}}_{k,h}(s,R) = \min&\left\{ u^{\pi,\tilde{P}}_{1,k,h}(s,\pi_h(s),R) + {P}_h\tilde{V}^{\pi,\tilde{P}}_{k,h+1} (s,\pi_h(s),R), H\right\}, \forall s \in \mathcal{S}.
\end{align}

$\tilde{V}^{\pi,\tilde{P}}_{k,h}(s,R)$ can be regarded as the expected uncertainty along the trajectories induced by policy $\pi$ for the value function $\hat{V}_{h}^{\pi,\tilde{P}}(s,R)$. In each episode $k$, we wish to collect samples by executing the policy $\pi$ that maximizes the uncertainty measure $\max_{\tilde{\theta}\in \mathcal{U}_{k,h}, R}\tilde{V}^{\pi,\tilde{P}}_{k,1}(s,R)$. However, the definition of $\tilde{V}^{\pi,\tilde{P}}_{k,1}(s,R)$ explicitly depends on the real transition dynamics $P$, which is unknown to the agent. To solve this problem, we construct an optimistic estimation of $\tilde{V}^{\pi,\tilde{P}}_{k,h}(s,R)$. We define $V_{k,h}^{\pi,\tilde{P}}(s, R)$ from step $H+1$ to step $1$ recursively. Suppose $V_{k,H+1}^{\pi,\tilde{P}}(s, R) = 0$. We calculate 
\begin{align}
    \label{eqn: confidence bonus, alg2}
    &u_{2,k,h}^{\pi,\tilde{P}}(s,a,R)  = \hat{\beta} \left\|\sum_{s'}\phi(s,a,s'){V}_{k,h+1}^{\pi,\tilde{P}}(s',R)\right\|_{\left(\Lambda_{2,k,h}\right)^{-1}}, \\
    \label{eqn: definition of tildeV}
    {V}^{\pi,\tilde{P}}_{k,h}(s, R) &= \min\left\{ u_{1,k,h}^{\pi,\tilde{P}}(s,\pi_h(s), R) + u_{2,k,h}^{\pi,\tilde{P}}(s,\pi_h(s), R)+ \tilde{P}_h {V}^{\pi,\tilde{P}}_{k,h+1}(s,\pi_h(s),R),H\right\},
\end{align}
where $u^{\pi,\tilde{P}}_{2,k,h}$ is the confidence bonus which ensures that the optimism $V_{k,h}^{\pi,\tilde{P}}(s, R) \geq \tilde{V}^{\pi,\tilde{P}}_{k,h}(s,R)$ holds with high probability. 
After calculating $V_{k,h}^{\pi,\tilde{P}}(s, R)$, we take maximization over all $\tilde{\theta}_h \in \mathcal{U}_{k,h}$ and reward function $R$, and calculate $\pi_k = \argmax_{\pi} \max_{\tilde{\theta}_h \in \mathcal{U}_{k,h}, R} V_{k,1}^{\pi,\tilde{P}}(s_1,R)$. We execute $\pi_k$ to collect more samples in episode $k$.

\paragraph{Weighted Ridge Regression and Confidence Set} In Algorithm~\ref{alg: exploration phase, Bernstein}, we maintain five model estimation $\{\hat{\theta}_{i,k,h}\}_{i=1}^5$ and add five constraints to the confidence set $\mathcal{U}_{k,h}$ in each episode. These constraints can be roughly classified into three categories. The first and the third constraints are applied to ensure that $\tilde{P}_{k,h}\hat{V}_{h+1}^{\pi,\tilde{P}}(s,a,R)$ is an accurate estimation of ${P}_{k,h}\hat{V}_{h+1}^{\pi,\tilde{P}}(s,a,R)$ for any reward function $R$ and $\tilde{P}_{k,h} = \tilde{\theta}_{k,h}^{\top} \phi$ satisfying $\tilde{\theta}_{k,h} \in \mathcal{U}_{k,h}$, while the second and the forth constraints are used to guarantee that $\tilde{P}_{k,h}{V}_{h+1}^{\pi,\tilde{P}}(s,a,R)$ is an accurate estimation of ${P}_{h}{V}_{h+1}^{\pi,\tilde{P}}(s,a,R)$ for any reward function $R$ and $\tilde{P}_{k,h}= \tilde{\theta}_{k,h}^{\top} \phi$ satisfying $\tilde{\theta}_{k,h} \in \mathcal{U}_{k,h}$. The last constraint is applied due to technical issue and will be explained in Appendix~\ref{appendix: berstein algorithm details}. In this subsection, we introduce the basic idea behind the construction of the first and the third constraints. The second and the forth constraints follow the same idea but consider the different value function ${V}_{h+1}^{\pi,\tilde{P}}(s,a,R)$. 
The formal definition of the parameters $\{\hat{\theta}_{i,k,h}\}_{i=1}^5$ and the constraints are deferred to Appendix~\ref{appendix: berstein algorithm details}. 
 
For notation convenience, we use $V_{k,h}(s)$ as a shorthand of $\hat{V}_{k,h}^{\pi_k,\tilde{P}_k}(s,R_k)$ in this part. The construction of our confidence sets is inspired by a recent algorithm called UCRL-VTR$^+$ proposed by~\cite{zhou2020nearly}. Recall that we use  $(s_{k,h},a_{k,h})$ to denote the state-action pair that the agent encounters at step $h$ in episode $k$.  
We use the following ridge regression estimator to calculate the corresponding $\hat{\theta}_{1,k,h}$ in episode $k$:
\begin{align}
\label{eqn: ridge regression eqn 1}
    \hat{\theta}_{1,k,h} =& \argmin_{\theta \in \mathbb{R}^d} \lambda \|\theta\|_2^2 + \sum_{t=1}^{k-1} \left[\theta^{\top} \phi_{V_{t,h+1}/\bar{\sigma}_{1,t,h}}(s_{t,h},a_{t,h}) - V_{t,h+1}(s_{t,h+1},R_t)/\bar{\sigma}_{1,t,h}\right]^2,
\end{align}
where $\bar{\sigma}^2_{1,k,h} = {\max\left\{H^2/d, [\bar{\mathbb{V}}_{k,h}V_{k,h+1}](s_{k,h},a_{k,h}) + E_{1,k,h}\right\}}$ is an optimistic estimation of the one-step transition variance: $$\mathbb{V}_h V_{k,h+1}(s_{k,h},a_{k,h}) = \mathbb{E}_{s'\sim P_h(\cdot|s_{k,h},a_{k,h})}\left[\left(V_{k,h+1}(s') - P_hV_{k,h}(s_{k,h},a_{k,h})\right)^2\right].$$ 

In the definition of $\bar{\sigma}^2_{1,k,h}$,  $\bar{\mathbb{V}}_{k,h}V_{k,h+1}(s_{k,h},a_{k,h})$ is an empirical estimation for the variance $\mathbb{V}_h V_{k,h+1}(s_{k,h},a_{k,h})$, and $E_{1,k,h}$ is a bonus term defined in Eqn.~\ref{eqn: definition of E} which ensures that $\bar{\sigma}_{k,h}$ is an optimistic estimation of $\mathbb{V}_h V_{k,h+1}(s_{k,h},a_{k,h})$. One technical issue here is how to estimate the variance $\mathbb{V}_h V_{k,h+1}(s_{k,h},a_{k,h})$. Recall that by definition,
\begin{align}
    \mathbb{V}_hV_{k,h+1}(s_{k,h},a_{k,h}) =& P_hV_{k,h+1}^2(s_{k,h},a_{k,h}) - [P_hV_{k,h+1}(s_{k,h},a_{k,h})]^2 \\
    = &\theta^{\top}_{h} \phi_{V^2_{k,h+1}}(s_{k,h},a_{k,h}) - [\theta^{\top}_{h} \phi_{V_{k,h+1}}(s_{k,h},a_{k,h})]^2.
\end{align}
We use $\tilde{\theta}_{k,h}^{\top} \phi_{V_{k,h+1}^2}(s_{k,h},a_{k,h}) -\left[\tilde{\theta}^{\top}_{k,h} \phi_{V_{k,h+1}}(s_{k,h},a_{k,h})\right]^2$ as our variance estimator $\bar{\mathbb{V}}_{k,h}V_{k,h+1}(s_{k,h},a_{k,h})$, where $\tilde{\theta}_{k,h} \in \mathcal{U}_{k,h}$ is the parameter which maximizes the value $V_{k, 1}^{\pi, \tilde{P}}\left(s_{1}, R\right)$ in episode $k$.
To ensure that $\bar{\mathbb{V}}_{k,h}V_{k,h+1}(s_{k,h},a_{k,h})$ is an accurate estimator, we maintain another parameter estimation $\hat{\theta}_{3,k,h}$ using history samples w.r.t. the feature $\phi_{V_{k,h+1}^2}(s_{k,h},a_{k,h})$. 
\begin{align}
    \hat{\theta}_{3,k,h} = \argmin_{\theta \in \mathbb{R}^d} \lambda \|\theta\|_2^2 + \sum_{t=1}^{k-1} \left[\theta^{\top} \phi_{V^2_{t,h+1}}(s_{t,h},a_{t,h}) - V^2_{t,h+1}(s_{t,h+1})\right]^2.
\end{align}

After calculating $\hat{\theta}_{1,k,h}$ and $\hat{\theta}_{3,k,h}$, we add the first and the third constraints (Eqn.~\ref{inq: confidence constraint 1,2,3,4}) to the confidence set $\mathcal{U}_{k,h}$, where $\Lambda_{1,k,h}$ and $\Lambda_{3,k,h}$ is the corresponding covariance matrix of all history samples, i.e. $\Lambda_{1,k,h} = \sum_{t=1}^{k-1} \bar{\sigma}_{1,t,h}^{-2} \phi_{V_{t,h+1}}(s_{t,h},a_{t,h})\phi^{\top}_{V_{t,h+1}}(s_{t,h},a_{t,h})$ and $\Lambda_{3,k,h} = \sum_{t=1}^{k-1}  \phi_{V^2_{t,h+1}}(s_{t,h},a_{t,h})\phi^{\top}_{V^2_{t,h+1}}(s_{t,h},a_{t,h})$. 



\subsection{Regret}

 We present the regret upper bound of Algorithm~\ref{alg: exploration phase, Bernstein} in Theorem~\ref{theorem: new main}. In the regime where $d \geq H$ and $\epsilon \leq H/\sqrt{d}$, we can obtain $\tilde{O}(d^2H^3/\epsilon^2)$ sample complexity upper bound, which matches the sample complexity lower bound except logarithmic factors. 
\begin{theorem}
\label{theorem: new main}
With probability at least $1-\delta$, after collecting $K=\tilde{O}\left(\frac{d^2H^3+dH^4}{\epsilon^2} + \frac{d^{2.5}H^2+ d^2H^3}{\epsilon}\right)$ trajectories, Algorithm~\ref{alg: exploration phase, Bernstein} returns a transition model $\tilde{P}_K$, then for any given reward in the planning phase, a policy returned by any $\epsilon_{\rm opt}$-optimal plug-in solver on $(S, A, \tilde{P}_K, R,H,\nu)$ is $O(\epsilon + \epsilon_{\rm opt})$-optimal for the true MDP, $(S, A, P, R,H,\nu)$.

\end{theorem}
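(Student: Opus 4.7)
The plan is to show three things: (i) with probability at least $1-\delta$, the true parameters lie in every confidence set $\mathcal{U}_{k,h}$; (ii) for any reward $R$, the simulation error of the output model $\tilde P_K$ on any policy is bounded by the optimistic uncertainty quantity $V_{K,1}^{\pi,\tilde P}(s_1,R)$ that the algorithm optimizes; and (iii) the aggregate of these optimistic quantities, $\sum_{k=1}^K V_{k,1}^{\pi_k,\tilde P_k}(s_1,R_k)$, admits an $\tilde O(\sqrt{K}\cdot\mathrm{poly}(d,H))$ bound via a Bernstein-style elliptical-potential argument combined with a law-of-total-variance step. Once $K$ is chosen large enough that the average uncertainty is $\lesssim\epsilon$, pigeonholing on the last episode and reusing optimism converts this into a per-reward suboptimality guarantee of $O(\epsilon)$, which composes additively with the solver error $\epsilon_{\mathrm{opt}}$.

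First I would verify the validity of the confidence set. Each of the five ridge-regression targets has a martingale-difference noise; for the variance-weighted estimators $\hat\theta_{1},\hat\theta_{2}$ the noise is bounded by $O(1)$ after rescaling by $\bar\sigma$, so the Bernstein self-normalized concentration of Zhou et al. (2020) gives $\|\hat\theta_{i,k,h}-\theta_h\|_{\Lambda_{i,k,h}}\le \hat\beta$; for the unweighted estimators involving $V^2$ or $V$ alone, Hoeffding-type self-normalized bounds give $\check\beta$ or $\tilde\beta$. A union bound over $(i,k,h)$ yields the event $\mathcal{E}$ on which $\theta_h\in\mathcal{U}_{k,h}$ for all $k,h$. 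On $\mathcal{E}$, the variance estimator $\bar{\mathbb V}_{k,h}V_{k,h+1}$ is correct up to $E_{1,k,h}$, so $\bar\sigma_{1,k,h}^2$ is a valid optimistic proxy for the true one-step variance (with the floor $H^2/d$ controlling the regime $d>H$).

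Next I would prove the simulation lemma: for any reward $R$, any policy $\pi$, and any $\tilde\theta_h\in\mathcal{U}_{K,h}$,
\begin{equation*}
\bigl|\hat V_1^{\pi,\tilde P}(s_1,R)-V_1^{\pi}(s_1,R)\bigr|\;\le\; \tilde V^{\pi,\tilde P}_{K,1}(s_1,R)\;\le\; V^{\pi,\tilde P}_{K,1}(s_1,R),
\end{equation*}
by the standard backwards induction using $|(\tilde P_h - P_h)\hat V^{\pi,\tilde P}_{h+1}(s,a,R)|\le \hat\beta\|\phi_{\hat V^{\pi,\tilde P}_{h+1}(\cdot,R)}(s,a)\|_{\Lambda_{1,K,h}^{-1}}=u_{1,K,h}^{\pi,\tilde P}(s,a,R)$ together with the second-order bonus $u_{2,K,h}$ that absorbs the gap between $\hat V$ and $V$. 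Optimism of $V_{k,1}^{\pi_k,\tilde P_k}(s_1,R_k)$ over the worst reward/model pair then gives, at episode $K$, $\max_R[V^{*}_1(s_1,R)-\hat V_1^{\hat\pi_R,\tilde P_K}(s_1,R)]\le V^{\pi_K,\tilde P_K}_{K,1}(s_1,R_K)$, which is what the plug-in guarantee needs.

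The main obstacle is bounding $\sum_{k=1}^K V_{k,1}^{\pi_k,\tilde P_k}(s_1,R_k)$, and this is where the $H$-improvement lives. Unrolling the recursion for $V^{\pi_k,\tilde P_k}_{k,h}$ along the sampled trajectory gives a telescoping bound in terms of $\sum_{k,h}\bigl(u_{1,k,h}+u_{2,k,h}\bigr)(s_{k,h},a_{k,h})$ up to a martingale remainder controlled by Azuma. For the $u_{1}$ sum I would apply Cauchy--Schwarz to split it into $\sqrt{\sum_{k,h}\bar\sigma_{1,k,h}^2}\cdot\sqrt{\sum_{k,h}\|\phi_{V_{k,h+1}}/\bar\sigma_{1,k,h}\|^2_{\Lambda_{1,k,h}^{-1}}}$; the second factor is $\tilde O(\sqrt{dKH})$ by the elliptical potential lemma, while the first factor is where the law of total variance enters. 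Since $\tilde V^{\pi_k,\tilde P_k}_{k,h}$ is \emph{not} the value function of a consistent policy--MDP pair due to the clipping and mixed transitions, one cannot invoke the LTV directly; instead I would decompose $\bar\sigma_{1,k,h}^2\le H^2/d + \mathbb V_h V_{k,h+1}(s_{k,h},a_{k,h}) + E_{1,k,h}$, then apply LTV to the true-transition variance of $V_{k,h+1}$ to get $\sum_h \mathbb V_h V_{k,h+1}\le O(H^2)$ per episode (using that $V_{k,h}\le H$), and control $\sum_{k,h} E_{1,k,h}$ by another elliptical-potential/Cauchy--Schwarz pair. This yields $\sum_{k,h}\bar\sigma_{1,k,h}^2\le O(KH^2 + KH^2)$, so the $u_1$ sum contributes $\tilde O(\sqrt{d^2H^3K})$. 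An analogous, slightly cruder bound on $u_2$ (which already involves $\check\beta=\tilde O(d)$) yields a lower-order term $\tilde O(d^{1.5}H^2\sqrt{K}+d\sqrt{H^4K})$. Setting $\tilde O((d^2H^3K)^{1/2}+\ldots)\le K\epsilon$ and solving for $K$ gives the advertised complexity $\tilde O(d^2H^3/\epsilon^2 + dH^4/\epsilon^2+d^{2.5}H^2/\epsilon+d^2H^3/\epsilon)$, and picking any episode $K$ large enough for this bound yields the $O(\epsilon+\epsilon_{\mathrm{opt}})$ planning guarantee through the simulation lemma.
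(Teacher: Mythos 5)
Your overall architecture coincides with the paper's: high-probability confidence sets, optimism plus a simulation lemma reducing the planning-phase suboptimality to $V_{K,1}^{\pi_K,\tilde P_K}(s_1,R_K)$, a Bernstein/elliptical-potential bound on $\sum_{k=1}^K V_{k,1}^{\pi_k,\tilde P_k}(s_1,R_k)$, and a conversion to the last episode. One minor remark first: ``pigeonholing on the last episode'' is not quite sufficient, since the algorithm outputs $\tilde P_K$ specifically rather than the model from whichever episode happens to have small uncertainty; the paper instead proves that $V_{k,1}^{\pi_k,\tilde P_k}(s_1,R_k)$ is non-increasing in $k$ (constraints are only ever added to $\mathcal{U}_{k,h}$ and the covariance matrices only grow), so the episode-$K$ value is at most the average.

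The substantive gap is in your law-of-total-variance step. You correctly observe that the clipped uncertainty value $\tilde V_{k,h}^{\pi_k,\tilde P_k}$ is not amenable to LTV, but you then assert $\sum_h \mathbb{V}_h V_{k,h+1}\le O(H^2)$ per episode ``using that $V_{k,h}\le H$''. Boundedness alone gives only $\mathbb{V}_h\le H^2$ per step, i.e.\ $H^3$ per episode, which loses exactly the factor of $H$ this section is designed to save. The genuine $O(H^2)$ bound requires the value function to satisfy the Bellman recursion under the same transition law that generates the trajectory, and here $\hat V_{k,h+1}^{\pi_k,\tilde P_k}$ is defined under $\tilde P_k$ while the samples $(s_{k,h},a_{k,h})$ are drawn under the true $P$. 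The paper bridges this mismatch by introducing the auxiliary function $\tilde Y_{k,h}$ (the cumulative $\tilde P_k$-variance, which satisfies $\tilde Y_{k,1}\le H^2$ by LTV \emph{under $\tilde P_k$}) and then controlling $\sum_{k,h}\bar{\mathbb{V}}_{1,k,h}(s_{k,h},a_{k,h})-\sum_k\tilde Y_{k,1}(s_{k,1})$ via a martingale term plus terms of the form $(\tilde P_{k,h}-P_h)\tilde Y_{k,h+1}$; the latter is precisely why the fifth estimator $\hat\theta_{5,k,h}$ and the constraint $\|\tilde\theta_h-\hat\theta_{5,k,h}\|_{\Lambda_{5,k,h}}\le\tilde\beta$ exist. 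Your proposal never accounts for this transfer (and never mentions the fifth confidence constraint), so as written the variance-sum bound, and hence the $d^2H^3$ leading term, does not follow. A similar omission occurs for the $u_2$ component, where the paper cannot apply LTV at all and instead uses $V^2\le H\cdot V$ to obtain a self-bounding recursion $a_h\le G+\frac1H\sum_{h_1\ge h}a_{h_1}$ solved to $a_1\le eG$, rather than the ``analogous, slightly cruder bound'' you gesture at.
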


\section{Conclusion}
\label{sec: conclusion}
This paper studies the sample complexity of plug-in solver approach for reward-free reinforcement learning. We propose a statistically efficient algorithm with sample complexity $\tilde{O}\left(d^2H^4/\epsilon^2\right)$. We further refine the complexity by providing an another algorithm with sample complexity $\tilde{O}\left(d^2H^3/\epsilon^2\right)$ in certain parameter regimes. To the best of our knowledge, this is the first minimax sample complexity bound for reward-free exploration with linear function approximation. 
As a side note, our approaches provide an efficient learning method for the RL model representation, which preserves values and policies for any other down-stream tasks (specified by different rewards). 

Our sample complexity bound matches the lower bound only when $d \geq H$ and $\epsilon \leq H/\sqrt{d}$. It is unclear whether minimax rate could be obtained in a more broader parameter regimes. We plan to address this issue in the future work.

\section{Acknowledgments}
Liwei Wang was supported by National Key R\&D Program of China (2018YFB1402600), Exploratory Research Project of Zhejiang Lab (No. 2022RC0AN02), BJNSF (L172037), Project 2020BD006 supported by
PKUBaidu Fund.

\bibliography{references}
\bibliographystyle{iclr2022_conference}

\appendix
\newpage
\section{Omitted Details in Section~\ref{sec: reward-free algorithm}}
\label{Appendix: Hoeffding algorithm}

\subsection{Notations}
In this subsection, we summarize the notations used in Section~\ref{Appendix: Hoeffding algorithm}.

 \begin{tabular}{ll}

\hline
\textbf{Symbol} & \textbf{Explanation}\\
\hline
$\mathcal{E}_1$ & The high-probability event for Theorem~\ref{theorem: main} \\
$s_{k,h},a_{k,h}$ & The state and action that the agent encounters in episode $k$ and step $h$ \\
$\tilde{V}_{k,h+1,s,a}$ & The value function with maximum uncertainty: $\argmax_{V\in \mathcal{V}}\left\|\sum_{s'} {\phi}(s,a,s')V(s')\right\|_{({ {\Lambda}}_{k,h})^{-1}}$ \\
$\phi_{k,h}(s,a)$ & $\sum_{s'} {\phi}(s,a,s')\tilde{V}_{k,h+1,s,a}(s')$ \\
$\hat{\theta}_{k,h}$ & The estimation of $\theta_{h}$ in episode $k$: $\left( {\Lambda}_{k,h}\right)^{-1} \sum_{t=1}^{k-1}  {\phi}_{t,h}(s_{t,h},a_{t,h}) \tilde{V}_{t,h+1,s,a}(s_{t,h+1})$ \\
${\Lambda}_{k,h}$ & The covariance matrix in $(k,h)$: $\sum_{t=1}^{k-1}  {\phi}_{t,h}(s_{t,h},a_{t,h}) {\phi}_{t,h}(s_{t,h},a_{t,h})^{\top} + \lambda I$\\
$u_{k,h}(s,a)$ & The uncertainty measure: $\beta  \sqrt{ {\phi}_{k,h}(s, a)^{\top}\left( {\Lambda}_{k,h}\right)^{-1}  {\phi}_{k,h}(s, a)}$ \\
$R_{k,h}(s,a)$ & The exploration-driven reward which equals $u_{k,h}(s,a)$ \\
$Q_{k,h}(s,a)$ & The Q function defined in line 12 of Algorithm~\ref{alg: exploration phase} \\
$V_{k,h}(s)$ & The value function defined in line 13 of Algorithm~\ref{alg: exploration phase} \\
$\tilde{V}^*_{h}(s,R)$ & The value function defined in Eqn~\ref{eqn: tildeV for Alg 1}\\
$\tilde{Q}^*_h(s,R)$  & The Q value similarly defined as $\tilde{V}^*_{h}(s,R)$ \\
$\tilde{V}^{\pi}_{h}(s,R)$ & The value function of policy $\pi$ similarly defined as $\tilde{V}^*_{h}(s,R)$\\
$\tilde{\theta}_h$ & The parameter estimation returned at the end of the exploration phase \\
$\hat{P}_{k,h}(s'|s,a)$ & $\hat{\theta}_{k,h}^{\top} \phi(s,a,s')$ \\
$\tilde{P}_h(s'|s,a)$ & $\tilde{\theta}_{h}^{\top} \phi(s,a,s')$ \\
\hline
\end{tabular}   

\subsection{Proof Overview}
Now we briefly explain the main idea in the proof.  Firstly, we introduce the value function $\tilde{V}^*_h(s,R)$, which is recursively defined from step $H+1$ to step $1$: 
\begin{align}
    \label{eqn: tildeV for Alg 1}
    &\tilde{V}^*_{H+1}(s,R) = 0, \forall s \in \mathcal{S} \\
    \tilde{V}^*_{h}(s,R) = & \max_{a \in \mathcal{A}} \left\{\min\left\{R_h(s,a) + P_h\tilde{V}^*_{h+1}(s,a,R),H\right\}\right\}, \forall s \in \mathcal{S}, h \in [H]
\end{align}
Compared with the definition of $V_h^*(s,R)$, the main difference is that we take minimization over the value and $H$ at each step. We state the following lemma, which gives an upper bound on the sub-optimality gap of $\hat{\pi}$ in the planning phase.
\begin{lemma}
\label{lemma: sub-optimality gap, main page}
With probability at least $1-\delta$, the sub-optimality gap of the policy $\hat{\pi}_R$ for any reward function $R$ in the planning phase satisfies $V_1^*(s_1,R) - V_1^{\hat{\pi}_R}(s_1,R)   \leq 4  \tilde{V}_1^*(s_1, R_K) + \epsilon_{\rm opt}$, where $R_K$ is the exploration-driven reward used for episode $K$ in the exploration phase.
\end{lemma}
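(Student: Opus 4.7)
The plan is to decompose the sub-optimality gap into three pieces --- the true-vs-estimated model gap at $\pi^*$, the planning error from the plug-in solver, and the estimated-vs-true model gap at $\hat{\pi}_R$ --- and then to establish by backward induction on $h$ that, for any fixed policy $\pi$, the pointwise error $|V^{\pi,P}_h(s,R) - \hat{V}^{\pi,\tilde{P}}_h(s,R)|$ is dominated by $2\tilde{V}^{\pi}_h(s, R_K)$.

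First, writing
\begin{align*}
V_1^*(s_1, R) - V_1^{\hat{\pi}_R}(s_1, R) &= \left[V_1^{\pi^*, P}(s_1, R) - \hat{V}_1^{\pi^*, \tilde{P}}(s_1, R)\right] \\
&\quad + \left[\hat{V}_1^{\pi^*, \tilde{P}}(s_1, R) - \hat{V}_1^{\hat{\pi}_R, \tilde{P}}(s_1, R)\right] \\
&\quad + \left[\hat{V}_1^{\hat{\pi}_R, \tilde{P}}(s_1, R) - V_1^{\hat{\pi}_R, P}(s_1, R)\right],
\end{align*}
the middle bracket is at most $\epsilon_{\rm opt}$ by the $\epsilon_{\rm opt}$-optimality of the plug-in solver (together with $\hat{V}_1^{\pi^*, \tilde{P}} \le \hat{V}_1^{*, \tilde{P}}$), so once the claim $|V^{\pi,P}_1(s_1,R) - \hat{V}^{\pi,\tilde{P}}_1(s_1,R)| \le 2\tilde{V}^{\pi}_1(s_1,R_K)$ is established for an arbitrary $\pi$, instantiating with $\pi = \pi^*$ and $\pi = \hat{\pi}_R$ and using $\tilde{V}^{\pi}_1 \le \tilde{V}^*_1$ delivers the desired $4\tilde{V}^*_1(s_1, R_K) + \epsilon_{\rm opt}$.

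The inductive claim is trivial at $h = H+1$. For the inductive step, the Bellman equations yield
\[
\left|V_h^{\pi, P}(s, R) - \hat{V}_h^{\pi, \tilde{P}}(s, R)\right| \le \left|(P_h - \tilde{P}_h)\hat{V}_{h+1}^{\pi, \tilde{P}}(s, \pi_h(s), R)\right| + P_h\left|V_{h+1}^{\pi, P} - \hat{V}_{h+1}^{\pi, \tilde{P}}\right|(s, \pi_h(s), R).
\]
The linear-mixture structure rewrites the first term as $\left|(\theta_h - \tilde{\theta}_h)^\top \phi_{\hat{V}_{h+1}^{\pi, \tilde{P}}}(s, \pi_h(s))\right|$; combining the algorithmic constraint $\|\tilde{\theta}_h - \hat{\theta}_{K,h}\|_{\Lambda_{K,h}} \le \beta$ with the high-probability concentration $\|\hat{\theta}_{K,h} - \theta_h\|_{\Lambda_{K,h}} \le \beta$ gives $\|\theta_h - \tilde{\theta}_h\|_{\Lambda_{K,h}} \le 2\beta$ by the triangle inequality, so Cauchy--Schwarz and the definition of $u_{K,h}$ as $\beta$ times the maximum of $\|\phi_V(s,a)\|_{\Lambda_{K,h}^{-1}}$ over $V \in \mathcal{V}$ bound this term by $2u_{K,h}(s, \pi_h(s))$ (noting that $\hat{V}_{h+1}^{\pi,\tilde{P}} \in \mathcal{V}$). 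Applying the inductive hypothesis to the second term,
\[
\left|V_h^{\pi, P} - \hat{V}_h^{\pi, \tilde{P}}\right|(s, R) \le 2u_{K,h}(s, \pi_h(s)) + 2P_h \tilde{V}_{h+1}^{\pi}(s, \pi_h(s), R_K).
\]

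Since $V_h^{\pi,P}, \hat{V}_h^{\pi,\tilde{P}} \in [0,H]$, the left-hand side is also bounded by $H$, hence by $\min\{H,\, 2u_{K,h}(s,\pi_h(s)) + 2P_h\tilde{V}^{\pi}_{h+1}(s,\pi_h(s),R_K)\}$. The elementary inequality $\min\{H, 2x\} \le 2\min\{x, H\}$ applied with $x = u_{K,h}(s,\pi_h(s)) + P_h\tilde{V}^{\pi}_{h+1}(s,\pi_h(s),R_K)$, combined with the recursive definition $\tilde{V}^{\pi}_h(s, R_K) = \min\{u_{K,h}(s,\pi_h(s)) + P_h\tilde{V}^{\pi}_{h+1}(s,\pi_h(s),R_K),\, H\}$ (using $R_{K,h} = u_{K,h}$), closes the induction. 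The main obstacle is precisely this final combination: a naive global telescoping gives $2V^{\pi,P}_1(s_1, R_K)$, which may exceed $H$ because $u_{K,h}$ is not a priori bounded by $1$, so it cannot be identified with any clipped optimal value; propagating the trivial $H$-bound into the induction via $\min\{H, 2x\} \le 2\min\{x, H\}$ is what absorbs this blow-up and is exactly what motivates the step-wise $\min\{\cdot, H\}$ in the definition of $\tilde{V}^{\pi}$.
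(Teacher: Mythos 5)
Your proof is correct and follows essentially the same route as the paper's: the identical three-term decomposition with the middle term bounded by $\epsilon_{\rm opt}$, followed by a clipped simulation-lemma argument bounding $|V_1^{\pi,P}-\hat V_1^{\pi,\tilde P}|$ by $2\tilde V_1^{\pi}(s_1,R_K)\le 2\tilde V_1^*(s_1,R_K)$ via Cauchy--Schwarz, the confidence-set constraint, and the maximality of $u_{K,h}$ over $V\in\mathcal V$. The only difference is presentational: the paper packages the telescoping in the $W_h$ recursion and silently pulls the factor $2$ through the clipping, whereas you make the backward induction explicit and justify that step with $\min\{H,2x\}\le 2\min\{H,x\}$, which is if anything slightly more careful than the paper.
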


This lemma connects the sub-optimality gap with the value function of the auxiliary reward in episode $K$. So the remaining problem is how to upper bound $\tilde{V}_1^*(s_1, R_K)$. Since the exploration-driven reward $R_k$ is non-increasing w.r.t. $k$, it is not hard to prove that $K \tilde{V}_1^*(s_1, R_K) \leq \sum_{k=1}^{K} \tilde{V}_1^{*}(s_1,R_k)$. We use the following two lemmas to upper bound $\sum_{k=1}^{K} \tilde{V}_1^{*}(s_1,R_k)$.

\begin{lemma}
\label{lemma: optimism in exploration phase, main page}
With probability at least $1-\delta$, $\tilde{V}_h^*(s,R_k) \leq V_{k,h}(s)$ holds for any $(s,a) \in \mathcal{S}\times \mathcal{A}, h \in [H]$ and $k \in [K]$. 
\end{lemma}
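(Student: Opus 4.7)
\textbf{Proof proposal for Lemma~\ref{lemma: optimism in exploration phase, main page}.}
The plan is to proceed by backward induction on $h$, using (i) the fact that the one-step Bellman operator is linear in $\theta_h$ for linear mixture MDPs, (ii) a self-normalized concentration bound that controls $\|\hat{\theta}_{k,h}-\theta_h\|_{\Lambda_{k,h}}$ by $\beta$ with high probability, and (iii) the defining maximizing property of $\tilde V_{k,h+1,s,a}$ which makes $u_{k,h}(s,a)$ dominate the one-step error for \emph{any} value function in $\mathcal{V}$. The base case $h=H+1$ is immediate since $\tilde V^{*}_{H+1}(s,R_k)=0=V_{k,H+1}(s)$.

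For the inductive step, assume $\tilde V^{*}_{h+1}(s,R_k)\le V_{k,h+1}(s)$ for every $s$ (in particular $V_{k,h+1}\in\mathcal{V}$, as the algorithm caps it at $H$). Fix $(s,a)$ and write
\[
P_h\tilde V^{*}_{h+1}(s,a,R_k)\;\le\;P_hV_{k,h+1}(s,a)\;=\;\theta_h^{\top}\phi_{V_{k,h+1}}(s,a),
\]
using monotonicity of $P_h$ and the linear mixture structure. Adding and subtracting $\hat\theta_{k,h}$ and applying Cauchy-Schwarz in the $\Lambda_{k,h}$-norm gives
\[
\theta_h^{\top}\phi_{V_{k,h+1}}(s,a)\;\le\;\hat\theta_{k,h}^{\top}\phi_{V_{k,h+1}}(s,a)+\|\theta_h-\hat\theta_{k,h}\|_{\Lambda_{k,h}}\,\|\phi_{V_{k,h+1}}(s,a)\|_{\Lambda_{k,h}^{-1}}.
\]
On the event $\mathcal{E}_1$ that $\|\theta_h-\hat\theta_{k,h}\|_{\Lambda_{k,h}}\le\beta$ for all $(k,h)$, and since $V_{k,h+1}\in\mathcal{V}$, the definition of $\tilde V_{k,h+1,s,a}$ as the maximizer of $\|\phi_V(s,a)\|_{\Lambda_{k,h}^{-1}}$ over $\mathcal{V}$ yields
\[
\beta\,\|\phi_{V_{k,h+1}}(s,a)\|_{\Lambda_{k,h}^{-1}}\;\le\;\beta\,\|\phi_{k,h}(s,a)\|_{\Lambda_{k,h}^{-1}}\;=\;u_{k,h}(s,a).
\]
Adding $R_{k,h}(s,a)$ to both sides, we conclude
\[
R_{k,h}(s,a)+P_h\tilde V^{*}_{h+1}(s,a,R_k)\;\le\;\hat\theta_{k,h}^{\top}\phi_{V_{k,h+1}}(s,a)+R_{k,h}(s,a)+u_{k,h}(s,a).
\]
Clipping the right-hand side at $H$ still dominates the clipped left-hand side (the definition of $\tilde V^{*}_h$ also caps at $H$, so no further slack is needed), and taking $\max_a$ of both sides yields $\tilde V^{*}_h(s,R_k)\le V_{k,h}(s)$, closing the induction.

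The substantive step, and the one I expect to be the main obstacle, is verifying that the uniform event $\mathcal{E}_1=\{\,\|\hat\theta_{k,h}-\theta_h\|_{\Lambda_{k,h}}\le\beta\;\forall(k,h)\,\}$ holds with probability $\ge 1-\delta$. The regression targets $\tilde V_{t,h+1,s_{t,h},a_{t,h}}(s_{t,h+1})$ are data-dependent through $\Lambda_{t,h}^{-1}$, so a direct application of the vector-valued self-normalized martingale bound does not work off the shelf. The standard remedy, which I would invoke here, is to (a) use an $\epsilon$-net over the function class $\mathcal{V}=\{V:\mathcal{S}\to[0,H]\}$ combined with the fact that the optimizer in line~8 of Algorithm~\ref{alg: exploration phase} depends on $\Lambda_{k,h}$ only through a ball of bounded radius in a covering of the relevant ellipsoid parameters, (b) apply the Abbasi-Yadkori self-normalized tail inequality on each net element, and (c) union-bound over the net and over $(k,h)\in[K]\times[H]$; the logarithmic covering number is exactly what sits inside the definition $\beta=H\sqrt{d\log(4H^3K\lambda^{-1}\delta^{-1})}+\sqrt{\lambda}B$. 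This concentration claim is precisely the content of the referenced Lemma~\ref{lemma: confidence set for theta} on confidence sets for $\theta_h$, which I would cite and, on the event it provides, the induction above closes the proof.
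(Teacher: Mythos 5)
Your proof is correct and follows essentially the same route as the paper's: backward induction on $h$, using the induction hypothesis to replace $P_h\tilde V^{*}_{h+1}$ by $\theta_h^{\top}\phi_{V_{k,h+1}}$, Cauchy--Schwarz in the $\Lambda_{k,h}$-norm, the maximizing property of $\tilde V_{k,h+1,s,a}$ to dominate the one-step error by $u_{k,h}(s,a)$, and the confidence-set event of Lemma~\ref{lemma: confidence set for theta} to bound $\|\hat\theta_{k,h}-\theta_h\|_{\Lambda_{k,h}}$ by $\beta$. (Your closing discussion of a covering argument pertains to how Lemma~\ref{lemma: confidence set for theta} itself is established, not to this lemma; the paper likewise invokes that concentration result as a black box here, so your argument matches theirs.)
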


\begin{lemma}
\label{lemma: upper bound of sum V_k, main page}
With probability at least $1-\delta$, $$ \sum_{k=1}^{K} V_{k,1}(s_{1}) \leq 6 H^2d\sqrt{K \log(4H^3KB^2/\delta) \log(1+KH^2B^2/d)}.$$
\end{lemma}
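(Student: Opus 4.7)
The plan is to bound each per-episode value $V_{k,1}(s_1)$ by unrolling it along the realized trajectory of policy $\pi_k$, peeling off a bonus term $u_{k,h}$ at every step, and then sum over episodes using Cauchy-Schwarz combined with the elliptical potential lemma. Because $\pi_{k,h}(s_{k,h}) = \argmax_a Q_{k,h}(s_{k,h},a)$, the value $V_{k,h}(s_{k,h})$ coincides with $Q_{k,h}(s_{k,h},a_{k,h})$, so the recursion in lines 12--13 of Algorithm~\ref{alg: exploration phase} becomes a clean Bellman-like identity along the observed trajectory.

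The one-step inequality I would establish on the high-probability event of Lemma~\ref{lemma: confidence set for theta} (which is implicit in the earlier parts of the excerpt) is
\begin{align*}
V_{k,h}(s_{k,h}) \;\le\; R_{k,h}(s_{k,h},a_{k,h}) + u_{k,h}(s_{k,h},a_{k,h}) + \hat\theta_{k,h}^{\top}\phi_{V_{k,h+1}}(s_{k,h},a_{k,h}).
\end{align*}
The key point is that $\phi_{k,h}(s,a)$ is defined in line~8 as the \emph{maximizer} of $\|\phi_V(s,a)\|_{\Lambda_{k,h}^{-1}}$ over $V\in\mathcal{V}$, so for the specific choice $V = V_{k,h+1}\in\mathcal{V}$ one has $\|\phi_{V_{k,h+1}}(s,a)\|_{\Lambda_{k,h}^{-1}} \le \|\phi_{k,h}(s,a)\|_{\Lambda_{k,h}^{-1}}$. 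Combined with $\|\hat\theta_{k,h}-\theta_h\|_{\Lambda_{k,h}}\le\beta$ from the confidence lemma, this yields
\begin{align*}
\bigl|\hat\theta_{k,h}^{\top}\phi_{V_{k,h+1}}(s_{k,h},a_{k,h}) - P_h V_{k,h+1}(s_{k,h},a_{k,h})\bigr| \;\le\; u_{k,h}(s_{k,h},a_{k,h}).
\end{align*}
Substituting and using $R_{k,h}=u_{k,h}$ gives the per-step bound $V_{k,h}(s_{k,h}) \le 3 u_{k,h}(s_{k,h},a_{k,h}) + P_h V_{k,h+1}(s_{k,h},a_{k,h})$.

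Unrolling across $h=1,\dots,H$ and writing $P_h V_{k,h+1}(s_{k,h},a_{k,h}) = V_{k,h+1}(s_{k,h+1}) + \xi_{k,h}$ where $\xi_{k,h}$ is a bounded martingale difference, summing over $k$ yields
\begin{align*}
\sum_{k=1}^{K} V_{k,1}(s_1) \;\le\; 3\sum_{k=1}^{K}\sum_{h=1}^{H} u_{k,h}(s_{k,h},a_{k,h}) + \sum_{k,h}\xi_{k,h}.
\end{align*}
The martingale term is controlled by Azuma-Hoeffding with deviations of order $H\sqrt{KH\log(1/\delta)}$, which is lower order. For the bonus sum, Cauchy-Schwarz gives $\sum_{k,h}u_{k,h}(s_{k,h},a_{k,h}) \le \beta\sqrt{KH}\sqrt{\sum_{k,h}\|\phi_{k,h}(s_{k,h},a_{k,h})\|_{\Lambda_{k,h}^{-1}}^2}$, and then for each fixed $h$ the elliptical potential lemma bounds $\sum_{k}\|\phi_{k,h}(s_{k,h},a_{k,h})\|_{\Lambda_{k,h}^{-1}}^2$ by $2d\log(1+KH^2/(\lambda d))$ up to the constant $\max(1,H^2B^2)$ accounting for the $[0,H]$-scaling of $\phi_{k,h}$ relative to $\lambda=B^{-2}$. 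Plugging in $\beta \asymp H\sqrt{d\log(H^3K B^2/\delta)}$ yields the claimed $\tilde O(H^2 d\sqrt{K})$ bound.

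The main obstacle is the uniform-in-$V$ concentration hidden in Lemma~\ref{lemma: confidence set for theta}: the bonus $u_{k,h}$ must dominate the estimation error \emph{simultaneously} for every value function that could appear as some $V_{k,h+1}$, not just a pre-specified one. This is precisely what the maximizer construction $\phi_{k,h}=\argmax_{V\in\mathcal{V}}\|\phi_V\|_{\Lambda_{k,h}^{-1}}$ buys us, and verifying that the self-normalized concentration holds over this data-dependent class (typically via a covering argument over $\mathcal{V}$, which is what inflates the $\log$ factor in $\beta$) is the delicate ingredient; the remainder of the argument after this lemma is mechanical telescoping plus elliptical potential.
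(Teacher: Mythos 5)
Your overall route is the same as the paper's: peel off $R_{k,h}+u_{k,h}$ plus the estimation error (controlled via the maximizer property of $\phi_{k,h}$ together with Lemma~\ref{lemma: confidence set for theta}), telescope with a martingale correction handled by Azuma's inequality, and control the bonus sum by Cauchy--Schwarz plus the elliptical potential lemma. But one step, as written, does not deliver the stated bound: you discard the truncation at $H$ before the elliptical potential argument. Since $\lambda = B^{-2}$ and $\|\phi_{k,h}(s,a)\|_2$ can be as large as $H$, a single term $\|\phi_{k,h}\|^2_{\Lambda_{k,h}^{-1}}$ can be as large as $H^2B^2$, so Lemma~\ref{lemma: data accumulation} only controls $\sum_k \min\bigl\{1, \|\phi_{k,h}\|^2_{\Lambda_{k,h}^{-1}}\bigr\}$; your proposed fix of inflating by the ``constant'' $\max(1,H^2B^2)$ is not a logarithmic correction but a multiplicative $H^2B^2$ inside the square root, which degrades the final bound by a factor of order $HB$ and hence does not prove the lemma as stated. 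The paper avoids this by carrying the algorithm's clipping through the per-step inequality: since $Q_{k,h}\le H\le\beta$, each peeled term is bounded by $3\beta\min\bigl\{1,\|\phi_{k,h}(s_{k,h},a_{k,h})\|_{\Lambda_{k,h}^{-1}}\bigr\}$, after which the min-form of the elliptical potential lemma gives $\sum_k \min\{1,\cdot^2\}\le 2d\log(1+KH^2B^2/d)$, with only logarithmic dependence on $B$. So you need to keep the $\min\{\cdot,H\}$ in your displayed one-step bound rather than dropping it.

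A smaller point: your closing paragraph worries that Lemma~\ref{lemma: confidence set for theta} requires a uniform-in-$V$ covering argument over $\mathcal{V}$, but none is needed and the paper uses none. The regression at episode $k$ uses the specific targets $\tilde V_{t,h+1,s_{t,h},a_{t,h}}(s_{t,h+1})$, whose conditional means are exactly $\theta_h^{\top}\phi_{t,h}(s_{t,h},a_{t,h})$ because these value functions are measurable with respect to the history up to episode $t$; hence the standard self-normalized bound (Lemma~\ref{lemma: self-normalized bound}) applies directly with $H$-sub-Gaussian noise, and no covering inflates $\beta$. Uniformity over $V$ then comes for free from Cauchy--Schwarz, $\bigl|(\hat\theta_{k,h}-\theta_h)^{\top}\phi_V(s,a)\bigr| \le \beta \max_{V\in\mathcal{V}}\|\phi_V(s,a)\|_{\Lambda_{k,h}^{-1}}$, which is exactly what the maximizing bonus provides.
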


With the help of the optimistic bonus term $u_{k,h}$, we can prove that the estimation value $V_{k,h}$ is always optimistic w.r.t $\tilde{V}^*_h$, which is illustrated in Lemma~\ref{lemma: optimism in exploration phase, main page}. Therefore, we have $\sum_{k=1}^{K} \tilde{V}_1^{*}(s_1,R_k) \leq \sum_{k=1}^{K} V_{k,1}(s_{1})$. By adapting the regret analysis in the standard RL setting to the reward-free setting, we can upper bound the summation of $V_{k,1}(s_{1})$ in Lemma~\ref{lemma: upper bound of sum V_k, main page}. Combining the above two lemmas, we derive the upper bound of $\tilde{V}^{*}_1(s_1,R_K)$, then we bound the sub-optimality gap of $\hat{\pi}$ by Lemma~\ref{lemma: sub-optimality gap, main page}.

\subsection{High-probability Events}
We firstly state the following high-probability events.
\begin{lemma}
\label{lemma: confidence set for theta}
With probability at least $1-\delta/2$, the following inequality holds for any $k \in [K],h \in [H]$:
\begin{align}
    \|\hat{ {\theta}}_{k,h} -  {\theta}_h \|_{ {\Lambda}_{k,h}} \leq \beta.
\end{align}
\end{lemma}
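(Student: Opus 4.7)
The plan is to verify that the ridge-regression setup defining $\hat{\theta}_{k,h}$ fits the hypotheses of Abbasi-Yadkori et al.'s self-normalized concentration inequality for vector-valued martingales, so that the bound with radius $\beta$ follows as a direct corollary; in particular, no covering argument over $\mathcal{V}$ is needed. First I introduce a filtration tailored to step $h$: let $\mathcal{F}_t^h$ be the $\sigma$-algebra generated by all data from episodes $1,\dots,t-1$ together with $(s_{t,h},a_{t,h})$ observed in episode $t$. Then $\Lambda_{t,h}$ is $\mathcal{F}_{t-1}^h$-measurable, and because the maximizer $\tilde{V}_{t,h+1,s_{t,h},a_{t,h}}$ in line~8 is a deterministic function of $\Lambda_{t,h}$ together with $(s_{t,h},a_{t,h})$, both the predictable feature $\phi_{t,h}(s_{t,h},a_{t,h})$ and the value function used as the regression target, $\tilde{V}_{t,h+1,s_{t,h},a_{t,h}}$, are $\mathcal{F}_t^h$-measurable, while $s_{t,h+1}$ is sampled from $P_h(\cdot\mid s_{t,h},a_{t,h})$ given $\mathcal{F}_t^h$. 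This measurability resolves the apparent circularity between $\hat{\theta}_{k,h}$ and the value functions used to build it.

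Next I identify the noise and verify its tail. Using the linear-mixture identity $P_h V(s,a)=\theta_h^{\top}\phi_V(s,a)$, set $\eta_t := \tilde{V}_{t,h+1,s_{t,h},a_{t,h}}(s_{t,h+1}) - \theta_h^{\top}\phi_{t,h}(s_{t,h},a_{t,h})$. Then $\mathbb{E}[\eta_t\mid\mathcal{F}_t^h]=0$ by the tower property, and $|\eta_t|\le H$ since $\tilde V\in[0,H]$, so $\eta_t$ is conditionally $H$-sub-Gaussian; since $\tilde{V}_{t,h+1,s_{t,h},a_{t,h}}/H$ maps into $[0,1]$, the feature-norm assumption in the definition of a linear mixture MDP gives $\|\phi_{t,h}(s_{t,h},a_{t,h})\|_2\le H$.

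The remaining steps are standard algebra. From the closed form, $\hat{\theta}_{k,h}-\theta_h = \Lambda_{k,h}^{-1}\sum_{t=1}^{k-1}\phi_{t,h}\eta_t - \lambda\,\Lambda_{k,h}^{-1}\theta_h$, so by the triangle inequality in the $\Lambda_{k,h}$-norm,
\begin{align*}
\|\hat{\theta}_{k,h}-\theta_h\|_{\Lambda_{k,h}} \;\le\; \Big\|\sum_{t=1}^{k-1}\phi_{t,h}\eta_t\Big\|_{\Lambda_{k,h}^{-1}} + \sqrt{\lambda}\,\|\theta_h\|_2 \;\le\; \Big\|\sum_{t=1}^{k-1}\phi_{t,h}\eta_t\Big\|_{\Lambda_{k,h}^{-1}} + \sqrt{\lambda}\,B.
\end{align*}
Applied with failure budget $\delta/(2H)$ in the filtration $\{\mathcal{F}_t^h\}_t$, the self-normalized martingale inequality bounds the first term uniformly in $k$ by $H\sqrt{2\log\big(\det(\Lambda_{k,h})^{1/2}\det(\lambda I)^{-1/2}\cdot 2H/\delta\big)}$, and the determinant-trace inequality with $\|\phi_{t,h}\|_2\le H$ gives $\log\det(\Lambda_{k,h})-\log\det(\lambda I)\le d\log(1+KH^2/(\lambda d))$. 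Combining these and absorbing the constants and remaining $H$-factors into the logarithm yields a bound of the order $H\sqrt{d\log(4H^3 K\lambda^{-1}\delta^{-1})}$, matching $\beta$; a final union bound over $h\in[H]$ controls the overall failure probability by $\delta/2$.

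The only real subtlety, rather than a genuine obstacle, is the filtration bookkeeping in the first step: one must read the $(s,a)$ argument of $\tilde{V}_{t,h+1,s,a}$ in line~7 as the realized $(s_{t,h},a_{t,h})$ so that feature and target share the same $\mathcal{F}_t^h$-measurable value function, and the self-normalized bound must be applied within $\{\mathcal{F}_t^h\}_t$ rather than in an episode-wise filtration; once that is fixed, the rest is a routine application of Abbasi-Yadkori's lemma.
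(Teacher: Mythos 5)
Your proposal is correct and follows essentially the same route as the paper: identify the per-episode noise $\eta_t$ as the deviation of $\tilde{V}_{t,h+1,s_{t,h},a_{t,h}}(s_{t,h+1})$ from its conditional mean $\theta_h^{\top}\phi_{t,h}(s_{t,h},a_{t,h})$, note it is $H$-sub-Gaussian with features of norm at most $H$, apply the Abbasi-Yadkori self-normalized bound with failure budget $\delta/(2H)$, and union-bound over $h$. The paper invokes the packaged second part of Lemma~\ref{lemma: self-normalized bound} directly, whereas you reassemble it from the first part plus the determinant-trace inequality and additionally spell out the filtration bookkeeping that the paper leaves implicit; the conclusion and constants match.
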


\begin{proof}
For some step $h \in [H]$, the agent selects an action with feature $x_{k,h} = \phi_{k,h}(s_{k,h},a_{k,h})$. The noise $\eta_{k,h} = \tilde{V}_{k,h,s_{k,h},a_{k,h}}(s') - \theta_{h}^{\top}\phi_{k,h}(s_{k,h},a_{k,h})$ satisfies $H$-sub-Gaussian. By Lemma~\ref{lemma: self-normalized bound}, the following inequality holds with probability $1-\frac{\delta}{2H}$:
\begin{align}
     \|\hat{ {\theta}}_{k,h} -  {\theta}_h \|_{ {\Lambda}_{k,h}} \leq H\sqrt{d \log(\frac{4H^3K}{\lambda\delta})} + \lambda^{1/2}B.
\end{align}
By taking union bound over all $h \in [H]$, we can prove the lemma.

\end{proof}

\begin{lemma}
\label{lemma: martingale difference}
With probability at least $1-\delta/2$, we have 
\begin{align}
    \sum_{k=1}^{K}\sum_{h=1}^{H}\left(P_h V_{k,h+1}(s_{k,h},a_{k,h}) - V_{k,h+1}(s_{k,h+1}) \right) \leq \sqrt{2H^3K \log(4/\delta)}.
\end{align}
\end{lemma}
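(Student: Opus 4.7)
The plan is to identify the double sum as a sum of $KH$ terms forming a martingale difference sequence and then apply the Azuma--Hoeffding inequality. The two things I must verify are (i) predictability of $V_{k,h+1}$ at the time $s_{k,h+1}$ is sampled, and (ii) a uniform bound on each summand.

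First I would introduce the natural filtration: let $\mathcal{F}_{k,h}$ be the $\sigma$-algebra generated by all samples collected in episodes $1,\ldots,k-1$ together with the partial trajectory $(s_{k,1},a_{k,1},\ldots,s_{k,h},a_{k,h})$ of the current episode up through the choice of $a_{k,h}$. The key observation is that the whole family $\{V_{k,h}\}_{h\in[H+1]}$ is computed at the very beginning of episode $k$ (lines 4--14 of Algorithm~\ref{alg: exploration phase}) using only data from episodes $1,\ldots,k-1$; hence $V_{k,h+1}$ is $\mathcal{F}_{k,1}$-measurable, and in particular $\mathcal{F}_{k,h}$-measurable. Since $s_{k,h+1}\sim P_h(\cdot\mid s_{k,h},a_{k,h})$, the tower property gives
\begin{align*}
\mathbb{E}\!\left[V_{k,h+1}(s_{k,h+1})\,\big|\,\mathcal{F}_{k,h}\right] = P_h V_{k,h+1}(s_{k,h},a_{k,h}).
\end{align*}
Defining $X_{k,h}:=P_h V_{k,h+1}(s_{k,h},a_{k,h})-V_{k,h+1}(s_{k,h+1})$ therefore yields $\mathbb{E}[X_{k,h}\mid\mathcal{F}_{k,h}]=0$ and $X_{k,h}\in\mathcal{F}_{k,h+1}$. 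Ordering the $KH$ indices lexicographically in $(k,h)$, this is a bona fide martingale difference sequence.

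Next I verify the boundedness. By line 12 of Algorithm~\ref{alg: exploration phase} the Q-function is clipped at $H$, so $V_{k,h}(\cdot)=\max_a Q_{k,h}(\cdot,a)\in[0,H]$ for every $h\in[H]$ (with $V_{k,H+1}\equiv 0$ as the base case). Hence both $P_h V_{k,h+1}(s_{k,h},a_{k,h})$ and $V_{k,h+1}(s_{k,h+1})$ lie in $[0,H]$, giving $|X_{k,h}|\le H$.

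Finally, applying Azuma--Hoeffding to the MDS $\{X_{k,h}\}$ with $KH$ terms each bounded by $H$ yields, for every $t>0$,
\begin{align*}
\Pr\!\left(\sum_{k=1}^{K}\sum_{h=1}^{H}X_{k,h}\ge t\right)\le \exp\!\left(-\tfrac{t^{2}}{2\,KH\cdot H^{2}}\right)=\exp\!\left(-\tfrac{t^{2}}{2KH^{3}}\right).
\end{align*}
Setting the right-hand side to $\delta/2$ and solving for $t$ gives $t=\sqrt{2H^{3}K\log(4/\delta)}$ (the extra $\log 2$ is absorbed into the constant), which is exactly the claimed bound. There is essentially no serious obstacle; the only subtle point is the predictability argument above, and this is immediate from the fact that Algorithm~\ref{alg: exploration phase} finishes computing all value functions of episode $k$ \emph{before} any action of that episode is executed.
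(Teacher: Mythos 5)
Your proof is correct and takes essentially the same approach as the paper, whose entire proof is the one-line remark that the lemma ``follows directly by Azuma's inequality''; your write-up merely supplies the routine details that remark leaves implicit (the filtration, the $\mathcal{F}_{k,1}$-measurability of $V_{k,h+1}$ since all value functions of episode $k$ are computed before any action of that episode, and the bound $|X_{k,h}|\le H$ from the clipping at $H$). The slight slack you note in the constant ($\log(2/\delta)$ from solving versus the stated $\log(4/\delta)$) only makes the claimed inequality weaker than what Azuma yields, so the bound holds as stated.
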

\begin{proof}
 This lemma follows directly by Azuma's inequality.
\end{proof}

During the following analysis, we denote the high-probability events defined in Lemma~\ref{lemma: confidence set for theta} and Lemma~\ref{lemma: martingale difference} as $\mathcal{E}_1$.

\subsection{Proof of Lemma~\ref{lemma: sub-optimality gap, main page}}
\begin{lemma}
\label{lemma: sub-optimality gap} 
(Restatement of Lemma~\ref{lemma: sub-optimality gap, main page})
Under event $\mathcal{E}_1$, the sub-optimality gap of the policy $\hat{\pi}_R$ for any reward function $R$ in the planning phase can be bounded by
\begin{align}
    V_1^*(s_1,R) - V_1^{\hat{\pi}_R}(s_1,R)  & \leq 4 \tilde{V}_1^*(s_1, R_K) + \epsilon_{\rm opt}.
\end{align}
\end{lemma}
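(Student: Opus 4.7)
The plan is to bound $V_1^*(s_1,R) - V_1^{\hat\pi_R}(s_1,R)$ via the standard four-term decomposition:
\begin{align*}
V_1^{\pi^*_R}(s_1,R) - V_1^{\hat\pi_R}(s_1,R)
&= \bigl[V_1^{\pi^*_R}(s_1,R) - \hat V^{\pi^*_R,\tilde P}_1(s_1,R)\bigr]
 + \bigl[\hat V^{\pi^*_R,\tilde P}_1(s_1,R) - \hat V^{*,\tilde P}_1(s_1,R)\bigr]\\
&\quad + \bigl[\hat V^{*,\tilde P}_1(s_1,R) - \hat V^{\hat\pi_R,\tilde P}_1(s_1,R)\bigr]
 + \bigl[\hat V^{\hat\pi_R,\tilde P}_1(s_1,R) - V_1^{\hat\pi_R}(s_1,R)\bigr].
\end{align*}
The second term is non-positive since $\hat V^{*,\tilde P}$ is the optimal value in $\tilde P$, and the third term is at most $\epsilon_{\rm opt}$ by the definition of the plug-in solver. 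Hence everything reduces to controlling, for an arbitrary policy $\pi$, the simulation gap $\Delta_h^\pi(s,R):=\hat V^{\pi,\tilde P}_h(s,R)-V^\pi_h(s,R)$ at $h=1$, $s=s_1$.

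The core step is to show $|\Delta_h^\pi(s,R)|\le 2\,\tilde V_h^*(s,R_K)$ for every $\pi$. I will unroll $\Delta_h^\pi$ one step to get
\[
\Delta_h^\pi(s,R) = (\tilde\theta_h-\theta_h)^\top\phi_{\hat V^{\pi,\tilde P}_{h+1}(\cdot,R)}\bigl(s,\pi_h(s)\bigr) + P_h\Delta_{h+1}^\pi\bigl(s,\pi_h(s),R\bigr).
\]
Under $\mathcal{E}_1$, Lemma~\ref{lemma: confidence set for theta} together with the algorithm's constraint $\|\tilde\theta_h-\hat\theta_{K,h}\|_{\Lambda_{K,h}}\le\beta$ gives $\|\tilde\theta_h-\theta_h\|_{\Lambda_{K,h}}\le 2\beta$ by the triangle inequality. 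Since $\hat V^{\pi,\tilde P}_{h+1}(\cdot,R)\in[0,H]$ it lies in $\mathcal V$, so by Cauchy--Schwarz and the definition of the exploration-driven reward,
\[
\bigl|(\tilde\theta_h-\theta_h)^\top\phi_{\hat V^{\pi,\tilde P}_{h+1}(\cdot,R)}(s,a)\bigr|
\le 2\beta\,\|\phi_{\hat V^{\pi,\tilde P}_{h+1}(\cdot,R)}(s,a)\|_{\Lambda_{K,h}^{-1}}
\le 2\,u_{K,h}(s,a) = 2\,R_{K,h}(s,a).
\]

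The main obstacle is handling the $H$-truncation that appears in the definition of $\tilde V_h^*$: one must keep the extra factor of $2$ from leaking outside the $\min\{\cdot,H\}$. I will therefore carry out a backward induction on $h$ with the stronger hypothesis $|\Delta_h^\pi(s,R)|\le 2\tilde V_h^\pi(s,R_K)$ where $\tilde V_h^\pi$ is the truncated value of $\pi$ under reward $R_K$. Since $|\Delta_h^\pi|\le 2H$ trivially, combining with the one-step expansion yields
\[
|\Delta_h^\pi(s,R)|\le \min\!\Bigl\{2R_{K,h}(s,\pi_h(s))+P_h|\Delta_{h+1}^\pi|(s,\pi_h(s),R),\ 2H\Bigr\}
= 2\min\!\bigl\{R_{K,h}+P_h(|\Delta_{h+1}^\pi|/2),H\bigr\},
\]
and the inductive hypothesis together with the recursive definition of $\tilde V_h^\pi$ closes the induction. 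Using $\tilde V_h^\pi\le\tilde V_h^*$ then gives $|\Delta_1^\pi(s_1,R)|\le 2\tilde V_1^*(s_1,R_K)$.

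Plugging this bound into the first and fourth terms of the decomposition (for $\pi=\pi^*_R$ and $\pi=\hat\pi_R$ respectively) yields the claimed bound $4\tilde V_1^*(s_1,R_K)+\epsilon_{\rm opt}$.
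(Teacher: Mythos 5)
Your proposal is correct and follows essentially the same route as the paper: the same decomposition into two simulation-gap terms plus an $\epsilon_{\rm opt}$ planning term, the same key bound $|(\tilde P_h-P_h)\hat V^{\pi,\tilde P}_{h+1}(s,a,R)|\le 2\beta\|\phi_{\hat V^{\pi,\tilde P}_{h+1}}(s,a)\|_{\Lambda_{K,h}^{-1}}\le 2u_{K,h}(s,a)$ via $\hat V^{\pi,\tilde P}_{h+1}(\cdot,R)\in\mathcal V$ and the triangle inequality on the confidence set, and the same final comparison $\tilde V^\pi_1\le\tilde V^*_1$. Your explicit backward induction on $|\Delta^\pi_h|\le 2\tilde V^\pi_h(\cdot,R_K)$ is just a more carefully written version of the paper's trajectory-wise $W_1$ functional (and handles the $\min\{\cdot,H\}$ truncation a bit more cleanly), so there is nothing to add.
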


\begin{proof}
\begin{align}
    \label{inq: sub-optimality gap, eqn1}
     &V_1^*(s_1,R) - V_1^{\hat{\pi}_R}(s_1,R)  \\
     = & \left(V_1^*(s_1,R) -\hat{V}_1^{\pi^*_R,\tilde{P}}(s_1,R)\right)  + \left(\hat{V}_1^{\hat{\pi}_R,\tilde{P}}(s_1,R) -V_1^{\hat{\pi}_R}(s_1, R)\right) + \left(\hat{V}_1^{\pi^*_R,\tilde{P}}(s_1,R)- \hat{V}_1^{\hat{\pi}_R,\tilde{P}}(s_1,R)\right) \\
     \leq & \left(V_1^*(s_1,R) -\hat{V}_1^{\pi^*_R,\tilde{P}}(s_1,R)\right)+ \left(\hat{V}_1^{\hat{\pi}_R,\tilde{P}}(s_1,R) -V_1^{\hat{\pi}_R}(s_1, R)\right) + \epsilon_{\rm opt}.
\end{align}
The inequality is because that the policy $\hat{\pi}_R$ is the $\epsilon_{\rm opt}$-optimal policy in the estimated MDP $\hat{M}$, i.e. $\hat{V}_1^{\pi^*_R,\tilde{P}}(s_1,R) \leq \hat{V}_1^{\hat{\pi}_R,\tilde{P}}(s_1,R)+ \epsilon_{\rm opt}$ .

For the notation convenience, for a certain sequence $\{R_h\}_{h=1}^{H}$, we define the function $W_{h}(\{R_h\})$ recursively from step $H+1$ to step $1$. Firstly, we define $W_{H+1}(\{R_h\}) = 0$. $W_h(\{R_h\})$ is calculated recursively from $W_{h+1}(\{R_h\})$:
\begin{align}
    W_h(\{R_h\}) = \min\left\{H, R_h+  W_{h+1}(\{R_h\})\right\}.
\end{align}

Similarly with the definition of $\tilde{V}^*_h(s,R)$, we introduce the value function $\tilde{V}^{\pi}_h(s,R)$, which is recursively defined from step $H+1$ to step $1$: 
\begin{align}
    \label{eqn: tildeVpi for Alg 1}
    &\tilde{V}^{\pi}_{H+1}(s,R) = 0, \forall s \in \mathcal{S} \\
    \tilde{V}^{\pi}_{h}(s,R) = &  \left\{\min\left\{R_h(s,\pi(s)) + P_h\tilde{V}^{\pi}_{h+1}(s,\pi(s),R),H\right\}\right\}, \forall s \in \mathcal{S}, h \in [H].
\end{align}

We use $\operatorname{traj} \sim (\pi,P)$ to indicate that the trajectory $\{s_h,a_h\}_{h=1}^{H}$ is sampled from transition $P$ with policy $\pi$. For any policy $\pi$, we have 
\begin{align}
    & \left|\mathbb{E}_{s_1 \sim \mu}\left(\hat{V}^{\pi, \tilde{P}}_1(s_1, R) - V^{\pi}_1(s_1)\right)\right| \\
    = & \left|\mathbb{E}_{\operatorname{traj} \sim (\pi,P)} W_1\left(\left\{(\tilde{P}_{h} - P_h)\hat{V}_{h+1}^{\pi, \tilde{P}_{K}}(s_h,a_h, R)\right\}\right)\right|  \\
    = & \left|\mathbb{E}_{\operatorname{traj} \sim (\pi,P)} W_1\left(\left\{(\tilde{\theta}_{h} - \theta_h)\sum_{s'}\phi(s_h,a_h,s')\hat{V}_{h+1}^{\pi, \tilde{P}}(s', R)\right\}\right)\right|\\
    \leq & \mathbb{E}_{\operatorname{traj} \sim (\pi,P)} W_1\left(\left\{\left\|\tilde{\theta}_{h} - \theta_h\right\|_{\Lambda_{K,h}} \left\|\sum_{s'}\phi(s_h,a_h,s')\hat{V}_{h+1}^{\pi, \tilde{P}}(s', R)\right\|_{\left(\Lambda_{K,h}\right)^{-1}}\right\}\right) \\
    \leq & \mathbb{E}_{\operatorname{traj} \sim (\pi,P)} W_1\left(\left\{2\beta \left\|\sum_{s'}\phi(s_h,a_h,s')\hat{V}_{h+1}^{\pi, \tilde{P}}(s', R)\right\|_{\left(\Lambda_{K,h}\right)^{-1}} \right\}\right)  \\
    \leq &  2\mathbb{E}_{\operatorname{traj} \sim (\pi,P)} W_1\left(\left\{ u_{K,h}(s_h,a_h) \right\}\right) \\
    = & 2\tilde{V}^{\pi}_{1}(s_1,R_K) \\
    \leq & 2\tilde{V}^{*}_{1}(s_1,R_K).
\end{align}
The second inequality is due to lemma~\ref{lemma: confidence set for theta} and the definition of $\tilde{\theta}$. Plugging this inequality back to Inq.~\ref{inq: sub-optimality gap, eqn1}, we can prove the lemma.
\end{proof}

\subsection{Proof of Lemma~\ref{lemma: optimism in exploration phase, main page}}

\begin{lemma}
\label{lemma: optimism in exploration phase}
(Restatement of Lemma~\ref{lemma: optimism in exploration phase, main page}) Under event $\mathcal{E}_1$, $\tilde{V}_h^*(s,R_k) \leq V_{k,h}(s)$ holds for any $(s,a) \in \mathcal{S}\times \mathcal{A}, h \in [H]$ and $k \in [K]$. . 
\end{lemma}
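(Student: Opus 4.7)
The plan is to prove the chain $0 \leq \tilde{V}_h^*(s,R_k) \leq V_{k,h}(s) \leq H$ jointly by backward induction on $h$ from $H+1$ to $1$, working on the event $\mathcal{E}_1$ from Lemma~\ref{lemma: confidence set for theta}. The upper bound $V_{k,h} \leq H$ and the nonnegativity $\tilde{V}_h^*(s,R_k) \geq 0$ are immediate from the definitions (min with $H$, and $R_k \geq 0$), and $V_{k,h} \geq 0$ will follow automatically once we have $V_{k,h} \geq \tilde{V}_h^*(\cdot,R_k) \geq 0$. The base case $h=H+1$ is trivial since both sides equal $0$.

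For the inductive step, fix $(s,a)$. Because both quantities are truncated at $H$ and the $\min$ is monotone, to compare $\tilde{V}_h^*(s,R_k)$ with $V_{k,h}(s)$ for each action $a$ it suffices to show
\begin{equation*}
P_h \tilde{V}_{h+1}^*(s,a,R_k) \leq \hat{\theta}_{k,h}^{\top}\phi_{V_{k,h+1}}(s,a) + u_{k,h}(s,a).
\end{equation*}
I would split the left side as $\theta_h^{\top}\phi_{V_{k,h+1}}(s,a) + \theta_h^{\top}\phi_{\tilde{V}_{h+1}^* - V_{k,h+1}}(s,a)$. The second summand equals $P_h\bigl(\tilde{V}_{h+1}^*(\cdot,R_k) - V_{k,h+1}\bigr)(s,a) \leq 0$ by the inductive hypothesis and the fact that $P_h(\cdot|s,a)$ is a probability measure, so $\theta_h^{\top}\phi_{V_{k,h+1}}(s,a)$ is an upper bound on $P_h\tilde{V}_{h+1}^*(s,a,R_k)$.

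It then remains to control the estimation error $(\theta_h-\hat{\theta}_{k,h})^{\top}\phi_{V_{k,h+1}}(s,a)$. Cauchy--Schwarz in the $\Lambda_{k,h}$-norm gives
\begin{equation*}
(\theta_h-\hat{\theta}_{k,h})^{\top}\phi_{V_{k,h+1}}(s,a) \leq \|\theta_h-\hat{\theta}_{k,h}\|_{\Lambda_{k,h}}\,\|\phi_{V_{k,h+1}}(s,a)\|_{\Lambda_{k,h}^{-1}} \leq \beta\,\|\phi_{V_{k,h+1}}(s,a)\|_{\Lambda_{k,h}^{-1}},
\end{equation*}
where the second inequality uses Lemma~\ref{lemma: confidence set for theta} on $\mathcal{E}_1$. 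To replace the right-hand side by the bonus $u_{k,h}(s,a) = \beta\,\|\phi_{k,h}(s,a)\|_{\Lambda_{k,h}^{-1}}$, I invoke the definition of $\phi_{k,h}$ as the $\mathcal{V}$-maximizer at $(s,a)$: provided $V_{k,h+1}\in\mathcal{V}$, we have $\|\phi_{V_{k,h+1}}(s,a)\|_{\Lambda_{k,h}^{-1}} \leq \|\phi_{k,h}(s,a)\|_{\Lambda_{k,h}^{-1}}$.

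The subtle step, and the one that the combined induction is designed to handle, is precisely the requirement $V_{k,h+1}\in\mathcal{V}$, i.e.\ $0 \leq V_{k,h+1}(s)\leq H$. The upper bound is built into line 12 of Algorithm~\ref{alg: exploration phase}; the lower bound does not obviously hold from the algorithm alone since $\hat{\theta}_{k,h+1}^{\top}\phi(s,a,\cdot)$ need not be a valid probability kernel. This is where the joint inductive statement pays off: assuming it at step $h+1$ gives $V_{k,h+1} \geq \tilde{V}_{h+1}^*(\cdot,R_k) \geq 0$, which places $V_{k,h+1}$ in $\mathcal{V}$ and closes the uncertainty bound. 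Chaining the three displays above yields $\tilde{V}_h^*(s,R_k)\leq V_{k,h}(s)$, and then $V_{k,h}(s)\geq \tilde{V}_h^*(s,R_k)\geq 0$ propagates the nonnegativity to step $h$, completing the induction.
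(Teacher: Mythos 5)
Your proof is correct and follows essentially the same route as the paper's: backward induction on $h$, the decomposition of the gap into the bonus $-u_{k,h}$, the estimation error $(\theta_h-\hat{\theta}_{k,h})^{\top}\phi_{V_{k,h+1}}(s,a)$, and the term $P_h(\tilde{V}^*_{h+1}-V_{k,h+1})(s,a)\leq 0$, with the error controlled by Cauchy--Schwarz, Lemma~\ref{lemma: confidence set for theta}, and the fact that $\phi_{k,h}(s,a)$ is the $\mathcal{V}$-maximizer. The one genuine addition is your joint induction establishing $V_{k,h+1}\geq \tilde{V}^*_{h+1}(\cdot,R_k)\geq 0$ so that $V_{k,h+1}\in\mathcal{V}$ before invoking the maximizer bound --- a membership the paper's proof uses implicitly without justifying nonnegativity of $V_{k,h+1}$ (which is not automatic, since $\hat{\theta}_{k,h}^{\top}\phi$ need not be a probability kernel) --- so your write-up is, if anything, slightly more careful.
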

\begin{proof}
We prove the lemma by induction. Suppose $\tilde{V}_{h+1}^*(s,R_k) \leq V_{k,h+1}(s)$,
\begin{align}
    &\tilde{Q}_h^*(s,a,R_k) - Q_{k,h}(s,a) \\
    \leq& -u_{k,h}(s,a) + \left( \theta_{k,h}-\hat{\theta}_{k,h}\right)^{\top} \cdot \sum_{s'} \phi(s,a,s') V_{k,h}(s') + P_h(\tilde{V}_{h+1}^* - V_{k,h+1})(s,a) \\
    \leq & -u_{k,h}(s,a) + \left( \theta_{k,h}-\hat{\theta}_{k,h}\right)^{\top} \cdot \sum_{s'} \phi(s,a,s') V_{k,h}(s') \\
    \leq & -u_{k,h}(s,a) + \left\| \theta_{k,h}-\hat{\theta}_{k,h}\right\|_{ {\Lambda}_{k,h}} \left\|\sum_{s'} \phi(s,a,s') V_{k,h}(s')\right\|_{( {\Lambda}_{k,h})^{-1}} \\
    \leq & -u_{k,h}(s,a) + \left\| \theta_{k,h}-\hat{\theta}_{k,h}\right\|_{ {\Lambda}_{k,h}} \max_{V \in \mathcal{V}}\left\|\sum_{s'} \phi(s,a,s') V(s')\right\|_{( {\Lambda}_{k,h})^{-1}} \\
    \leq & -u_{k,h}(s,a) + \beta \max_{V \in \mathcal{V}}\left\|\sum_{s'} \phi(s,a,s') V(s')\right\|_{( {\Lambda}_{k,h})^{-1}} \\
    = & 0.
\end{align}
The first inequality is due to induction condition $\tilde{V}_{h+1}^*(s,R_k) \leq V_{k,h+1}(s)$. The last inequality is due to Lemma~\ref{lemma: confidence set for theta}.
Since $\tilde{Q}_h^*(s,a) \leq Q_{k,h}(s,a,R_k)$ for any $a \in \mathcal{A}$, we have $\tilde{V}_h^*(s,R_k) \leq V_{k,h}(s)$.
\end{proof}

\subsection{Proof of Lemma~\ref{lemma: upper bound of sum V_k, main page}}
\begin{lemma}
\label{lemma: upper bound of sum V_k}
(Restatement of Lemma~\ref{lemma: upper bound of sum V_k, main page}) Under event $\mathcal{E}_1$, $\sum_{k=1}^{K} V_{k,1}(s_{k,1}) \leq 6 H^2d\sqrt{K \log(4H^3KB^2/\delta) \log(1+KH^2B^2/d)}$.
\end{lemma}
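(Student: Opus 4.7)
The plan is to reuse the standard optimistic regret decomposition from linear-mixture RL, but applied to the reward-free algorithm by treating the exploration-driven reward $R_{k,h}=u_{k,h}$ as the running reward of $V_{k,h}$; the resulting sum of bonuses is then controlled by Cauchy--Schwarz and the elliptical potential lemma.

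First I would derive a one-step Bellman-type bound. Under the event $\mathcal{E}_1$ of Lemma~\ref{lemma: confidence set for theta}, one has $\|\hat{\theta}_{k,h}-\theta_h\|_{\Lambda_{k,h}}\le\beta$, and because $V_{k,h+1}\in\mathcal{V}$ while $\tilde{V}_{k,h+1,s,a}$ is by construction the $\|\cdot\|_{\Lambda_{k,h}^{-1}}$-maximizer of $\sum_{s'}\phi(s,a,s')V(s')$ over $V\in\mathcal{V}$, Cauchy--Schwarz gives $|(\hat{\theta}_{k,h}-\theta_h)^\top\phi_{V_{k,h+1}}(s,a)|\le u_{k,h}(s,a)$. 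Combining this with $R_{k,h}=u_{k,h}$ and the $H$-cap in line~12 of Algorithm~\ref{alg: exploration phase} yields $Q_{k,h}(s,a)\le\min\{H,\,P_h V_{k,h+1}(s,a)+3u_{k,h}(s,a)\}$, and then the elementary identity $\min\{H,x+y\}\le x+\min\{H,y\}$ for $x,y\ge 0$ gives $V_{k,h}(s_{k,h})\le P_h V_{k,h+1}(s_{k,h},a_{k,h})+\min\{H,3u_{k,h}(s_{k,h},a_{k,h})\}$.

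I would then telescope this recursion from $h=1$ to $h=H$ (using $V_{k,H+1}\equiv 0$), producing $V_{k,1}(s_{k,1})\le\sum_{h=1}^H\min\{H,3u_{k,h}(s_{k,h},a_{k,h})\}+M_k$ with $M_k$ the usual on-policy martingale difference; summing over $k$ and invoking Lemma~\ref{lemma: martingale difference} controls $\sum_k M_k$ by $\sqrt{2H^3K\log(4/\delta)}$, which will be a lower-order contribution. For the dominant bonus sum, I would for each $h$ apply the scale-invariant bound $\min\{H,3\beta x\}\le 3\beta\min\{1,x\}$ (valid once $\beta\ge H/3$, which holds by construction of $\beta$), Cauchy--Schwarz, and the standard elliptical potential bound $\sum_k\min\{1,\|\phi_{k,h}(s_{k,h},a_{k,h})\|_{\Lambda_{k,h}^{-1}}^2\}\le 2d\log(1+KH^2B^2/d)$ (using $\|\phi_{k,h}\|\le H$ and $\lambda=B^{-2}$). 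Summing the per-$h$ bounds and plugging in $\beta\le 2H\sqrt{d\log(4H^3KB^2/\delta)}$ produces the advertised $O(H^2 d\sqrt{K\log\cdot\log})$ bound.

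The main obstacle is cleanly propagating the cap at $H$ through the telescoping, so that every per-step contribution is at most $H$ and the $\min\{1,\cdot\}$-form of the elliptical potential lemma can be used with only a $\beta$ (rather than $\beta H$) overhead. This works only because $\phi_{k,h}$ was designed in line~9 of Algorithm~\ref{alg: exploration phase} to uniformly dominate $\|\phi_V\|_{\Lambda_{k,h}^{-1}}$ over every $V\in\mathcal{V}$, which lets the \emph{same} data-dependent bonus $u_{k,h}$ absorb the model-error term for the specific next-step value $V_{k,h+1}$ that the optimistic planner produces in episode~$k$, even though the exploration-driven reward is defined without reference to that particular value function.
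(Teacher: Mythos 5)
Your proposal is correct and follows essentially the same route as the paper's proof: both decompose $Q_{k,h}(s_{k,h},a_{k,h})$ into the combined bonus $3u_{k,h}$ (reward plus optimism bonus plus model error, the latter absorbed by $u_{k,h}$ because $\phi_{k,h}$ maximizes $\|\phi_V\|_{\Lambda_{k,h}^{-1}}$ over $\mathcal{V}$), a martingale difference handled by Lemma~\ref{lemma: martingale difference}, and the next-step value, then telescope and apply Cauchy--Schwarz with the elliptical potential lemma to get the $3\beta\sum_{k,h}\min\{1,\|\phi_{k,h}\|_{\Lambda_{k,h}^{-1}}\}$ term. Your handling of the $H$-cap via $\min\{H,x+y\}\le x+\min\{H,y\}$ and $\min\{H,3\beta x\}\le 3\beta\min\{1,x\}$ is a slightly cleaner packaging of the same inequalities the paper uses.
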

\begin{proof}
\begin{align}
\label{eqn: upper bound of sum V_k, eqn1}
    &Q_{k,h}(s_{k,h},a_{k,h}) \\
    =& \min\left\{H, R_{k,h}(s_{k,a},a_{k,h}) + u_{k,h}(s_{k,a},a_{k,h}) + \hat{P}_{k,h} V_{k,h+1}(s_{k,h},a_{k,h})\right\}  \\
    \leq &  \min\left\{H, 2\beta\left\|\phi_{k,h}(s_{k,h},a_{k,h})\right\|_{\Lambda_{k,h}^{-1}} + \hat{P}_{k,h} V_{k,h+1}(s_{k,h},a_{k,h})\right\}\\
    \leq &   \min\left\{H, 2 \beta\left\|\phi_{k,h}(s_{k,h},a_{k,h})\right\|_{\Lambda_{k,h}^{-1}}\right\} + \min\left\{H,\left(\hat{P}_{k,h}- P_h \right)V_{k,h+1}(s_{k,h},a_{k,h})\right\} \\
    & + \left(P_h V_{k,h+1}(s_{k,h},a_{k,h}) - V_{k,h+1}(s_{k,h+1})\right) + V_{k,h+1}(s_{k,h+1}) \\
    \leq &3 \beta \min\left\{1, \left\|\phi_{k,h}(s_{k,h},a_{k,h})\right\|_{\Lambda_{k,h}^{-1}}\right\} 
    + \left(P_h V_{k,h+1}(s_{k,h},a_{k,h}) - V_{k,h+1}(s_{k,h+1})\right) + V_{k,h+1}(s_{k,h+1}).
\end{align}
The last inequality is from
\begin{align}
    \left(\hat{P}_{k,h} - P_h \right)V_{k,h+1}(s_{k,h},a_{k,h}) 
    = & (\hat{\theta}_{k,h} - \theta_h) \phi_{V_{k,h+1}}(s_{k,h},a_{k,h}) \\
    \leq & \left\|\hat{\theta}_{k,h} - \theta_h\right\|_{\Lambda_{k,h}} \left\|\phi_{V_{k,h+1}}(s_{k,h},a_{k,h})\right\|_{\Lambda_{k,h}^{-1}} \\
    \leq & \beta  \left\|\phi_{V_{k,h+1}}(s_{k,h},a_{k,h})\right\|_{\Lambda_{k,h}^{-1}} \\
    \leq & \beta  \left\|\phi_{k,h}(s_{k,h},a_{k,h})\right\|_{\Lambda_{k,h}^{-1}}.
\end{align}
From Inq~\ref{eqn: upper bound of sum V_k, eqn1}, we have
\begin{align}
    \sum_{k=1}^{K} V_{k,1}(s_1) = & \sum_{k=1}^{K}\sum_{h=1}^{H}  3 \beta \min\left\{1, \left\|\phi_{k,h}(s_{k,h},a_{k,h})\right\|_{\Lambda_{k,h}^{-1}}\right\} \\
    &+ \sum_{k=1}^{K}\sum_{h=1}^{H}\left(P_h V_{k,h+1}(s_{k,h},a_{k,h}) - V_{k,h+1}(s_{k,h+1})\right).
\end{align}
For the first term, by Lemma~\ref{lemma: data accumulation}, we have for any $h \in [H]$,
\begin{align}
    \sum_{k=1}^{K} \min\left\{1,\left\|\phi_{k,h}(s_{k,h},a_{k,h})\right\|_{\Lambda_{k,h}^{-1}}\right\} \leq & \sqrt{K \sum_{k=1}^{K}\min\left\{1, \left\|\phi_{k,h}(s_{k,h},a_{k,h})\right\|^2_{\Lambda_{k,h}^{-1}}\right\}} \\
    \leq & \sqrt{2 dK \log(1+KH^2/(d\lambda))}.
\end{align}
Since $\lambda = B^{-2}$, we have
\begin{align}
    \sum_{k=1}^{K}\sum_{h=1}^{H} 3\beta \min\left\{1, \left\|\phi_{k,h}(s_{k,h},a_{k,h})\right\|_{\Lambda_{k,h}^{-1}} \right\} \leq 3 H^2d\sqrt{2K \log(4H^3KB^2/\delta) \log(1+KH^2B^2/d)}.
\end{align}
For the second term, by Lemma~\ref{lemma: martingale difference},
\begin{align}
    \sum_{k=1}^{K}\sum_{h=1}^{H}\left(P_h V_{k,h+1}(s_{k,h},a_{k,h}) - V_{k,h+1}(s_{k,h+1}) \right) \leq \sqrt{2H^3K \log(4/\delta)}.
\end{align}
Therefore, we have
\begin{align}
    \sum_{k=1}^{K} V_{k,1}(s_{k,1}) \leq  6 H^2d\sqrt{K \log(4H^3KB^2/\delta) \log(1+KH^2B^2/d)}.
\end{align}

\end{proof}
\begin{lemma}
\label{lemma: upper bound of V star}
 Under event $\mathcal{E}_1$, $ \tilde{V}_1^*(s_1, R_K) \leq  6 H^2d\sqrt{ \frac{\log(4H^3KB^2/\delta) \log(1+KH^2B^2/d)}{K}}$.
\end{lemma}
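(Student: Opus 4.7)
The plan is to average $\tilde{V}_1^*(s_1, R_K)$ over the $K$ episodes and then invoke the two preceding lemmas. The key observation is that the exploration-driven rewards $R_k$ are pointwise non-increasing in $k$, so the value $\tilde V_1^*(s_1, R_k)$ under the recursion \eqref{eqn: tildeV for Alg 1} is itself non-increasing in $k$. This yields
\begin{align*}
  K\,\tilde{V}_1^*(s_1, R_K)\;\le\;\sum_{k=1}^{K}\tilde{V}_1^*(s_1, R_k),
\end{align*}
and then the optimism inequality $\tilde V_1^*(s_1, R_k)\le V_{k,1}(s_1)$ from Lemma~\ref{lemma: optimism in exploration phase, main page} together with the regret-style bound from Lemma~\ref{lemma: upper bound of sum V_k, main page} closes the argument after dividing by $K$.

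First I would establish the monotonicity $R_{k+1,h}(s,a)\le R_{k,h}(s,a)$ for every $(s,a,h)$. By definition $R_{k,h}(s,a)=u_{k,h}(s,a)=\beta\max_{V\in\mathcal V}\|\phi_V(s,a)\|_{(\Lambda_{k,h})^{-1}}$, where the maximum is attained at the data-dependent choice $\tilde V_{k,h+1,s,a}$ (line 8 of Algorithm~\ref{alg: exploration phase}). Since $\Lambda_{k+1,h}=\Lambda_{k,h}+\phi_{k,h}(s_{k,h},a_{k,h})\phi_{k,h}(s_{k,h},a_{k,h})^{\top}\succeq\Lambda_{k,h}$, we have $(\Lambda_{k+1,h})^{-1}\preceq(\Lambda_{k,h})^{-1}$ and hence $\|\phi_V(s,a)\|_{(\Lambda_{k+1,h})^{-1}}\le\|\phi_V(s,a)\|_{(\Lambda_{k,h})^{-1}}$ for every $V\in\mathcal V$; taking the max in $V$ gives the desired pointwise monotonicity of $R_k$.

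Next I would promote this to monotonicity of the value function. Using the recursion \eqref{eqn: tildeV for Alg 1}, a backward induction on $h$ (starting from $\tilde V^*_{H+1}\equiv 0$) shows that if $R'\le R$ pointwise then $\tilde V_h^*(s,R')\le\tilde V_h^*(s,R)$ for all $s,h$: the step uses only that $P_h$ preserves pointwise inequalities between non-negative functions, that $\min\{\cdot,H\}$ is monotone, and that $\max_a$ is monotone. Applied to $R_K\le R_k$ for every $k\le K$, this gives $\tilde V_1^*(s_1,R_K)\le\tilde V_1^*(s_1,R_k)$ and then the displayed averaging inequality above.

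Finally, chaining Lemma~\ref{lemma: optimism in exploration phase, main page} (at $(s,h)=(s_1,1)$, for each $k$) with Lemma~\ref{lemma: upper bound of sum V_k, main page} yields
\begin{align*}
  \tilde V_1^*(s_1,R_K)\;\le\;\frac{1}{K}\sum_{k=1}^K V_{k,1}(s_1)\;\le\;\frac{6H^2 d\sqrt{K\log(4H^3KB^2/\delta)\log(1+KH^2B^2/d)}}{K},
\end{align*}
which simplifies to the claim after pulling $K$ inside the square root. The only subtle step is the monotonicity of $u_{k,h}$, which relies on the fact that, although the maximizer $\tilde V_{k,h+1,s,a}$ changes with $k$, the max itself is taken over the fixed class $\mathcal V$ so the pointwise monotonicity of $(\Lambda_{k,h})^{-1}$ transfers to $u_{k,h}$; everything else is a direct bookkeeping application of results already proved.
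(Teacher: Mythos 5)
Your proposal is correct and follows essentially the same route as the paper: use monotonicity of $R_k$ (via $\Lambda_{k+1,h}\succeq\Lambda_{k,h}$) to get $K\tilde V_1^*(s_1,R_K)\le\sum_k\tilde V_1^*(s_1,R_k)$, then chain Lemma~\ref{lemma: optimism in exploration phase, main page} and Lemma~\ref{lemma: upper bound of sum V_k, main page} and divide by $K$. You simply spell out the two monotonicity steps (of $u_{k,h}$ over the fixed class $\mathcal V$, and of $\tilde V^*_h$ in the reward via backward induction) that the paper leaves implicit.
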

\begin{proof}
Since $\tilde{V}^*_1(s_{k,1},R_k) \leq V_{k,1}(s_{k,1})$ by lemma~\ref{lemma: optimism in exploration phase}, we have 
\begin{align}
    \sum_{k=1}^{K} \tilde{V}_1^*(s_1, R_k) \leq \sum_{k=1}^{K} V_{k,1}(s_{1}) \leq  6 H^2d\sqrt{K \log(4H^3KB^2/\delta) \log(1+KH^2B^2/d)}.
\end{align}
By the definition of $R_k$, we know that $R_k(s,a)$ is non-increasing w.r.t $k$. Therefore,
\begin{align}
    \sum_{k=1}^{K} \tilde{V}_1^*(s_{1}, R_K) \leq \sum_{k=1}^{K} \tilde{V}_1^*(s_1, R_k) \leq 6 H^2d\sqrt{K \log(4H^3KB^2/\delta) \log(1+KH^2B^2/d)}.
\end{align}

The lemma is proved by dividing both sides by $K$.
\end{proof}

\subsection{Proof of Theorem~\ref{theorem: main}}
 Combining the results in Lemma~\ref{lemma: upper bound of V star} and Lemma~\ref{lemma: sub-optimality gap}, we know that for any reward function $R$, 
 \begin{align}
     V^*_1(s_1,R) - V_1^{\hat{\pi}_R}(s_1,R) \leq 24 H^2d\sqrt{ \frac{\log(4H^3KB^2/\delta) \log(1+KH^2B^2/d)}{K}} + \epsilon_{\rm opt}.
 \end{align}
 
 Choosing $K=\frac{C_1 H^4d^2\log(4H^3KB^2/\delta) \log(1+KH^2B^2/d)}{\epsilon^2}$ for some constant $C_1$ suffices to guarantee that $V_1^*(s_1,R) - V_1^{\hat{\pi}_R}(s_1,R) \leq \epsilon + \epsilon_{\rm opt}$.

\section{Omitted Details in Section~\ref{sec: Berstein algorithm}}
\label{appendix: Omitted Details in Section 5}

\subsection{Notations}
In this subsection, we summarize the notations used in Section~\ref{appendix: Omitted Details in Section 5}.

 \begin{tabular}{ll}

\hline
\textbf{Symbol} & \textbf{Explanation}\\
\hline
$\mathcal{E}_2$ & The high-probability event for Theorem~\ref{theorem: new main} \\
$s_{k,h},a_{k,h}$ & The state and action that the agent encounters in episode $k$ and step $h$ \\
$\phi_{V}(s,a,R)$ & $\sum_{s'} {\phi}(s,a,s')V^{\pi,P}(s',R)$ for certain value function $V$ \\
$\hat{V}_h^{\pi,\tilde{P}}(s,R)$ & The value function of policy $\pi$ in the MDP model with transition $\tilde{P}$ and reward $R$ \\
$ \tilde{V}^{\pi,\tilde{P}}_{k,h}(s,R) $ & The expected uncertainty along the trajectories induced by policy $\pi$ for $\hat{V}_h^{\pi,\tilde{P}}(s,R)$ (Defined in Eqn~\ref{eqn:value function with true transition and exploration reward}) \\
${V}^{\pi,\tilde{P}}_{k,h}(s, R)$ & The optimistic estimation of $ \tilde{V}^{\pi,\tilde{P}}_{k,h}(s,R) $ (Defined in Eqn~\ref{eqn: definition of tildeV}) \\
$u^{\pi,\tilde{P}}_{1,k,h}(s,a,R)$ & The exploration-driven reward for transition $\tilde{P}$, reward $R$ and policy $\pi$ (Defined in Eqn~\ref{eqn: exploration-driven reward, alg2}) \\
$u^{\tilde{P},\pi}_{2,k,h}(s,a,R)$ & The confidence bonus ensuring the optimism $V_{k,h}^{\pi,\tilde{P}}(s, R) \geq \tilde{V}^{\pi,\tilde{P}}_{k,h}(s,R)$ (Defined in Eqn~\ref{eqn: confidence bonus, alg2}) \\
$\mathbb{V}_h V(s,a)$ & The one-step transition variance w.r.t. certain value function $V$ \\
$\bar{\mathbb{V}}_{1,k,h}\left(s, a\right)$ & The empirical variance estimation w.r.t. the value $\hat{V}_h^{\pi_k,\tilde{P}_k}(s,R_k)$ (Defined in Eqn~\ref{eqn: varaince estimation, hatV})\\
$\bar{\mathbb{V}}_{2,k,h}\left(s, a\right)$ & The empirical variance estimation w.r.t. the value ${V}_h^{\pi_k,\tilde{P}_k}(s,R_k)$ (Defined in Eqn~\ref{eqn: varaince estimation, V})\\
$E_{1,k,h}$, $E_{2,k,h}$ & The confidence bonus for the variance estimation $\bar{\mathbb{V}}_{1,k,h}\left(s, a\right)$ and $\bar{\mathbb{V}}_{2,k,h}\left(s, a\right)$, respectively \\
$\bar{\sigma}^2_{1,k,h}$, $\bar{\sigma}^2_{2,k,h}$ & The optimistic variance estimation for $\bar{\mathbb{V}}_{1,k,h}\left(s, a\right)$ and $\bar{\mathbb{V}}_{2,k,h}\left(s, a\right)$, respectively\\
$\hat{\theta}_{i,k,h}$ & The parameter estimation w.r.t. certain value function (Defined in Section~\ref{appendix: berstein algorithm details}) \\
$\Lambda_{i,k,h}$ & The empirical covariance matrix w.r.t. certain value function (Defined in Section~\ref{appendix: berstein algorithm details}) \\
$\mathcal{U}_{k,h}$ & The confidence set containing $\theta_h$ with high probability \\
$\pi_k, \tilde{\theta}_k, R_k$ & $\argmax_{\pi,\tilde{\theta}_h \in \mathcal{U}_{k,h}, R}{V}^{\pi,\tilde{P}}_{k,1}(s_1, R)$ \\
$\tilde{Y}_{k,h}(s)$ & The ``value function'' for the MDP with transition $\tilde{P}_k$ and reward $\bar{\mathbb{V}}_{1,k,h}$ \\
$\tau$ & $\log(32K^2H/\delta)\log^2(1+KH^4B^2)$ \\
\hline
\end{tabular}   

\subsection{Omitted Details of Algorithm~\ref{alg: exploration phase, Bernstein}}
\label{appendix: berstein algorithm details}
In algorithm~\ref{alg: exploration phase, Bernstein}, we maintain five different parameter estimation $\hat{\theta}_{1,k,h},\hat{\theta}_{2,k,h},\hat{\theta}_{3,k,h},\hat{\theta}_{4,k,h}$ and $\hat{\theta}_{5,k,h}$, and the corresponding covariance matrix $\Lambda_{1,k,h},\Lambda_{2,k,h},\Lambda_{3,k,h},\Lambda_{4,k,h}$ and $\Lambda_{5,k,h}$. $\hat{\theta}_{1,k,h},\hat{\theta}_{2,k,h}$ are the parameter estimation using history samples w.r.t the features of variance-normalized value function $\hat{V}_{k,h+1}^{\pi_k,\tilde{P}_{k}}/\bar{\sigma}_{1,k,h}$ and ${V}_{k,h+1}^{\pi_k,\tilde{P}_{k}}/\bar{\sigma}_{2,k,h}$, respectively. $\hat{\theta}_{3,k,h},\hat{\theta}_{4,k,h}$ are the parameter estimation using samples w.r.t the features of value function $\left(\hat{V}_{k,h+1}^{\pi_k,\tilde{P}_{k}}\right)^2$ and $\left({V}_{k,h+1}^{\pi_k,\tilde{P}_{k}}\right)^2$. $\hat{\theta}_{5,k,h}$ is somewhat technical and will be explained later. for any $i \in [5]$ and $h \in [H]$, $\hat{\theta}_{i,1,h}$ are initialized as $\boldsymbol{0}$, and $\Lambda_{i,1,h}$ are initialized as $\lambda I$ in Algorithm~\ref{alg: exploration phase, Bernstein}. $\mathcal{U}_{1,h}$ is the set containing all the $\tilde{\theta}_h$ that makes $\tilde{P}_h$ well-defined, i.e. $\sum_{s'} \tilde{\theta}_h^{\top}\phi(s'|s,a)  = 1$ and $ \tilde{\theta}_h^{\top}\phi(s'|s,a) \geq 0, \forall s,a$.

After observing $\{s_{k,h},a_{k,h}\}_{h=1}^{H}$ in episode $k$, we calculate $\bar{\mathbb{V}}_{1,k,h}\left(s, a\right)$ and $\bar{\mathbb{V}}_{2,k,h}\left(s, a\right)$ as the corresponding variance in the empirical MDP with transition dynamics $\tilde{P}_{k}$. 
\begin{align}
    \label{eqn: varaince estimation, hatV}
    \bar{\mathbb{V}}_{1,k,h}(s,a)  =& \tilde{\theta}_{k,h}^{\top}\sum_{s'} \phi(s,a,s')\left(\hat{V}_{k,h+1}^{\pi_k,\tilde{P}_{k}}(s', R_k)\right)^2
    -\left[\tilde{\theta}_{k,h}^{\top}\sum_{s'} \phi(s,a,s')\hat{V}_{k,h+1}^{\pi_k,\tilde{P}_{k}}(s', R_k)\right]^2, \\
    \label{eqn: varaince estimation, V}
    \bar{\mathbb{V}}_{2,k,h}(s,a) = & \tilde{\theta}_{k,h}^{\top}\sum_{s'} \phi(s,a,s')\left({V}_{k,h+1}^{\pi_k,\tilde{P}_{k}}(s', R_k)\right)^2
    -\left[\tilde{\theta}_{k,h}^{\top}\sum_{s'} \phi(s,a,s'){V}_{k,h+1}^{\pi_k,\tilde{P}_{k}}(s', R_k)\right]^2.
\end{align}
To guarantee the variance estimation is optimistic, we calculate the confidence bonus $E_{1,k,h}$ and $E_{2,k,h}$  for variance estimation $\bar{\mathbb{V}}_{1,k,h}\left(s_{k,h}, a_{k,h}\right)$ and $\bar{\mathbb{V}}_{2,k,h}\left(s_{k,h}, a_{k,h}\right)$:
\begin{align}
    \label{eqn: definition of E}
    E_{1,k,h} =& \min \left\{H^{2}, 4 H \check{\beta}_{k}\left\|\sum_{s'}\phi(s_{k,h},a_{k,h},s')\hat{V}_{k,h+1}^{\pi_k,\tilde{P}_{k}}(s', R_k) \right\|_{(\Lambda_{1,k,h})^{-1}}\right\}\\
    &+\min \left\{H^{2}, 2\tilde{\beta}_{k}\left\| \sum_{s'}\phi(s_{k,h},a_{k,h},s')\left(\hat{V}_{k,h+1}^{\pi_k,\tilde{P}_{k}}(s', R_k)\right)^2\right\|_{(\Lambda_{3,k,h})^{-1}}\right\}, \\
    E_{2,k,h} =& \min \left\{H^{2}, 4 H \check{\beta}_{k}\left\|\sum_{s'}\phi(s_{k,h},a_{k,h},s'){V}_{k,h+1}^{\pi_k,\tilde{P}_{k}}(s', R_k) \right\|_{(\Lambda_{2,k,h})^{-1}}\right\}\\
    &+\min \left\{H^{2}, 2\tilde{\beta}_{k}\left\| \sum_{s'}\phi(s_{k,h},a_{k,h},s')\left({V}_{k,h+1}^{\pi_k,\tilde{P}_{k}}(s', R_k)\right)^2\right\|_{(\Lambda_{4,k,h})^{-1}}\right\}.
\end{align}
We add the bonuses to $\bar{\mathbb{V}}_{1,k,h}(s_{k,h}, a_{k,h})$ and $\bar{\mathbb{V}}_{2,k,h}(s_{k,h}, a_{k,h})$ and maintain the optimistic variance estimation $\bar{\sigma}^2_{1,k,h}$ and $\bar{\sigma}^2_{2,k,h}$:
\begin{align}
    \label{eqn: definition of sigma}
    \bar{\sigma}^2_{1,k,h} &= {\max \left\{H^{2} / d,\bar{\mathbb{V}}_{1,k,h}(s_{k,h}, a_{k,h})+E_{1,k, h}\right\}}, \\
    \label{eqn: definition of sigma2}
    \bar{\sigma}^2_{2,k,h} &= {\max \left\{H^{2} / d,\bar{\mathbb{V}}_{2,k,h}(s_{k,h}, a_{k,h})+E_{2,k, h}\right\}}.
\end{align}
We use $\bar{\sigma}_{i,k,h}$ to normalize the value obtained in each step. By setting $\lambda = B^{-2}$ and solving the ridge regression defined in Eqn.~\ref{eqn: ridge regression eqn 1}, we know that $\hat{\theta}_{i,k+1,h}$ and $\Lambda_{i,k+1,h}$ are updated in the following way:
\begin{align}
    \label{eqn: definition of Lambda1}
    \Lambda_{1,k+1,h} &= \Lambda_{1,k,h} + (\bar{\sigma}_{1,k,h})^{-2}\left(\sum_{s'} \phi(s_{k,h},a_{k,h},s')\hat{V}^{\pi_k,\tilde{P}_k}_{k,h+1}(s', R_k)\right) \left(\sum_{s'} \phi(s_{k,h},a_{k,h},s')\hat{V}^{\pi_k,\tilde{P}_k}_{k,h+1}(s', R_k)\right)^{\top},\\
     \label{eqn: definition of Lambda2}
    \Lambda_{2,k+1,h} &= \Lambda_{1,k,h} + (\bar{\sigma}_{2,k,h})^{-2}\left(\sum_{s'} \phi(s_{k,h},a_{k,h},s'){V}^{\pi_k,\tilde{P}_k}_{k,h+1}(s', R_k)\right) \left(\sum_{s'} \phi(s_{k,h},a_{k,h},s'){V}^{\pi_k,\tilde{P}_k}_{k,h+1}(s', R_k)\right)^{\top}, \\
    \label{eqn: definition of theta1}
    \hat{\theta}_{1,k+1,h} &= (\Lambda_{1,k+1,h})^{-1} \sum_{t=1}^{k} (\bar{\sigma}_{1,t,h})^{-2}\left(\sum_{s'} \phi(s_{t,h},a_{t,h},s')\hat{V}^{\pi_{t},\tilde{P}_{t}}_{t,h+1}(s', R_{t})\right) \hat{V}^{\pi_{t},\tilde{P}_{t}}_{t,h+1}(s_{t,h+1}, R_{t}),\\
    \label{eqn: definition of theta2}
    \hat{\theta}_{2,k+1,h} &= (\Lambda_{2,k+1,h})^{-1} \sum_{t=1}^{k} (\bar{\sigma}_{2,t,h})^{-2}\left(\sum_{s'} \phi(s_{t,h},a_{t,h},s'){V}^{\pi_{t},\tilde{P}_{t}}_{t,h+1}(s', R_{t})\right) {V}^{\pi_{t},\tilde{P}_{t}}_{t,h+1}(s_{t,h+1}, R_{t}).
\end{align}

Similarly, we update the estimation w.r.t $\left(\hat{V}^{\pi_k,\tilde{P}_k}_{k,h+1}(s', R_k)\right)^2$ and $\left({V}^{\pi_k,\tilde{P}_k}_{k,h+1}(s', R_k)\right)^2$ :
\begin{align}
    \label{eqn: definition of Lambda3}
    \Lambda_{3,k+1,h} &= \Lambda_{3,k,h} + \left(\sum_{s'} \phi(s_{k,h},a_{k,h},s')\left(\hat{V}^{\pi_k,\tilde{P}_k}_{k,h+1}(s', R_k)\right)^2\right) \left(\sum_{s'} \phi(s_{k,h},a_{k,h},s')\left(\hat{V}^{\pi_k,\tilde{P}_k}_{k,h+1}(s', R_k)\right)^2\right)^{\top},\\
    \label{eqn: definition of Lambda4}
    \Lambda_{4,k+1,h} &= \Lambda_{4,k,h} + \left(\sum_{s'} \phi(s_{k,h},a_{k,h},s')\left({V}^{\pi_k,\tilde{P}_k}_{k,h+1}(s', R_k)\right)^2\right) \left(\sum_{s'} \phi(s_{k,h},a_{k,h},s')\left({V}^{\pi_k,\tilde{P}_k}_{k,h+1}(s', R_k)\right)^2\right)^{\top},\\
    \label{eqn: definition of theta3}
    \hat{\theta}_{3,k+1,h} &= (\Lambda_{3,k+1,h})^{-1} \sum_{t=1}^{k} \left(\sum_{s'} \phi(s_{t,h},a_{t,h},s')\left(\hat{V}^{\pi_{t},\tilde{P}_{t}}_{t,h+1}(s', R_{t})\right)^2\right) \left(\hat{V}^{\pi_{t},\tilde{P}_{t}}_{t,h+1}(s_{t,h+1}, R_{t})\right)^2,\\
    \label{eqn: definition of theta4}
    \hat{\theta}_{4,k+1,h} &= (\Lambda_{4,k+1,h})^{-1} \sum_{t=1}^{k} \left(\sum_{s'} \phi(s_{t,h},a_{t,h},s')\left({V}^{\pi_{t},\tilde{P}_{t}}_{t,h+1}(s', R_{t})\right)^2\right) \left({V}^{\pi_{t},\tilde{P}_{t}}_{t,h+1}(s_{t,h+1}, R_{t})\right)^2.
\end{align}

 We define $\tilde{Y}_{k,h}$ to be the ``value function'' with transition $\tilde{P}_k$ and reward $\bar{\mathbb{V}}_{1,k,h}$.
 \begin{align}
 \tilde{Y}_{k,H+1}(s)& = 0,\\
     \tilde{Y}_{k,h}(s) & = \bar{\mathbb{V}}_{1,k,h}(s,\pi_k(s)) + \tilde{P}_{k,h}\tilde{Y}_{k,h+1}(s,\pi_k(s)).
 \end{align}
$\hat{\theta}_{5,k,h}$ is the parameter estimation using samples w.r.t. $\tilde{Y}_{k,h}(s)$:
\begin{align}
    \label{eqn: definition of Lambda5}
    \Lambda_{5,k+1,h} &= \Lambda_{5,k,h} + \left(\sum_{s'} \phi(s_{k,h},a_{k,h},s')\tilde{Y}_{k,h+1}(s')\right) \left(\sum_{s'} \phi(s_{k,h},a_{k,h},s')\tilde{Y}_{k,h+1}(s')\right)^{\top},\\
    \label{eqn: definition of theta5}
    \hat{\theta}_{5,k+1,h} &= (\Lambda_{5,k+1,h})^{-1} \sum_{t=1}^{k} \left(\sum_{s'} \phi(s_{t,h},a_{t,h},s')\tilde{Y}_{t,h+1}(s')\right) \tilde{Y}_{t,h+1}(s_{k,h+1}).
\end{align}

We update the high-confidence set $\mathcal{U}_{k,h}$ by adding following five constraints: 
\begin{align}
    \label{inq: tilde theta constraint 1}
    \|\tilde{\theta}_h - \hat{\theta}_{1,k,h}\|_{\Lambda_{1,k,h}} &\leq \hat{\beta}, \\
    \label{inq: tilde theta constraint 2}
    \|\tilde{\theta}_h - \hat{\theta}_{2,k,h}\|_{\Lambda_{2,k,h}} &\leq \hat{\beta}, \\
    \label{inq: tilde theta constraint 3}
    \|\tilde{\theta}_h - \hat{\theta}_{3,k,h}\|_{\Lambda_{3,k,h}} &\leq \tilde{\beta}, \\
    \label{inq: tilde theta constraint 4}
    \|\tilde{\theta}_h - \hat{\theta}_{4,k,h}\|_{\Lambda_{4,k,h}} &\leq \tilde{\beta}, \\
    \label{inq: tilde theta constraint 5}
    \|\tilde{\theta}_h - \hat{\theta}_{5,k,h}\|_{\Lambda_{5,k,h}} &\leq \tilde{\beta}.
\end{align}
Note that we add new constraints to this confidence set $\mathcal{U}_{k,h}$ in each episode, instead of updating $\mathcal{U}_{k,h}$ as the intersection of the new constraints. This operation is designed to ensure that the cardinality of the confidence set $\mathcal{U}_{k,h}$ is non-increasing w.r.t. the episode $k$, so that ${V}_{k, 1}^{\pi, \tilde{P}}\left(s_{1}, R\right)$ is always non-increasing w.r.t. $k$.

During the proof, we use $\phi_{V^{\pi,P}}(s,a,R)$ as a shorthand of $\sum_{s'} \phi(s,a,s') V^{\pi,P}(s',R)$.

\subsection{High-probability events}

For notation convenience, we use $\mathbb{V}_{1,k,h}(s,a)$ and $\mathbb{V}_{2,k,h}(s,a)$ to denote the one-step transition variance with regard to $\hat{V}_{k,h+1}^{\pi_k,P_k}$ and ${V}_{k,h+1}^{\pi_k,P_k}$, i.e.
\begin{align}
    \mathbb{V}_{1,k,h}(s,a) & = \mathbb{E}_{s' \sim {P}_{h}(\cdot|s,a)}\left[\left(\hat{V}_{k,h+1}^{\pi_k,P_k}(s',R_k) - P_{h}\hat{V}_{k,h}^{\pi_k,P_k}(s,a,R_k)\right)^2\right], \\
    \mathbb{V}_{2,k,h}(s,a) & = \mathbb{E}_{s' \sim {P}_{h}(\cdot|s,a)}\left[\left({V}_{k,h+1}^{\pi_k,P_k}(s',R_k) - P_{h}{V}_{k,h}^{\pi_k,P_k}(s,a,R_k)\right)^2\right]. \\
\end{align}

\begin{lemma}
\label{lemma: high probability events}
With probability at least $1-\delta$, the following event holds for any $h \in [H], k \in [K]$:
\begin{align}
    \left\|\tilde{\theta}_{k,h} - \theta_h\right\|_{{{\Lambda}}_{1,k,h}} &\leq 2 \hat{\beta}, \\
    \left\|\tilde{\theta}_{k,h} - \theta_h\right\|_{{{\Lambda}}_{2,k,h}} &\leq 2\hat{\beta},  \\
    \left|\bar{\mathbb{V}}_{1,k,h}(s_{k,h},a_{k,h})-\mathbb{V}_{1,k,h}(s_{k,h},a_{k,h})\right| &\leq E_{1,k,h}, \\
    \left|\bar{\mathbb{V}}_{2,k,h}(s_{k,h},a_{k,h})-\mathbb{V}_{2,k,h}(s_{k,h},a_{k,h})\right| &\leq E_{2,k,h}, \\
     \left\|\tilde{\theta}_{k,h} - \theta_h\right\|_{{{\Lambda}}_{5,k,h}} &\leq 2\tilde{\beta}.
\end{align}

\end{lemma}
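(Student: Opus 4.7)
My strategy is induction on the episode index $k$ combined with self-normalized martingale concentration, and a union bound over $h\in[H]$ and $i\in\{1,\ldots,5\}$. Because $\tilde\theta_{k,h}\in\mathcal U_{k,h}$ by construction, it satisfies $\|\tilde\theta_{k,h}-\hat\theta_{i,k,h}\|_{\Lambda_{i,k,h}}\le\beta_i$ for each $i$, since each such constraint was added to the confidence set at the end of episode $k-1$. The triangle inequality thus reduces each parameter-norm claim of the lemma to showing that the \emph{true} $\theta_h$ itself satisfies $\|\hat\theta_{i,k,h}-\theta_h\|_{\Lambda_{i,k,h}}\le\beta_i$ with high probability; the two variance estimation bounds will then follow from these parameter bounds via an algebraic decomposition of $\bar{\mathbb V}-\mathbb V$.

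\textbf{Step 1: Hoeffding-type concentration for $i=3,4,5$.} Conditioned on the filtration $\mathcal F_{t-1}$ through episode $t-1$, the tuple $(\pi_t,\tilde\theta_t,R_t)$ is deterministic, hence so are the regression targets $(\hat V_{t,h+1}^{\pi_t,\tilde P_t})^2$, $(V_{t,h+1}^{\pi_t,\tilde P_t})^2$, and $\tilde Y_{t,h+1}$. The first two are pointwise bounded by $H^2$; $\tilde Y_{t,h+1}$ is the expected sum of one-step variances under the empirical MDP $\tilde P_t$ and is likewise $\le H^2$ by the law of total variance applied inside $\tilde P_t$. The corresponding noise sequences $Y(s_{t,h+1})-P_hY(s_{t,h},a_{t,h})$ are therefore bounded martingale differences of magnitude $O(H^2)$, so the vector-valued self-normalized bound (Lemma~\ref{lemma: self-normalized bound}) yields $\|\hat\theta_{i,k,h}-\theta_h\|_{\Lambda_{i,k,h}}\le\tilde\beta$ for $i=3,4,5$ with failure probability $O(\delta/(KH))$ per $(k,h,i)$.

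\textbf{Step 2: Induction for the Bernstein bounds on $i=1,2$ and the variance estimates.} On the inductive event that both the parameter bounds and the variance estimation bounds for $i=1,2$ hold at every episode $t<k$, the variance estimation bound together with the definition of $\bar\sigma_{i,t,h}^2$ yields $\bar\sigma_{i,t,h}^2\ge\max\{\mathbb V_{i,t,h}(s_{t,h},a_{t,h}),H^2/d\}$; consequently the weighted noise $\bar\sigma_{i,t,h}^{-1}[V_{t,h+1}(s_{t,h+1})-P_hV_{t,h+1}(s_{t,h},a_{t,h})]$ has magnitude $\le\sqrt d$ and conditional variance $\le 1$. The self-normalized Bernstein inequality for vector-valued martingales of Zhou et al.\ (2020, Theorem 4.3) then gives $\|\hat\theta_{i,k,h}-\theta_h\|_{\Lambda_{i,k,h}}\le\hat\beta$ for $i=1,2$, and combined with Step 1 and the triangle inequality all five $\|\tilde\theta_{k,h}-\theta_h\|$ bounds of the lemma hold at episode $k$. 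To close the induction I verify $|\bar{\mathbb V}_{1,k,h}-\mathbb V_{1,k,h}|\le E_{1,k,h}$ by decomposing this difference as $(\tilde\theta_{k,h}-\theta_h)^\top\phi_{V^2}-\bigl([\tilde\theta_{k,h}^\top\phi_V]^2-[\theta_h^\top\phi_V]^2\bigr)$, bounding the first term by $2\tilde\beta\|\phi_{V^2}\|_{\Lambda_{3,k,h}^{-1}}$ via Cauchy–Schwarz in $\Lambda_{3,k,h}$, and the second by $4H\hat\beta\|\phi_V\|_{\Lambda_{1,k,h}^{-1}}$ via $|a^2-b^2|\le 2H|a-b|$ for $a,b\in[0,H]$ together with Cauchy–Schwarz in $\Lambda_{1,k,h}$; clipping at $H^2$ reproduces exactly $E_{1,k,h}$, and the symmetric argument with indices $3\to 4$, $1\to 2$ handles $i=2$.

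\textbf{Main obstacle.} The crux is the circular coupling between variance-weighted concentration, which requires optimistic variance estimates, and the accuracy of those estimates, which in turn requires parameter concentration; the episode-level induction just sketched is what unknots this loop. A further subtlety is that the regression targets are data-dependent through the optimistic planning step, ruling out the usual covering argument over a function class as in linear MDP analyses; measurability of these targets with respect to $\mathcal F_{t-1}$ is what lets us invoke martingale concentration only at the single point $\theta_h$. The fifth constraint is separated out because $\tilde Y_{k,h+1}$ is constructed from the empirical model $\tilde P_k$ rather than from $P$, and establishing its pointwise $O(H^2)$ bound relies on the empirical-MDP law-of-total-variance identity of Step 1.
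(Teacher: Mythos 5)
Your proposal is correct and uses the same essential ingredients as the paper's proof (triangle inequality through the confidence-set constraints $\|\tilde\theta_{k,h}-\hat\theta_{i,k,h}\|_{\Lambda_{i,k,h}}\le\beta_i$, the self-normalized Bernstein bound with $\|x_t\|_2\le\sqrt d$ from $\bar\sigma\ge H/\sqrt d$, the two-term decomposition of $\bar{\mathbb V}-\mathbb V$ matching $E_{i,k,h}$, and the law of total variance under $\tilde P_k$ to bound $\tilde Y_{k,h}\le H^2$), but it breaks the variance--concentration circularity differently. The paper does a two-pass argument with no induction over episodes: it first applies the Bernstein lemma with the crude almost-sure bound $\mathbb E[\eta_t^2\mid\mathcal G_t]\le d$ to get the weaker radius $\check\beta=O(d\sqrt{\log})$ for $\hat\theta_{1,k,h}$ uniformly in $k$, uses that (together with the $\tilde\beta$ bound for $\hat\theta_{3,k,h}$) to certify $|\bar{\mathbb V}_{1,k,h}-\mathbb V_{1,k,h}|\le E_{1,k,h}$, and only then re-invokes Bernstein with conditional variance $\le1$ to upgrade to $\hat\beta=O(\sqrt{d\log})$; this is precisely why $E_{1,k,h}$ is defined with the larger $\check\beta$ rather than $\hat\beta$. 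You instead run an induction over $k$, assuming variance accuracy at all $t<k$ to get the $\hat\beta$ radius at episode $k$ directly, and close the loop with the variance decomposition (your $4H\hat\beta$ bound on the squared-mean term is smaller than the $4H\check\beta$ appearing in $E_{1,k,h}$, so the stated inequality still follows). Both organizations work; the paper's avoids induction at the cost of an extra intermediate radius, while yours never needs $\check\beta$ for $\hat\theta_{1}$ at all. One shared elision worth noting: in both arguments the refined Bernstein application requires $\mathbb E[\eta_t^2\mid\mathcal G_t]\le1$ only on a high-probability (respectively inductive) event rather than almost surely, which is rigorously handled by multiplying the noise by the $\mathcal G_t$-measurable indicator of $\{\bar\sigma_{1,t,h}^2\ge\mathbb V_{1,t,h}\}$ before invoking the martingale bound; neither your sketch nor the paper spells this out.
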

\begin{proof}
 We firstly prove the first and the third inequality, the second and the fourth inequality can be proved following the same idea. By the definition of $\bar{\mathbb{V}}_{1,k,h}(s_{k,h},a_{k,h})$ and ${\mathbb{V}}_{1,k,h}(s_{k,h},a_{k,h})$:
\begin{align}
\label{inq: high probability events 1}
    &\left|\bar{\mathbb{V}}_{1,k,h}(s_{k,h},a_{k,h}) -\mathbb{V}_{1,k,h}(s_{k,h},a_{k,h})\right| \\
    \leq & \min \left\{H^{2}, 2 H \left\| \tilde{\theta}_{k,h} - \theta_h\right\|_{\Lambda_{1,k,h}}\left\| {\phi}_{\hat{V}^{\pi_k,P_k}_{k,h+1}}\left(s_{k,h}, a_{k,h},R_k\right)\right\|_{(\Lambda_{1,k,h})^{-1}}\right\} \\
    &+\min \left\{H^{2},\left\| \tilde{\theta}_{k,h} - \theta_h\right\|_{\Lambda_{3,k,h}}\left\|  {\phi}_{\left(\hat{V}^{\pi_k,P_k}_{k, h+1}\right)^{2}}\left(s_{k,h}, a_{k,h},R_k\right)\right\|_{(\Lambda_{3,k,h})^{-1}}\right\}.
\end{align}
Let $x_k = (\bar{\sigma}_{1,k,h})^{-1} \phi_{\hat{V}^{\pi_k,P_k}_{k,h+1}}(s_{k,h},a_{k,h})$, and the noise 
$$\eta_k = (\bar{\sigma}_{1,k,h})^{-1} \hat{V}^{\pi_k,P_k}_{k, h+1}\left(s_{k,h+1}, R_k\right)-(\bar{\sigma}_{1,k,h})^{-1}\left\langle\boldsymbol{\phi}_{\hat{V}^{\pi_k,P_k}_{k, h+1}}\left(s_{k,h}, a_{k,h},R_k\right), \theta_{h}\right\rangle.$$ Since $\bar{\sigma}_{1,k,h} \geq \sqrt{H^2/d}$, we have $\|x_k\|_2 \leq \sqrt{d}$, $\mathbb{E}\left[\eta_{k}^{2} \mid \mathcal{G}_{k}\right] \leq d$. By Lemma~\ref{lemma: self-normalized bound, Bernstein}, we have with prob at least $1-\delta/(8H)$, for all $k \leq K$,
\begin{align}
    \left\|\theta_{h}-\hat{{\theta}}_{1,k,h}\right\|_{{{\Lambda}}_{1,k, h}} \leq 16d\sqrt{\log(1+KH^2/(d\lambda))\log(32K^2H/\delta)} + \sqrt{\lambda} B=\check{\beta}.
\end{align}
Similarly, we can prove that 
\begin{align}
   \left\|\theta_{h}-\hat{{\theta}}_{3,k,h}\right\|_{{{\Lambda}}_{3,k, h}} \leq 16H^2\sqrt{d\log(1+KH^4/(d\lambda))\log(32K^2H/\delta)} + \sqrt{\lambda} B=\tilde{\beta}.
\end{align}
Since $\left\|\tilde{{\theta}}_{k,h}-\hat{{\theta}}_{1,k,h}\right\|_{{{\Lambda}}_{1,k, h}} \leq \hat{\beta} \leq \check{\beta}$  by Inq~\ref{inq: tilde theta constraint 1}, we have $ \left\|\theta_{h}-\tilde{{\theta}}_{k,h}\right\|_{{{\Lambda}}_{1,k, h}}\leq 2\check{\beta}$. Similarly, since $\left\|\tilde{{\theta}}_{k,h}-\hat{{\theta}}_{3,k,h}\right\|_{{{\Lambda}}_{3,k, h}} \leq \tilde{\beta} $ by Inq~\ref{inq: tilde theta constraint 3}, we have $ \left\|\theta_{h}-\tilde{{\theta}}_{k,h}\right\|_{{{\Lambda}}_{3,k, h}}\leq 2
\tilde{\beta}$. Plugging the above inequalities back to Inq.~\ref{inq: high probability events 1}, we have
\begin{align}
    \left|\bar{\mathbb{V}}_{1,k,h}(s_{k,h},a_{k,h})-\mathbb{V}_{1,k,h}(s_{k,h},a_{k,h})\right| &\leq E_{1,k,h}.
\end{align}

The above inequality indicates that $\bar{\sigma}_{1,k,h} \geq \mathbb{V}_{1,k,h}$, which is an optimistic estimation. As a result, the variance of the single-step noise $\eta_k = (\bar{\sigma}_{1,k,h})^{-1} \hat{V}^{\pi_k,P_k}_{k, h+1}\left(s_{k,h+1},R_k\right)-(\bar{\sigma}_{ 1,k,h})^{-1}\left\langle\boldsymbol{\phi}_{\hat{V}^{\pi_k,P_k}_{ k,h+1}}\left(s_{k,h}, a_{k,h},R_k\right), {\theta}_{h}\right\rangle$ satisfies $\mathbb{E}\left[\eta_{k}^{2} \mid \mathcal{G}_{k}\right] \leq 1$. By Lemma~\ref{lemma: self-normalized bound, Bernstein}, we can prove a tighter confidence guarantee for $\hat{{\theta}}_{1,k,h}$:
\begin{align}
    \left\|{\theta}_{h}-\hat{{\theta}}_{1,k,h}\right\|_{{{\Lambda}}_{1,k,h}} \leq 16\sqrt{d\log(1+KH^2/(d\lambda))\log(32K^2H/\delta) }+ \sqrt{\lambda} B= \hat{\beta}.
\end{align}
Combining with $\left\|\tilde{{\theta}}_{1,k,h}-\hat{{\theta}}_{1,k,h}\right\|_{{{\Lambda}}_{1,k, h}} \leq \hat{\beta}$ by Inq~\ref{inq: tilde theta constraint 1}, we have $\left\|{\theta}_{h}-\tilde{{\theta}}_{1,k,h}\right\|_{{{\Lambda}}_{1,k,h}} \leq 2\hat{\beta}$.

Now we prove the last inequality in this lemma. Recall that we define 
 \begin{align}
 \tilde{Y}_{k,H+1}(s)& = 0,\\
     \tilde{Y}_{k,h}(s) & = \bar{\mathbb{V}}_{1,k,h}(s,\pi_k(s)) + \tilde{P}_{k,h}\tilde{Y}_{k,h+1}(s,\pi_k(s)).
 \end{align}
 Note that $ \tilde{Y}_{k,h}(s) \leq H^2$ by the law of total variance~\citep{lattimore2012pac,azar2013minimax}. Therefore, the variance on the single-step noise in at most $H^2$. By Lemma~\ref{lemma: self-normalized bound, Bernstein}, we have 
 \begin{align}
      \left\|{\theta}_{h}-\hat{{\theta}}_{5,k,h}\right\|_{{{\Lambda}}_{5,k,h}} \leq \tilde{\beta}.
 \end{align}
 Combining with $\left\|\tilde{\theta}_{h}-\hat{{\theta}}_{5,k,h}\right\|_{{{\Lambda}}_{5,k,h}} \leq \tilde{\beta}$ by Inq~\ref{inq: tilde theta constraint 5}, we have $\left\|{\theta}_{h}-\tilde{{\theta}}_{5,k,h}\right\|_{{{\Lambda}}_{5,k,h}} \leq 2\tilde{\beta}.$
\end{proof}

\begin{lemma}
\label{lemma: martingale difference, Berstein}
With probability at least $1-\delta/2$, we have 
\begin{align}
    \sum_{k=1}^{K}\sum_{h=1}^{H}\left(P_h{V}_{k,h+1}^{\pi_k,\tilde{P}_k}(s_{k,h},a_{k,h},R_k) - {V}_{k,h+1}^{\pi_k,\tilde{P}_k}(s_{k,h+1},a_{k,h+1},R_k) \right) \leq \sqrt{2H^3K \log(8/\delta)},\\
     \sum_{k=1}^{K}\sum_{h=1}^{H}\left(P_h\tilde{Y}_{k,h+1}(s_{k,h},a_{k,h},R_k) - \tilde{Y}_{k,h+1}(s_{k,h+1},a_{k,h+1},R_k) \right) \leq \sqrt{2H^5K \log(8/\delta)}.
\end{align}
\end{lemma}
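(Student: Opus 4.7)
The plan is to derive both inequalities by a direct application of the Azuma--Hoeffding inequality to the appropriate martingale difference sequence, after verifying (i) the correct martingale structure and (ii) a deterministic sup-norm bound on each summand. The two inequalities differ only in this sup-norm bound, so the argument runs in parallel.

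First I would fix the filtration $\mathcal{F}_{k,h} = \sigma\bigl(\{(s_{\tau,h'}, a_{\tau,h'})\}_{\tau < k,\, h' \in [H]} \cup \{(s_{k,h'}, a_{k,h'})\}_{h' \le h}\bigr)$ and order indices lexicographically in $(k,h)$. The key structural observation is that the maximisers $(\pi_k, \tilde{\theta}_k, R_k)$ are computed at the start of episode $k$ from data available in $\mathcal{F}_{k,1}$, so both $V_{k,h+1}^{\pi_k, \tilde{P}_k}(\cdot, R_k)$ and $\tilde{Y}_{k,h+1}(\cdot)$ are $\mathcal{F}_{k,h}$-measurable functions of state. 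Since $s_{k,h+1} \sim P_h(\cdot \mid s_{k,h}, a_{k,h})$, the conditional expectations $\mathbb{E}[V_{k,h+1}^{\pi_k,\tilde{P}_k}(s_{k,h+1}, R_k) \mid \mathcal{F}_{k,h}]$ and $\mathbb{E}[\tilde{Y}_{k,h+1}(s_{k,h+1}) \mid \mathcal{F}_{k,h}]$ equal the corresponding $P_h$-averaged values at $(s_{k,h}, a_{k,h})$, so each of the two target sums is a sum of $KH$ martingale differences with respect to this filtration.

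Next I would establish the deterministic per-term bounds. For the first sum, the truncation at $H$ built into the recursion (Eqn~\ref{eqn: definition of tildeV}) gives $V_{k,h+1}^{\pi_k, \tilde{P}_k}(s, R_k) \in [0, H]$, so each summand lies in $[-H, H]$, and Azuma--Hoeffding on $KH$ terms with range $2H$ delivers the target $\sqrt{2H^3 K \log(8/\delta)}$ with failure probability at most $\delta/4$. The harder part is the second sum, where I need $\tilde{Y}_{k,h+1}(s) \le H^2$ for every $k,h,s$. This is the law-of-total-variance fact already used in Lemma~\ref{lemma: high probability events}: $\tilde{Y}_{k,h}(s)$ is, by its very recursion, the expected sum along the $\tilde{P}_k$-trajectory of the one-step variances $\bar{\mathbb{V}}_{1,k,h}$ of the value function $\hat{V}_{k,h+1}^{\pi_k,\tilde{P}_k}(\cdot, R_k) \in [0,H]$, and the law of total variance bounds any such cumulative sum by the total variance of the corresponding cumulative reward, which is at most $H^2$. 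With this ceiling, each of the $KH$ summands in the second sum lies in $[-H^2, H^2]$, and Azuma--Hoeffding yields $\sqrt{2H^5 K \log(8/\delta)}$ with failure probability at most $\delta/4$. A union bound then produces the joint statement at level $\delta/2$.

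I anticipate the main obstacle to be the verification of the $\tilde{Y}_{k,h+1}(s) \le H^2$ ceiling rather than a loose $O(H^3)$ bound obtained from $\bar{\mathbb{V}}_{1,k,h} \le H^2$ per step. The subtlety is that $\bar{\mathbb{V}}_{1,k,h}$ is the empirical variance of the learner's value function with respect to $\tilde{P}_k$, while $\tilde{Y}$ aggregates these variances along $\tilde{P}_k$-trajectories; because both the transition kernel used to propagate $\tilde{Y}$ and the kernel in the variance definition are the \emph{same} $\tilde{P}_k$, the law-of-total-variance identity applies verbatim and delivers the clean $H^2$ bound. Once this observation is in place, both inequalities drop out of the standard MDS-Hoeffding template.
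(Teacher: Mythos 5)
Your proof is correct and is exactly the argument the paper intends: its proof of this lemma is the one-line ``follows directly by Azuma's inequality for martingale difference sequences and union bound,'' and your write-up simply fills in the implicit details --- the episode-wise filtration under which $(\pi_k,\tilde{\theta}_k,R_k)$ and hence $V_{k,h+1}^{\pi_k,\tilde{P}_k}$ and $\tilde{Y}_{k,h+1}$ are predictable, the range bound $V_{k,h+1}^{\pi_k,\tilde{P}_k}\in[0,H]$ from the truncation in Eqn~\ref{eqn: definition of tildeV}, and the bound $\tilde{Y}_{k,h}(s)\le H^2$ via the law of total variance (which the paper itself invokes in the proof of Lemma~\ref{lemma: high probability events}, and which holds verbatim here because the kernel $\tilde{P}_k$ propagating $\tilde{Y}$ matches the kernel defining $\bar{\mathbb{V}}_{1,k,h}$ and $\hat{V}_{k,h+1}^{\pi_k,\tilde{P}_k}$, as you correctly flag). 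No gaps; your constants even come out slightly sharper than needed for the stated $\log(8/\delta)$ bound.
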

\begin{proof}
 This lemma follows directly by Azuma's inequality for martingale difference sequence and union bound.
\end{proof}

During the following analysis, we denote the high-probability events defined in Lemma~\ref{lemma: high probability events} and Lemma~\ref{lemma: martingale difference, Berstein} as $\mathcal{E}_2$.

\subsection{Proof of Theorem~\ref{theorem: new main}}

\begin{lemma}
\label{lemma: new upper bound of sum V star}
Under event $\mathcal{E}_2$, we have 
\begin{align}
    &\sum_{k=1}^{K} V_{k,1}^{\pi_k,\tilde{P}_k}(s_1, R_k)\\ 
    \leq & O\left(\sqrt{(dH^4+d^2H^3)K \log(KH/\delta)\log^2(KH^4B^2)}) + (d^{2.5}H^2 + d^2H^3)  \log(KH/\delta)\log^2(KH^4B^2)\right). 
\end{align}
\end{lemma}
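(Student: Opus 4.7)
The plan is to unroll the recursive definition of $V_{k,1}^{\pi_k,\tilde{P}_k}(s_1,R_k)$ step by step using the Bellman recursion (Eqn.~\ref{eqn: definition of tildeV}) to express it, up to clipping at $H$, as an expectation under the simulated dynamics $(\pi_k,\tilde{P}_k)$ of the sum of the two bonus terms $u_{1,k,h}^{\pi_k,\tilde{P}_k}+u_{2,k,h}^{\pi_k,\tilde{P}_k}$ along the trajectory. The first step is then to transfer this from an expectation under $\tilde{P}_k$ to an expectation under the true transition $P$: each single-step mismatch is controlled via $|(\tilde{P}_{k,h}-P_h)V_{k,h+1}^{\pi_k,\tilde{P}_k}|\le \|\tilde{\theta}_{k,h}-\theta_h\|_{\Lambda_{2,k,h}}\cdot \|\phi_{V_{k,h+1}^{\pi_k,\tilde{P}_k}}\|_{(\Lambda_{2,k,h})^{-1}}\le 2\hat{\beta}\|\phi_{V_{k,h+1}^{\pi_k,\tilde{P}_k}}\|_{(\Lambda_{2,k,h})^{-1}}$ by Lemma~\ref{lemma: high probability events}, and these extra terms absorb into the bonus sum. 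A martingale argument (Lemma~\ref{lemma: martingale difference, Berstein}) then converts the resulting expected sum under $(\pi_k, P)$ into the realized sum along the trajectory $\{s_{k,h},a_{k,h}\}_{h=1}^H$ actually observed in episode $k$, at the cost of an $\tilde{O}(\sqrt{H^3K})$ deviation.

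After this reduction, it suffices to bound $\sum_{k,h}[u_{1,k,h}+u_{2,k,h}](s_{k,h},a_{k,h},R_k)$. Each summand has the form $\hat{\beta}\|\phi_V(s_{k,h},a_{k,h})\|_{(\Lambda_{i,k,h})^{-1}}$ for $i\in\{1,2\}$ and $V\in\{\hat{V}_{k,h+1}^{\pi_k,\tilde{P}_k},\,V_{k,h+1}^{\pi_k,\tilde{P}_k}\}$. Inserting the normalizing factor $\bar{\sigma}_{i,k,h}$ and applying Cauchy--Schwarz, I would split
\begin{align*}
\sum_{k,h}\hat{\beta}\bigl\|\phi_V\bigr\|_{(\Lambda_{i,k,h})^{-1}} \le \hat{\beta}\Bigl(\sum_{k,h}\bar{\sigma}_{i,k,h}^{\,2}\Bigr)^{1/2}\Bigl(\sum_{k,h}\bar{\sigma}_{i,k,h}^{-2}\min\bigl\{1,\bigl\|\phi_V\bigr\|^{2}_{(\Lambda_{i,k,h})^{-1}}\bigr\}\Bigr)^{1/2}.
\end{align*}
The second factor is the standard elliptical-potential sum and is $\tilde{O}(\sqrt{dH})$ by Lemma~\ref{lemma: data accumulation}. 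The first factor, the variance-proxy sum, is the source of the $H$-saving: under Hoeffding one would only have $\bar{\sigma}\le H$ and hence a $\sqrt{H^4K}$ contribution, whereas here the algorithm uses a data-driven Bernstein-style $\bar{\sigma}^2$ that should be shown to aggregate to $\tilde{O}(H^2K/d+H^2K)$.

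To bound $\sum_{k,h}\bar{\sigma}_{1,k,h}^{2}$, I would invoke the law of total variance, which for a fixed policy and value function along a single trajectory gives $\sum_h \mathbb{V}_{1,k,h}(s_h,a_h)\lesssim H^2$ in expectation under $(\pi_k,P)$. The subtlety is that the variance on the right-hand side is (i) taken with respect to an optimistic, episode-dependent value function $\hat{V}_{k,h+1}^{\pi_k,\tilde{P}_k}$, and (ii) estimated empirically on $\tilde{P}_k$ rather than $P$. This is precisely why Algorithm~\ref{alg: exploration phase, Bernstein} carries the auxiliary ``variance value function'' $\tilde Y_{k,h}$ together with the fifth constraint~\ref{inq: tilde theta constraint 5}: it allows $\sum_h\bar{\mathbb{V}}_{1,k,h}(s_{k,h},a_{k,h})$ to be replaced, modulo an $\tilde\beta\|\phi_{\tilde Y_{k,h+1}}\|_{(\Lambda_{5,k,h})^{-1}}$ per-step error and a second Azuma correction from Lemma~\ref{lemma: martingale difference, Berstein}, by $\tilde Y_{k,1}(s_1)\le H^2$. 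Summing the bonus/confidence terms $E_{1,k,h}$ and $E_{2,k,h}$ against a second application of the elliptical-potential lemma gives $\sum_{k,h}\bar{\sigma}_{1,k,h}^{2}=\tilde{O}(H^2K/d+H^2K+\text{lower order})$, whence the claimed leading term $\tilde{O}(\sqrt{(dH^4+d^2H^3)K})$ emerges after recombining with the Cauchy--Schwarz factors.

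The principal obstacle, in my view, is precisely the variance-sum step: one has to make the Bernstein-type variance control that is standard for online regret minimization in linear mixture MDPs (as in \cite{zhou2020nearly}) go through in the reward-free setting, where the value function whose variance is controlled is itself optimistic and changes with $k$, and where the trajectory distribution governing the law of total variance is the true $P$ while the variance is estimated from $\tilde{P}_k$. The extra estimator $\hat\theta_{5,k,h}$ and constraint~\ref{inq: tilde theta constraint 5} are tailored exactly to close this gap, and the $(d^{2.5}H^2+d^2H^3)$ lower-order term in the statement is the accumulated cost of these approximations: one $\sqrt{d}$ from the elliptical-potential bound on the $E_{i,k,h}$-contributions, another from Cauchy--Schwarz, multiplied by the wider confidence radius $\tilde\beta=\tilde O(H^2\sqrt{d})$ needed for the squared-value estimators $\hat\theta_{3,k,h},\hat\theta_{4,k,h},\hat\theta_{5,k,h}$.
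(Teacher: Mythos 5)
Your overall architecture matches the paper's proof for the first bonus term: the decomposition of $V_{k,1}^{\pi_k,\tilde P_k}$ into the two bonus sums plus a martingale correction (with the $\tilde P_k$-to-$P$ mismatch absorbed into a second copy of $u_2$), the variance-weighted Cauchy--Schwarz against the elliptical potential, and the control of $\sum_{k,h}\bar{\mathbb{V}}_{1,k,h}$ via the auxiliary function $\tilde Y_{k,h}$ together with the fifth constraint are all exactly what the paper does.

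The gap is in the $u_2$ branch. Your plan applies the law of total variance uniformly to both $\bar\sigma_{1,k,h}^2$ and $\bar\sigma_{2,k,h}^2$, but the $H^2$-per-episode total-variance bound is only available for $\bar{\mathbb{V}}_{1,k,h}$, because $\hat V^{\pi_k,\tilde P_k}_{k,h+1}$ is the genuine value function of the MDP $(\tilde P_k,R_k)$ with rewards in $[0,1]$ and hence cumulative reward at most $H$. The second bonus is built from the optimistic exploration value ${V}^{\pi_k,\tilde P_k}_{k,h+1}$, whose per-step ``rewards'' are the bonuses $u_1+u_2$ themselves; the cumulative reward along a trajectory is not bounded by $H$ (only each clipped value is), so $\sum_h\mathbb{V}_{2,k,h}\lesssim H^2$ does not follow and no analogue of $\tilde Y_{k,h}\le H^2$ exists for this branch. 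The paper instead bounds $\bar{\mathbb{V}}_{2,k,h}\le \mathbb{E}_{\tilde P_k}[(V_{k,h+1})^2]\le H\,\mathbb{E}_{\tilde P_k}[V_{k,h+1}]$, which makes the bound on $\sum_{k,h}u_{2,k,h}$ self-referential: after an AM--GM step it reappears as $\frac{1}{cH}\sum_{k,h}V^{\pi_k,\tilde P_k}_{k,h+1}(s_{k,h+1},R_k)$, and the argument is closed by a recursion over $h$ of the form $a_h\le G+\frac1H\sum_{h_1= h}^{H}a_{h_1}$, solved to $a_1\le eG$. This self-bounding step is the missing idea; without it (or some substitute) the variance-sum bound for $\bar\sigma_{2,k,h}^2$ does not close. (A minor arithmetic point: the floor $\bar\sigma^2\ge H^2/d$ contributes $H^3K/d$ to $\sum_{k,h}\bar\sigma^2$, not $H^2K/d$; this does not affect the final leading term.)
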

This lemma can be proved with the technique for regret analysis. We will explain it in Appendix~\ref{appendix: Berstein regret lemma}.

\begin{lemma}
\label{lemma: optimism, berstein case}
(Optimism) Under event $\mathcal{E}_2$, for any $h \in [H], k \in [K],s \in \mathcal{S}$, $\pi \in \Pi$, $R$ and $\tilde{P}$ satisfying $\tilde{P}_h = \tilde{\theta}_{h}^{\top} \phi$, we have
\begin{align}
    {V}_{k,h}^{\pi,\tilde{P}}(s,R) \geq \tilde{V}_{k,h}^{\pi,\tilde{P}}(s,R).
\end{align}
\end{lemma}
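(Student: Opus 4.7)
I would prove this by backward induction on $h$, fixing an arbitrary $\pi$, $R$, and $\tilde{P}$ with $\tilde{P}_h = \tilde{\theta}_h^\top \phi$ and $\tilde{\theta}_h$ in the confidence set used to make $u_{1,k,h}, u_{2,k,h}$ meaningful. The base case $h = H+1$ is immediate since both sides equal $0$.

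For the inductive step, assume ${V}_{k,h+1}^{\pi,\tilde{P}}(s',R) \geq \tilde{V}_{k,h+1}^{\pi,\tilde{P}}(s',R)$ for all $s'$. The inequality is trivial when $V_{k,h}^{\pi,\tilde{P}}(s,R) = H$, because $\tilde{V}_{k,h}^{\pi,\tilde{P}}(s,R)$ is capped at $H$ by construction. In the other case we can drop the outer $\min$ on the left, write $a = \pi_h(s)$, and bound
\begin{align*}
&{V}_{k,h}^{\pi,\tilde{P}}(s,R) - \tilde{V}_{k,h}^{\pi,\tilde{P}}(s,R) \\
&\quad\geq u_{2,k,h}^{\pi,\tilde{P}}(s,a,R) + \tilde{P}_h {V}_{k,h+1}^{\pi,\tilde{P}}(s,a,R) - P_h \tilde{V}_{k,h+1}^{\pi,\tilde{P}}(s,a,R),
\end{align*}
where the $u_{1,k,h}^{\pi,\tilde{P}}$ terms cancel exactly. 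Since $P_h(\cdot|s,a)$ is a probability distribution, the induction hypothesis gives $P_h \tilde{V}_{k,h+1}^{\pi,\tilde{P}}(s,a,R) \leq P_h {V}_{k,h+1}^{\pi,\tilde{P}}(s,a,R)$, so it suffices to show
\[
u_{2,k,h}^{\pi,\tilde{P}}(s,a,R) + (\tilde{P}_h - P_h) {V}_{k,h+1}^{\pi,\tilde{P}}(s,a,R) \geq 0.
\]

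For the model-error term, I would write $(\tilde{P}_h - P_h){V}_{k,h+1}^{\pi,\tilde{P}}(s,a,R) = (\tilde{\theta}_h - \theta_h)^\top \phi_{{V}_{k,h+1}^{\pi,\tilde{P}}}(s,a,R)$ and apply Cauchy--Schwarz in the $\Lambda_{2,k,h}$-norm. Under event $\mathcal{E}_2$, Lemma~\ref{lemma: high probability events} yields $\|\tilde{\theta}_h - \theta_h\|_{\Lambda_{2,k,h}} \leq 2\hat{\beta}$, which combined with the definition of $u_{2,k,h}^{\pi,\tilde{P}}$ (with radius $\hat{\beta}$) controls the error up to the constant factor absorbed in the choice of $\hat{\beta}$ in the algorithm statement. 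This closes the induction.

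The main obstacle, and what requires care, is the mismatch that $\tilde{V}$ is defined with the \emph{true} kernel $P$ while $V$ is defined with the \emph{estimated} kernel $\tilde{P}$. This forces us to use the induction hypothesis twice: once to pass from $P_h \tilde{V}_{k,h+1}^{\pi,\tilde{P}}$ to $P_h V_{k,h+1}^{\pi,\tilde{P}}$ (monotonicity in the integrand, using that $P_h$ is nonnegative), and then to bound the remaining $(\tilde{P}_h - P_h) V_{k,h+1}^{\pi,\tilde{P}}$ by the bonus $u_{2,k,h}^{\pi,\tilde{P}}$, which is precisely why the algorithm introduces a separate $u_{2,k,h}^{\pi,\tilde{P}}$ term built from $V_{k,h+1}^{\pi,\tilde{P}}$ rather than from $\hat{V}_{k,h+1}^{\pi,\tilde{P}}$. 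A secondary subtlety is ensuring the $\min\{\cdot,H\}$ truncation does not break monotonicity: this is where splitting on whether $V_{k,h}^{\pi,\tilde{P}}(s,R) = H$ becomes important.
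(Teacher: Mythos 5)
Your proposal is correct and takes essentially the same route as the paper: backward induction, decomposing the gap into the bonus $u_{2,k,h}^{\pi,\tilde P}$, the model-error term $(\tilde P_h - P_h)V_{k,h+1}^{\pi,\tilde P}$, and the inductive term $P_h\bigl(V_{k,h+1}^{\pi,\tilde P}-\tilde V_{k,h+1}^{\pi,\tilde P}\bigr)$, with the model error controlled by Cauchy--Schwarz in the $\Lambda_{2,k,h}$-norm via Lemma~\ref{lemma: high probability events}. You are in fact slightly more careful than the paper about the $\min\{\cdot,H\}$ truncation, and the factor-of-two slack you flag (bonus radius $\hat\beta$ versus the event bound $2\hat\beta$ on $\|\tilde\theta_h-\theta_h\|_{\Lambda_{2,k,h}}$) is equally present in the paper's own proof.
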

\begin{proof}
 This lemma is proved by induction. Suppose ${V}_{k,h+1}^{\pi,\tilde{P}}(s,R) \geq \tilde{V}_{k,h+1}^{\pi,\tilde{P}}(s,R)$, we have
 \begin{align}
     & {V}_{k,h}^{\pi,\tilde{P}}(s,R) - \tilde{V}_{k,h}^{\pi,\tilde{P}}(s,R) \\
     \geq & u_{2,k,h}^{\pi,\tilde{P}}(s,\pi(s),R) + (\tilde{P}_{k,h} - P_h) {V}_{k,h+1}^{\pi,\tilde{P}}(s,\pi(s),R) + {P}_h \left({V}_{k,h+1}^{\pi,\tilde{P}} - \tilde{V}_{k,h+1}^{\pi,\tilde{P}}\right)(s,\pi(s),R) \\
     \geq &u_{2,k,h}^{\pi,\tilde{P}}(s,\pi(s),R) + (\tilde{P}_{k,h} - P_h) {V}_{k,h+1}^{\pi,\tilde{P}}(s,\pi(s),R) \\
     \geq &u_{2,k,h}^{\pi,\tilde{P}}(s,\pi(s),R) - \left\|\tilde{\theta}_{k,h} - \theta_h\right\|_{\Lambda_{2,k,h}} \left\|\sum_{s'} \phi(s,\pi(s),s') {V}_{k,h+1}^{\pi, \tilde{P}}(s',R) \right\|_{(\Lambda_{2,k,h})^{-1}} \\
     \geq &0.
 \end{align}
 This indicates that ${V}_{k,h}^{\pi,\tilde{P}}(s,R) \geq \tilde{V}_{k,h}^{\pi,\tilde{P}}(s,R)$ holds for step $h$.
\end{proof}

\begin{lemma}
\label{lemma: regret summation to the last step}
under event $\mathcal{E}_2$, we have ${V}_{K,1}^{\pi_K, \tilde{P}_K}(s_1,R_K) \leq \frac{\sum_{k=1}^{K}V_{k,1}^{\pi_k,\tilde{P}_k}(s_1,R_k)}{K}$.
\end{lemma}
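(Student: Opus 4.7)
The plan is to reduce the averaging inequality to a monotonicity statement and then establish that monotonicity. Set $g_{k} := V_{k,1}^{\pi_{k},\tilde{P}_{k}}(s_{1}, R_{k})$. The lemma asserts $g_{K} \leq (1/K) \sum_{k=1}^{K} g_{k}$, which follows at once if I can show that $g_{k}$ is non-increasing in $k$, since iterating $g_{k+1} \leq g_{k}$ yields $K \cdot g_{K} \leq \sum_{k=1}^{K} g_{k}$. So the task reduces to proving $g_{k+1} \leq g_{k}$.

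Two ingredients are to be combined. First, by construction of Algorithm~\ref{alg: exploration phase, Bernstein}, one has $\mathcal{U}_{k+1,h} \subseteq \mathcal{U}_{k,h}$, because each $\mathcal{U}_{k+1,h}$ is obtained from $\mathcal{U}_{k,h}$ by intersecting it with the newly imposed elliptical constraints (Eqns.~\ref{inq: tilde theta constraint 1}--\ref{inq: tilde theta constraint 5}). In particular, the maximizer $(\pi_{k+1}, \tilde{P}_{k+1}, R_{k+1})$ selected at episode $k+1$ remains feasible at episode $k$. Second, for any fixed triple $(\pi, \tilde{P}, R)$ feasible at both episodes, I will show the pointwise decrease $V_{k+1,h}^{\pi,\tilde{P}}(s,R) \leq V_{k,h}^{\pi,\tilde{P}}(s,R)$ for every $h$ and $s$. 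Combining these,
\[
g_{k+1} = V_{k+1,1}^{\pi_{k+1},\tilde{P}_{k+1}}(s_{1},R_{k+1}) \leq V_{k,1}^{\pi_{k+1},\tilde{P}_{k+1}}(s_{1},R_{k+1}) \leq \max_{\pi,\,\tilde{\theta}_{h}\in\mathcal{U}_{k,h},\,R} V_{k,1}^{\pi,\tilde{P}}(s_{1},R) = g_{k},
\]
which yields the required monotonicity.

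The pointwise decrease is proved by backward induction on $h$, with $h=H+1$ trivial since both sides are zero. For the inductive step, the three pre-truncation contributions to $V_{k,h}^{\pi,\tilde{P}}$ in Eqn.~\ref{eqn: definition of tildeV} are compared at $k$ and $k+1$: the covariance updates $\Lambda_{i,k+1,h} = \Lambda_{i,k,h} + (\text{PSD rank-one})$ give $\Lambda_{i,k+1,h}^{-1} \preceq \Lambda_{i,k,h}^{-1}$ in the Loewner order; the term $u_{1,k,h}^{\pi,\tilde{P}}$ involves $\hat{V}_{h+1}^{\pi,\tilde{P}}$ which is $k$-independent, so it shrinks by matrix monotonicity alone; the propagation $\tilde{P}_{h}V_{k,h+1}^{\pi,\tilde{P}}$ decreases by the inductive hypothesis together with non-negativity of the kernel $\tilde{P}_{h}$; and the bonus $u_{2,k,h}^{\pi,\tilde{P}}$ is shown to decrease by combining the matrix shrinking with the inductive drop of $V_{k,h+1}^{\pi,\tilde{P}}$ inside $\phi_{V}$. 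The outer $\min\{\cdot,H\}$ preserves the inequality since $\min$ is monotone, closing the induction.

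The main obstacle is the joint treatment of the $u_{2,k,h}^{\pi,\tilde{P}}$ term, which depends on $k$ simultaneously through the weighting matrix $\Lambda_{2,k,h}^{-1}$ and through the value function $V_{k,h+1}^{\pi,\tilde{P}}$ entering the feature $\phi_{V}$. Handling both dependencies at once requires carefully exploiting the linearity of $V \mapsto \phi_{V}$ and the fact that the inductive hypothesis supplies $V_{k+1,h+1}^{\pi,\tilde{P}} \leq V_{k,h+1}^{\pi,\tilde{P}}$; the candidate-swap half of the argument is then routine. Everything else (set inclusion of the $\mathcal{U}_{k,h}$'s, Loewner monotonicity, non-negativity of $\tilde{P}_{h}$) is book-keeping.
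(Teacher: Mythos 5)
Your proposal is correct and follows essentially the same route as the paper's proof: reduce the averaging claim to monotonicity of $g_k$, establish pointwise monotonicity of ${V}_{k,h}^{\pi,\tilde{P}}(s,R)$ in $k$ by backward induction using the Loewner ordering $\Lambda_{i,k,h} \preccurlyeq \Lambda_{i,k+1,h}$, and use the nesting $\mathcal{U}_{k+1,h}\subseteq\mathcal{U}_{k,h}$ together with the optimality of $(\pi_k,\tilde{\theta}_k,R_k)$ to transfer the inequality across the maximizers. Even the one delicate step you flag --- the simultaneous $k$-dependence of $u_{2,k,h}^{\pi,\tilde{P}}$ through $\Lambda_{2,k,h}^{-1}$ and through ${V}_{k,h+1}^{\pi,\tilde{P}}$ inside $\phi_V$ --- is asserted at the same level of detail in the paper's own argument.
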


\begin{proof}
Firstly, we prove that ${V}_{k,1}^{\pi, \tilde{P}}(s,R)$ is non-increasing w.r.t $k$ for any fixed $\pi, \tilde{P}, s$ and $R$. This can be proved by induction. Suppose for any $k_1 \leq k_2$, ${V}_{k_1,h+1}^{\pi, \tilde{P}}(s,R) \geq {V}_{k_2,h+1}^{\pi, \tilde{P}}(s,R)$ for any $s$. Recall that 
\begin{align}
    {V}^{\pi,\tilde{P}}_{k,h}(s, R) &= \min\left\{ u_{1,k,h}^{\pi,\tilde{P}}(s,\pi_h(s), R) + u_{2,k,h}^{\pi,\tilde{P}}(s,\pi_h(s), R)+ \tilde{P}_h {V}^{\pi,\tilde{P}}_{k,h+1}(s,\pi_h(s),R),H\right\}.
\end{align}
Since $\Lambda_{1,k_1,h} \preccurlyeq \Lambda_{1,k_2,h}$, $\Lambda_{2,k_1,h} \preccurlyeq \Lambda_{2,k_2,h}$ and ${V}_{k_1,h+1}^{\pi, \tilde{P}}(s,R) \geq {V}_{k_2,h+1}^{\pi, \tilde{P}}(s,R)$ for any $s$, we can prove that 
\begin{align}
    u_{1,k_1,h}^{\pi,\tilde{P}}(s,\pi_h(s), R) &\geq u_{1,k_2,h}^{\pi,\tilde{P}}(s,\pi_h(s), R),\\
    u_{2,k_1,h}^{\pi,\tilde{P}}(s,\pi_h(s), R) & \geq u_{2,k_2,h}^{\pi,\tilde{P}}(s,\pi_h(s), R),\\
    \tilde{P}_h {V}^{\pi,\tilde{P}}_{k_1,h+1}(s,\pi_h(s),R) &\geq \tilde{P}_h {V}^{\pi,\tilde{P}}_{k_2,h+1}(s,\pi_h(s),R).
\end{align}
Therefore, ${V}_{k_1,h}^{\pi, \tilde{P}}(s,R) \geq {V}_{k_2,h}^{\pi, \tilde{P}}(s,R)$ holds for step $h$ and any $s$.

 Since the cardinality of the transition set $\mathcal{U}_{k,h}$ is non-increasing (We add more constraints in each episode), we know that $\tilde{\theta}_{k_2,h} \in \mathcal{U}_{k_1,h}$. By the optimality of $\pi_{k_1}, \tilde{\theta}_{k_1}$ and $R_{k_1}$ in episode $k_1$, we have ${V}_{k_1,1}^{\pi_{k_1}, \tilde{P}_{k_1}}(s_1,R_{k_1}) \geq {V}_{k_1,1}^{\pi_{k_2}, \tilde{P}_{k_2}}(s_1,R_{k_2})$. 
 
 Combining the above two inequalities, we have 
 \begin{align}
     {V}_{k_1,1}^{\pi_{k_1}, \tilde{P}_{k_1}}(s_1,R_{k,1}) \geq {V}_{k_1,1}^{\pi_{k_2}, \tilde{P}_{k_2}}(s_1,R_{k_2}) \geq {V}_{k_2,1}^{\pi_{k_2}, \tilde{P}_{k_2}}(s_1,R_{k_2}).
 \end{align}
This indicates that the value ${V}_{k,1}^{\pi_k, \tilde{P}_k}(s_1,R_k)$ is non-increasing w.r.t. $k$. Therefore, 
 \begin{align}
     K {V}_{K,1}^{\pi_K, \tilde{P}_K}(s_1,R_K) \leq \sum_{k=1}^{K} {V}_{k,1}^{\pi_k, \tilde{P}_k}(s_1,R_k).
 \end{align}
 
\end{proof}

\begin{lemma}
\label{lemma: sub-optimality gap during the planning phase} 
Under event $\mathcal{E}_2$, the sub-optimality gap of the policy $\hat{\pi}_R$ returned in the planning phase for any reward function $R$ can be bounded by
\begin{align}
    V^*_1(s_1,R) - V_1^{\hat{\pi}_R}(s_1,R)  & \leq 4 {V}^{\pi_K, \tilde{P}_{K}}_{K,1}(s_1,R_K) + \epsilon_{\rm opt}.
\end{align}
\end{lemma}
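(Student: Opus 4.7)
The strategy parallels the proof of Lemma~\ref{lemma: sub-optimality gap} but replaces the uniform uncertainty measure $\tilde{V}^*_1(s_1,R_K)$ by the policy/model/reward-specific quantity $V^{\pi,\tilde{P}_K}_{K,1}(s_1,R)$ and then collapses it onto $V^{\pi_K,\tilde{P}_K}_{K,1}(s_1,R_K)$ via the argmax defining $(\pi_K,\tilde\theta_K,R_K)$. The plan is to start from the standard three-term decomposition
\begin{align*}
V^*_1(s_1,R)-V_1^{\hat\pi_R}(s_1,R)
&= \bigl(V^{\pi^*_R}_1(s_1,R)-\hat V^{\pi^*_R,\tilde P_K}_1(s_1,R)\bigr)\\
&\quad + \bigl(\hat V^{\pi^*_R,\tilde P_K}_1(s_1,R)-\hat V^{\hat\pi_R,\tilde P_K}_1(s_1,R)\bigr)\\
&\quad + \bigl(\hat V^{\hat\pi_R,\tilde P_K}_1(s_1,R)-V^{\hat\pi_R}_1(s_1,R)\bigr),
\end{align*}
in which the middle term is $\le\epsilon_{\rm opt}$ by the definition of the plug-in solver. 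It then suffices to bound $|\hat V^{\pi,\tilde P_K}_1(s_1,R)-V^\pi_1(s_1,R)|$ by $2\,V^{\pi,\tilde P_K}_{K,1}(s_1,R)$ for an arbitrary policy $\pi$.

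To get that bound, I would unroll the value difference with the same capped-sum identity used in Lemma~\ref{lemma: sub-optimality gap}, writing
\[
\hat V^{\pi,\tilde P_K}_1(s_1,R)-V^\pi_1(s_1,R)
= \mathbb{E}_{\mathrm{traj}\sim(\pi,P)}\,W_1\!\left(\bigl\{(\tilde P_{K,h}-P_h)\hat V^{\pi,\tilde P_K}_{h+1}(s_h,a_h,R)\bigr\}\right).
\]
Apply Cauchy--Schwarz in the $\Lambda_{1,K,h}$-norm and invoke Lemma~\ref{lemma: high probability events} (which gives $\|\tilde\theta_{K,h}-\theta_h\|_{\Lambda_{1,K,h}}\le 2\hat\beta$ on $\mathcal{E}_2$, since $\tilde\theta_{K,h}\in\mathcal{U}_{K,h}$) to control each one-step error by $2u^{\pi,\tilde P_K}_{1,K,h}(s_h,a_h,R)$. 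Pushing the expectation inside the min via Jensen's inequality (min is concave) and a short induction on $h$ then yields $\mathbb{E}_{(\pi,P)}W_1(\{2u^{\pi,\tilde P_K}_{1,K,h}\})\le 2\tilde V^{\pi,\tilde P_K}_{K,1}(s_1,R)$, using the recursion defining $\tilde V^{\pi,\tilde P}_{k,h}$ in Eqn.~\eqref{eqn:value function with true transition and exploration reward} together with the observation that $H\wedge 2x\le 2(H\wedge x)$ when $x\ge 0$.

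With this inequality in hand, I would finish in two steps. First, apply Lemma~\ref{lemma: optimism, berstein case} to pass from the hidden $\tilde V^{\pi,\tilde P_K}_{K,1}$ to its optimistic surrogate $V^{\pi,\tilde P_K}_{K,1}$. Second, use the defining argmax in Algorithm~\ref{alg: exploration phase, Bernstein}: since $\tilde\theta_{K,h}\in\mathcal{U}_{K,h}$, the triple $(\pi,\tilde P_K,R)$ is feasible for the episode-$K$ optimization for any policy $\pi$ and reward $R$, hence $V^{\pi,\tilde P_K}_{K,1}(s_1,R)\le V^{\pi_K,\tilde P_K}_{K,1}(s_1,R_K)$. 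Applying this to both $\pi=\pi^*_R$ and $\pi=\hat\pi_R$ gives two summands each bounded by $2V^{\pi_K,\tilde P_K}_{K,1}(s_1,R_K)$, combining with the $\epsilon_{\rm opt}$ from the planner to produce the claim.

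The main obstacle I anticipate is the bookkeeping around the cap at $H$: one must be careful that the factor of $2$ from $\|\tilde\theta_{K,h}-\theta_h\|_{\Lambda_{1,K,h}}\le 2\hat\beta$ passes cleanly through the recursion defining $\tilde V^{\pi,\tilde P}_{K,\cdot}$ despite the $\min\{\cdot,H\}$ truncations, so that the final constant is exactly $4$ rather than, say, $4$ with an additional $H$-dependent slack. Everything else is a routine adaptation of the Hoeffding-case argument, now parameterized by the policy/transition/reward rather than maximized over them inside the bonus.
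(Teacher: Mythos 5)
Your proposal matches the paper's proof essentially step for step: the same three-term decomposition with the plug-in solver absorbing $\epsilon_{\rm opt}$, the same capped-sum unrolling of $\hat V^{\pi,\tilde P_K}_1 - V^\pi_1$ via Cauchy--Schwarz and the $2\hat\beta$ confidence bound to reach $2\tilde V^{\pi,\tilde P_K}_{K,1}(s_1,R)$, then optimism (Lemma~\ref{lemma: optimism, berstein case}) and the episode-$K$ argmax to collapse onto $2V^{\pi_K,\tilde P_K}_{K,1}(s_1,R_K)$ for each of the two error terms. The subtlety you flag about pulling the factor of $2$ through the $\min\{\cdot,H\}$ caps is real but handled exactly as you describe ($H\wedge 2x\le 2(H\wedge x)$), so the constant $4$ comes out cleanly.
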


\begin{proof}
Recall that we use $\hat{V}^{\pi}_h$ to denote the value function of policy $\pi$ on the estimated model $\tilde{\theta}_{K}$.
\begin{align}
    \label{inq: sub-optimality gap 1}
     & V^*_1(s_1,R) - V_1^{\hat{\pi}_R}(s_1,R)  \\
     = & \left(V^*_1(s_1,R) -\hat{V}_1^{\pi^*_R, \tilde{P}_K}(s_1,R)\right) + \left(\hat{V}_1^{\hat{\pi}_R, \tilde{P}_K}(s_1,R) -V_1^{\hat{\pi}_R}(s_1) \right)+ \left(\hat{V}_1^{\pi^*_R, \tilde{P}_K}(s_1,R)- \hat{V}_1^{\hat{\pi}_R, \tilde{P}_K}(s_1,R) \right)\\
     \leq & V_1^*(s_1,R) -\hat{V}_1^{\pi^*_R, \tilde{P}_K}(s_1,R) + \hat{V}_1^{\hat{\pi}_R, \tilde{P}_K}(s_1,R) -V_1^{\hat{\pi}_R}(s_1,R) + \epsilon_{\rm opt}.
\end{align}
The inequality is because that the policy $\hat{\pi}_R$ is the $\epsilon_{\rm opt}$-optimal policy in the estimated MDP $\hat{M}$.

For notation convenience, we use $\operatorname{traj} \sim (\pi,P)$ to denote that the trajectory ($\{s_h,a_h\}_{h=1}^{H}$) is sampled with transition $P$ and policy $\pi$. For a certain sequence $\{R_h\}_{h=1}^{H}$, we define the function $W_{h}(\{R_h\})$ recursively from step $H+1$ to step $1$. Firstly, we define $W_{H+1}(\{R_h\}) = 0$. $W_h(R)$ is calculated from $W_{h+1}(R)$:
\begin{align}
    W_h(\{R_h\}) = \min\left\{H, R_h+  W_{h+1}(\{R_h\})\right\}.
\end{align}

With the above notation, we can prove that for any policy $\pi \in \Pi$,
\begin{align}
    & \left|\hat{V}^{\pi, \tilde{P}_{K}}_1(s_1, R) - V^{\pi}_1(s_1)\right| \\
    \leq & \left|\mathbb{E}_{\operatorname{traj} \sim (\pi,P)} W_1\left(\left\{(\tilde{P}_{K,h} - P_h)\hat{V}_{h+1}^{\pi, \tilde{P}_{K}}(s_h,a_h, R)\right\}\right)\right|  \\
    = & \left|\mathbb{E}_{\operatorname{traj} \sim (\pi,P)} W_1\left(\left\{(\tilde{\theta}_{K,h} - \theta_h)\sum_{s'}\phi(s_h,a_h,s')\hat{V}_{h+1}^{\pi, \tilde{P}_{K}}(s', R)\right\}\right)\right|\\
    \leq & \mathbb{E}_{\operatorname{traj} \sim (\pi,P)} W_1\left(\left\{\left\|\tilde{\theta}_{K,h} - \theta_h\right\|_{\Lambda_{1,K,h}} \left\|\sum_{s'}\phi(s_h,a_h,s')\hat{V}_{h+1}^{\pi, \tilde{P}_{K}}(s', R)\right\|_{\left(\Lambda_{1,K,h}\right)^{-1}}\right\}\right) \\
    \leq & \mathbb{E}_{\operatorname{traj} \sim (\pi,P)} W_1\left(\left\{2\hat{\beta} \left\|\sum_{s'}\phi(s_h,a_h,s')\hat{V}_{h+1}^{\pi, \tilde{P}_{K}}(s', R)\right\|_{\left(\Lambda_{1,K,h}\right)^{-1}} \right\}\right)  \\
    = &  2\mathbb{E}_{\operatorname{traj} \sim (\pi,P)} W_1\left(\left\{ u^{\pi,\tilde{P}_K}_{1,K,h}(s_h,a_h,R) \right\}\right) \\
    = & 2\tilde{V}^{\pi, \tilde{P}_{K}}_{K,1}(s_1,R) \\
    \leq & 2{V}^{\pi, \tilde{P}_{K}}_{K,1}(s_1,R) \\ 
    \leq & 2{V}^{\pi_K, \tilde{P}_{K}}_{K,1}(s_1,R_K).
\end{align}
The second inequality is due to lemma~\ref{lemma: high probability events} . The third inequality is due to Lemma~\ref{lemma: optimism, berstein case}. The last inequality is due to the optimality of $\pi_K$ and $R_K$. Plugging this inequality back to Inq.~\ref{inq: sub-optimality gap 1}, we can prove the lemma.
\end{proof}

\begin{proof}
(Proof of Theorem~\ref{theorem: new main}) Combining Lemma~\ref{lemma: new upper bound of sum V star}, Lemma~\ref{lemma: regret summation to the last step} and Lemma~\ref{lemma: sub-optimality gap during the planning phase}, we know that $K = \frac{C_2(d^2H^3+dH^4) \log(dH/(\delta\epsilon))\log^2(KHB/(\delta\epsilon))}{\epsilon^2} + \frac{C_2(d^{2.5}H^2 + d^2H^3) \log(dH/(\delta\epsilon))\log^2(KHB/(\delta\epsilon))}{\epsilon}$ for some constant $C_2$ suffices to guarantee that $V^*_1(s_1,R) - V_1^{\hat{\pi}}(s_1,R) \leq \epsilon+\epsilon_{\rm opt}$.
\end{proof}

\subsection{Proof of Lemma~\ref{lemma: new upper bound of sum V star}}
\label{appendix: Berstein regret lemma}

\begin{proof}

From the standard value decomposition technique~\citep{azar2017minimax,jin2018q}, we can decompose ${V}_{k,1}^{\pi_k, \tilde{P}_k}(s_1,R_k)$ into following terms:
\begin{align}
    & {V}_{k,1}^{\pi_k, \tilde{P}_k}(s_1,R_k) \\
    \leq & \sum_{h=1}^{H} \min\left\{H, u^{\pi_k, \tilde{P}_k}_{1,k,h}(s_{k,h},a_{k,h}, R_k) \right\} + \sum_{h=1}^{H}\min\left\{H, u_{2,k,h}^{\pi_k,\tilde{P}_k}(s_{k,h},a_{k,h}, R_k) \right\} \\
    &+ \sum_{h=1}^{H} \min\left\{H,(\tilde{P}_{k,h} - P_h){V}_{k,h+1}^{\pi_k, \tilde{P}_k}(s_{k,h},a_{k,h},R_k)\right\} \\
    &+ \sum_{h=1}^{H}\left(P_h{V}_{k,h+1}^{\pi_k,\tilde{P}_k}(s_{k,h},a_{k,h},R_k) - {V}_{k,h+1}^{\pi_k, \tilde{P}_k}(s_{k,h+1},a_{k,h+1},R_k) \right) \\
    \label{eqn: Bernstein regret upper bound, part 1, u1}
    \leq & \sum_{h=1}^{H}  \min\left\{H,u^{\pi_k, \tilde{P}_k}_{1,k,h}(s_{k,h},a_{k,h}, R_k)\right\} \\
    \label{eqn: Bernstein regret upper bound, part 1, u2}
    &+ 2\sum_{h=1}^{H}\min\left\{H, u^{\pi_k, \tilde{P}_k}_{2,k,h}(s_{k,h},a_{k,h},  R_k)\right\} \\
    \label{eqn: Bernstein regret upper bound, part 2}
    &+ \sum_{h=1}^{H}\left(P_h{V}_{k,1}^{\pi_k,\tilde{P}_k}(s_{k,h},a_{k,h},R_k) - {V}_{k,1}^{\pi_k,\tilde{P}_k}(s_{k,h+1},a_{k,h+1},R_k) \right),
\end{align}
where the second inequality is derived from
\begin{align}
     (\tilde{P}_{k,h} - P_h){V}_{k,h+1}^{\pi_k, \tilde{P}_k}(s_{k,h},a_{k,h},R_k) 
    =& (\tilde{\theta}_{k,h} - \theta_{h}) \sum_{s'} \phi(s_{k,h},a_{k,h},s') {V}_{k,h+1}^{\pi_k, \tilde{P}_k}(s',R_k) \\
    \leq & \left\|\tilde{\theta}_{k,h} - \theta_{h}\right\|_{\Lambda_{2,k,h}} \left\|\sum_{s'}\phi(s_{k,h},a_{k,h},s') {V}_{k,h+1}^{\pi_k, \tilde{P}_k}(s',R_k)\right\|_{\Lambda^{-1}_{2,k,h}} \\
    \leq &u^{\pi_k, \tilde{P}_k}_{2,k,h}(s_{k,h},a_{k,h},  R_k).
\end{align}

 Eqn~\ref{eqn: Bernstein regret upper bound, part 2} is a martingale difference sequence. By Lemma~\ref{lemma: martingale difference, Berstein}, the summation over all $k \in [K]$ is at most $\sqrt{2H^3K \log(4/\delta)}$. We mainly focus on Eqn~\ref{eqn: Bernstein regret upper bound, part 1, u1} and Eqn~\ref{eqn: Bernstein regret upper bound, part 1, u2}.
 
 \paragraph{Upper bound of Eqn~\ref{eqn: Bernstein regret upper bound, part 1, u1}} Firstly, we bound the summation of $\min\left\{H,u^{\pi_k, \tilde{P}_k}_{1,k,h}(s_{k,h},a_{k,h},  R_k)\right\}$.
 
 \begin{align}
 \label{eqn: upper bound of summation u1kh}
     &\sum_{k=1}^{K} \sum_{h=1}^{H} \min\left\{H,u^{\pi_k, \tilde{P}_k}_{1,k,h}(s_{k,h},a_{k,h},  R_k)\right\} \\
     = & \sum_{k=1}^{K} \sum_{h=1}^{H}\min\left\{H,\hat{\beta} \left\|\sum_{s'} \phi(s_{k,h},a_{k,h},s') \hat{V}^{\pi_k,\tilde{P}_k}_{k,h+1}(s',R_k)\right\|_{\Lambda^{-1}_{1,k,h}}\right\} \\
     \leq & \sum_{k=1}^{K} \sum_{h=1}^{H}\hat{\beta} \bar{\sigma}_{1,k,h}\min\left\{1,\left\|\sum_{s'} \phi(s_{k,h},a_{k,h},s') \frac{\hat{V}^{\pi_k,\tilde{P}_k}_{k,h+1}(s',R_k)}{\bar{\sigma}_{1,k,h}}\right\|_{\Lambda^{-1}_{1,k,h}}\right\} \\
     \leq & \hat{\beta} \sqrt{\sum_{k,h} \bar{\sigma}^2_{1,k,h} \sum_{k,h}\min\left\{1,\left\|\sum_{s'} \phi(s_{k,h},a_{k,h},s') \frac{\hat{V}^{\pi_k,\tilde{P}_k}_{k,h+1}(s',R_k)}{\bar{\sigma}_{1,k,h}}\right\|^2_{\Lambda^{-1}_{1,k,h}}\right\}} \\
     \leq & \hat{\beta} \sqrt{\sum_{k,h} \bar{\sigma}^2_{1,k,h} \cdot 2dH \log(1+KH/\lambda)}.
 \end{align}
 The first inequality is due to Cauchy-Schwarz inequality. The second inequality is due to Lemma~\ref{lemma: data accumulation}.
 
 By the definition of $\bar{\sigma}^2_{1,k,h}$, we have 
 \begin{align}
    \label{inq: upper bound of sigma}
     \bar{\sigma}^2_{1,k,h} \leq H^2/d + E_{1,k,h} + \bar{\mathbb{V}}_{1,k,h}(s_{k,h},a_{k,h}).
 \end{align}
 Now we bound $\sum_{k,h}E_{1,k,h}$ and $\sum_{k,h}\bar{\mathbb{V}}_{1,k,h}(s_{k,h},a_{k,h})$ respectively. For $\sum_{k,h}E_{1,k,h}$, we have
 \begin{align}
     \sum_{k,h} E_{1,k,h} \leq & \sum_{h=1}^{H} \sum_{k=1}^{K} \min \left\{H^2, 4 H \check{\beta}\left\|\sum_{s'}\phi(s_{k,h},a_{k,h},s')\hat{V}_{k,h+1}^{\pi_k,\tilde{P}_{k}}(s', R_k) \right\|_{(\Lambda_{1,k,h})^{-1}} \right\} \\
     &+ \sum_{h=1}^{H} \sum_{k=1}^{K}\min \left\{H^2,2\tilde{\beta}\left\| \sum_{s'}\phi(s_{k,h},a_{k,h},s')\left(\hat{V}_{k,h+1}^{\pi_k,\tilde{P}_{k}}(s', R_k)\right)^2\right\|_{(\Lambda_{3,k,h})^{-1}} \right\}.
 \end{align}
 For the first part,
 \begin{align}
 \label{inq: upper bound of sigma, part 1}
     &\sum_{h=1}^{H} \sum_{k=1}^{K}  \min \left\{H^2,4 H \check{\beta}\left\|\sum_{s'}\phi(s_{k,h},a_{k,h},s')\hat{V}_{k,h+1}^{\pi_k,\tilde{P}_{k}}(s', R_k) \right\|_{(\Lambda_{1,k,h})^{-1}} \right\}\\
     \leq & 4H \sum_{h=1}^{H} \sum_{k=1}^{K}  \check{\beta} \bar{\sigma}_{1,k,h}\min \left\{1,\left\|\sum_{s'}\phi(s_{k,h},a_{k,h},s')\hat{V}_{k,h+1}^{\pi_k,\tilde{P}_{k}}(s', R_k)/\bar{\sigma}_{1,k,h} \right\|_{(\Lambda_{1,k,h})^{-1}}\right\} \\
     \leq & 4H \check{\beta} \sqrt{\sum_{k,h} \bar{\sigma}^2_{1,k,h} \sum_{k,h}\min \left\{1,\left\|\sum_{s'} \phi(s_{k,h},a_{k,h},s') \frac{\hat{V}^{\pi_k,\tilde{P}_k}_{k,h+1}(s',R_k)}{\bar{\sigma}_{1,k,h}}\right\|^2_{\Lambda^{-1}_{1,k,h}}\right\}} \\
     \leq & 4H\check{\beta} \sqrt{\sum_{k,h} \bar{\sigma}^2_{1,k,h} \cdot 2dH \log(1+KH/\lambda)}, 
 \end{align}
 where the last inequality is due to Lemma~\ref{lemma: data accumulation}. Similarly, for the second part, by Lemma~\ref{lemma: data accumulation}, we have
 \begin{align}
 \label{inq: upper bound of sigma, part 2}
      &\sum_{h=1}^{H} \sum_{k=1}^{K}\min\left\{H^2,2\tilde{\beta}\left\| \sum_{s'}\phi(s_{k,h},a_{k,h},s')\left(\hat{V}_{k,h+1}^{\pi_k,\tilde{P}_{k}}(s', R_k)\right)^2\right\|_{(\Lambda_{3,k,h})^{-1}}\right\} \\
     \leq & 2\tilde{\beta} \sqrt{2dH^2K \log(1+KH/\lambda)}. 
 \end{align}
 
 Note that $\bar{\mathbb{V}}_{1,k,h}(s_{k,h},a_{k,h}) $ is the empirical variance of $\hat{V}_{k,h+1}^{\pi_k,\tilde{P}_{k}}(s', R_k)$ with transition $\tilde{P}_k(s'|s_{k,h},a_{k,h})$. We bound the summation of $\bar{\mathbb{V}}_{1,k,h}(s_{k,h},a_{k,h}) $ by the law of total variance~\citep{lattimore2012pac,azar2013minimax}.
 
 Recall that we define 
 \begin{align}
 \tilde{Y}_{k,H+1}(s)& = 0,\\
     \tilde{Y}_{k,h}(s) & = \bar{\mathbb{V}}_{1,k,h}(s,\pi_k(s)) + \tilde{P}_{k,h}\tilde{Y}_{k,h+1}(s,\pi_k(s)).
 \end{align}
 By the law of total variance, we have $\tilde{Y}_{k,1}(s) \leq H^2$ holds for any $s,a$ and $k$. We now bound the difference between  $\sum_{k=1}^{K}\tilde{Y}_{k,1}(s_{k,1})$ and $\sum_{k=1}^{K}\sum_{h=1}^{H} \bar{\mathbb{V}}_{1,k,h}(s_{k,h},a_{k,h})$.
 \begin{align}
    & \tilde{Y}_{k,1}(s_{k,1}) - \sum_{h=1}^{H} \bar{\mathbb{V}}_{1,k,h}(s_{k,h},a_{k,h}) \\
    = &  \tilde{P}_{k,1}\tilde{Y}_{k,2}(s_{k,1},a_{k,1}) - \sum_{h=2}^{H} \bar{\mathbb{V}}_{1,k,h}(s_{k,h},a_{k,h}) \\
    \leq & \min\left\{H^2, \left( \tilde{P}_{k,1} - P_{1}\right)\tilde{Y}_{k,2}(s_{k,1},a_{k,1})\right\} + \left(P_{1}\tilde{Y}_{k,2}(s_{k,1},a_{k,1}) - \tilde{Y}_{k,2}(s_{k,2}) \right) \\
    & + \left(\tilde{Y}_{k,2}(s_{k,2}) - \sum_{h=2}^{H} \bar{\mathbb{V}}_{1,k,h}(s_{k,h},a_{k,h})\right).
 \end{align}
 Therefore, we have 
 \begin{align}
    & \sum_{k=1}^{K}\tilde{Y}_{k,1}(s_{k,1}) - \sum_{k=1}^{K}\sum_{h=1}^{H} \bar{\mathbb{V}}_{1,k,h}(s_{k,h},a_{k,h}) \\
    \label{eqn: tildeY 1}
     \leq & \sum_{k=1}^{K}\sum_{h=1}^{H} \min\left\{H^2,\left( \tilde{P}_{k,h} - P_{h}\right)\tilde{Y}_{k,h+1}(s_{k,h},a_{k,h})\right\} \\
     \label{eqn: tildeY 2}
     &+ \sum_{h=1}^{H} \sum_{k=1}^{K}\left(P_{h}\tilde{Y}_{k,h+1}(s_{k,h},a_{k,h}) - \tilde{Y}_{k,h+1}(s_{k,h+1}) \right).
 \end{align}
 For Eqn~\ref{eqn: tildeY 2}, this term can be regarded as a martingale difference sequence, thus can be bounded by $H^2\sqrt{KH}$ by Lemma~\ref{lemma: martingale difference, Berstein}. For Eqn~\ref{eqn: tildeY 1}, we can bound this term in the following way:
 \begin{align}
     &\sum_{k=1}^{K}\sum_{h=1}^{H} \min\left\{H^2,\left( \tilde{P}_{k,h} - P_{h}\right)\tilde{Y}_{k,h+1}(s_{k,h},a_{k,h})\right\} \\
     \leq & \sum_{k=1}^{K}\sum_{h=1}^{H}\min\left\{H^2,\left\|\tilde{\theta}_{k,h} - \theta_{h}\right\|_{\Lambda_{5,k,h}} \left\|\phi_{\tilde{Y}}(s_{k,h},a_{k,h},R)\right\|_{\Lambda^{-1}_{5,k,h}}\right\} \\
     \leq & \tilde{\beta} \sum_{k=1}^{K}\sum_{h=1}^{H} \min\left\{1,\left\|\phi_{\tilde{Y}}(s_{k,h},a_{k,h},R)\right\|_{\Lambda^{-1}_{5,k,h}}\right\} \\ 
     \leq & \tilde{\beta} \sum_{h=1}^{H} \sqrt{K  \sum_{k=1}^{K}\min\left\{1,\left\|\phi_{\tilde{Y}}(s_{k,h},a_{k,h},R)\right\|^2_{\Lambda^{-1}_{5,k,h}}\right\}} \\
     \leq & \tilde{\beta} \sqrt{2dH^2K \log(1+KH/\lambda)}.
 \end{align}
 The second inequality is due to Lemma~\ref{lemma: high probability events}. The third inequality is due to Cauchy-Schwarz inequality. The last inequality is due to Lemma~\ref{lemma: data accumulation}.
 
 From the above analysis, we have 
 \begin{align}
 \label{eqn: variance bound}
     \sum_{k,h} \bar{\mathbb{V}}_{1,k,h}(s_{k,h},a_{k,h}) \leq H^2K + \tilde{\beta} \sqrt{2dH^2K \log(1+KH/\lambda)}.
 \end{align}
 Therefore, the summation of $\bar{\sigma}_{1,k,h}^2$ can be bounded as 
 \begin{align}
     &\sum_{k,h} \bar{\sigma}^2_{1,k,h} \\
     \leq & H^3K/d + \sum_{k,h} E_{1,k,h} + \sum_{k,h} \bar{\mathbb{V}}_{1,k,h} \\
     \leq & H^3K/d + 4H\check{\beta} \sqrt{\sum_{k,h} \bar{\sigma}^2_{1,k,h} \cdot 2dH \log(1+KH/\lambda)} +  3\tilde{\beta} \sqrt{ dH^2K \log(1+KH/\lambda)} + H^2K.
 \end{align}
 We define $\tau = \log(32K^2H/\delta)\log^2(1+KH^4B^2)$. Solving for $\sum_{k,h} \bar{\sigma}^2_{1,k,h}$, we have 
 \begin{align}
     \sum_{k,h} \bar{\sigma}^2_{1,k,h} 
     \leq& c_1\left(H^3K/d + H^2K + H^3d^3 \tau
     + \sqrt{H^2d^3K \tau}\right) \\
      \leq& 2c_1 \left(H^3K/d + H^2K + H^3d^3 \tau\right),
 \end{align}
 where $c_1$ denote a certain constant. The last inequality is due to $2\sqrt{ab} \leq a+b$ for  $a,b \geq 0$. Plugging the above inequality back to Inq~\ref{eqn: upper bound of summation u1kh}, we have for a constant $c_2$, 
 \begin{align}
    \label{inq: upper bound of u1}
     &\sum_{k=1}^{K} \sum_{h=1}^{H} u^{\pi_k, \tilde{P}_k}_{1,k,h}(s_{k,h},a_{k,h}, R_k) 
     \leq c_2 \sqrt{\left(dH^4+d^2H^3\right)K \tau} + c_2 d^{2.5}H^2 \tau.
 \end{align}
 
 \paragraph{Upper bound of Eqn~\ref{eqn: Bernstein regret upper bound, part 1, u2}} Now we focus on the summation of $u^{\pi_k, \tilde{P}_k}_{2,k,h}(s_{k,h},a_{k,h},  R_k)$. The proof follows almost the same arguments as the summation of $u^{\pi_k, \tilde{P}_k}_{1,k,h}(s_{k,h},a_{k,h},  R_k)$, though the upper bound of $\sum_{k=1}^K \sum_{h=1}^{H}\bar{\mathbb{V}}_{2,k,h}(s_{k,h},a_{k,h})$ is derived in a different way. Following the same proof idea, we can show that 
 \begin{align}
    \label{eqn: u2 eqn 1}
     &\sum_{k=1}^K \sum_{h=1}^{H} u_{2,k,h}^{\pi_k,\tilde{P}_k}(s_{k,h},a_{k,h},R_k ) \leq  \hat{\beta}\sqrt{\sum_{k,h} \bar{\sigma}^2_{2,k,h} 2dH \log(1+KH/\lambda)}, \\
     \label{eqn: u2 eqn 2}
     &\bar{\sigma}^2_{2,k,h} \leq  H^2/d + E_{2,k,h} + \bar{\mathbb{V}}_{2,k,h} (s_{k,h},a_{k,h}),  \\
     \label{eqn: u2 eqn 3}
      &\sum_{k=1}^{K} \sum_{h=1}^{H} E_{2,k,h} \leq 4 H \check{\beta} \sqrt{\sum_{k, h} \bar{\sigma}_{2, k, h}^{2} \cdot 2 d H \log (1+KH/\lambda)} + 2 \tilde{\beta} \sqrt{ 2d H^2K \log (1+KH/\lambda)}.
 \end{align}
 
 Combining Inq~\ref{eqn: u2 eqn 2} and Inq~\ref{eqn: u2 eqn 3} and solving for $\sum_{k,h}\bar{\sigma}^2_{2,k,h}$, we have 
 \begin{align}
     &\sum_{k,h}\bar{\sigma}^2_{2,k,h} \\
     \leq & 2H^3K/d + 16dH^3 \check{\beta}^2 \log(1+KH/\lambda) + 4 \tilde{\beta}\sqrt{ dH^2K\log(1+KH/\lambda)} +  2\sum_{k,h}\bar{\mathbb{V}}_{2,k,h} (s_{k,h},a_{k,h}) \\
     \leq & 6H^3K/d + 16dH^3 \check{\beta}^2 \log(1+KH/\lambda) + 4 \tilde{\beta}^2 d^2/H\log(1+KH/\lambda) +  2\sum_{k,h}\bar{\mathbb{V}}_{2,k,h} (s_{k,h},a_{k,h})
 \end{align}
 Plugging this inequality back to Inq~\ref{eqn: u2 eqn 1}, we have
 \begin{align}
    \label{eqn: u2 eqn 4}
      &\sum_{k=1}^K \sum_{h=1}^{H} u_{2,k,h}^{\pi_k,\tilde{P}_k}(s_{k,h},a_{k,h},R_k ) \\
     \leq & \hat{\beta} \sqrt{6H^4K \log(1+KH/\lambda)} + 4\check{\beta}\hat{\beta}dH^2 \log (1+KH/\lambda) + 2 \tilde{\beta}\hat{\beta}d^{3/2}\log (1+KH/\lambda) \\
     &+ \sqrt{32d^2H \log(32K^2H/\delta)\log^2(1+KH^4B^2)\sum_{k,h}\bar{\mathbb{V}}_{2,k,h} (s_{k,h},a_{k,h}) }.
 \end{align}
 
 We now bound the last term in the above equation. Define $\tau = \log(32K^2H/\delta)\log^2(1+KH^4B^2)$, we have:
 \begin{align}
     &\sqrt{32d^2H \tau\sum_{k,h}\bar{\mathbb{V}}_{2,k,h} (s_{k,h},a_{k,h}) } \\
     \leq & \sqrt{32d^2H \tau\sum_{k=1}^K \sum_{h=1}^{H} \mathbb{E}_{s' \sim \tilde{P}_{k,h}(\cdot|s_{k,h},a_{k,h})}\left[(V^{\pi_{k}, \tilde{P}_{k}}_{k,h+1})^2(s',R_k)\right]} \\
     \leq & \sqrt{32d^2H^2 \tau \sum_{k=1}^K \sum_{h=1}^{H} \mathbb{E}_{s' \sim \tilde{P}_{k,h}(\cdot|s_{k,h},a_{k,h})}\left[V^{\pi_{k}, \tilde{P}_{k}}_{k,h+1}(s',R_k)\right]} \\
     \leq & 4\sqrt{2}\tilde{c}_3d^2H^3 \tau + \frac{1}{\tilde{c}_3H}\sum_{k=1}^K \sum_{h=1}^{H} \mathbb{E}_{s' \sim \tilde{P}_{k,h}(\cdot|s_{k,h},a_{k,h})}\left[V^{\pi_{k}, \tilde{P}_{k}}_{k,h+1}(s',R_k)\right] \\
     = & 4\sqrt{2}\tilde{c}_3d^2H^3 \tau + \frac{1}{\tilde{c}_3H}\sum_{k=1}^K \sum_{h=1}^{H} V^{\pi_{k}, \tilde{P}_{k}}_{k,h+1}(s_{k,h+1},R_k) \\
     \label{eqn: variance V2, part1}
     & + \frac{1}{\tilde{c}_3H} \sum_{k=1}^K \sum_{h=1}^{H} \left(P_hV^{\pi_{k}, \tilde{P}_{k}}_{k,h+1}(s_{k,h},a_{k,h},R_k) - V^{\pi_{k}, \tilde{P}_{k}}_{k,h+1}(s_{k,h+1},R_k)\right) \\
      \label{eqn: variance V2, part2}
     & + \frac{1}{\tilde{c}_3H} \sum_{k=1}^K \sum_{h=1}^{H} \left(\tilde{P}_{k,h} - P_h\right)V^{\pi_{k}, \tilde{P}_{k}}_{k,h+1}(s_{k,h},a_{k,h},R_k),
 \end{align}
 where $\tilde{c}_3 \geq 1$ is a constant to be defined later. The last inequality is due to $2\sqrt{ab} \leq a + b$ for any $a,b \geq 0$. Note that by Lemma~\ref{lemma: high probability events}, Eqn~\ref{eqn: variance V2, part2} is upper bounded by $\frac{1}{c_3H} \sum_{k,h} u^{\pi_k, \tilde{P}_k}_{2,k,h}(s_{k,h},a_{k,h},  R_k)$. Eqn~\ref{eqn: variance V2, part1} is a martingale difference sequence, and can be bounded by $\sqrt{HK \log(1/\delta)}$ with high probability. Plugging the above inequality back to Inq~\ref{eqn: u2 eqn 4} and solving for $\sum_{k=1}^K \sum_{h=1}^{H} u_{2,k,h}^{\pi_k,\tilde{P}_k}(s_{k,h},a_{k,h},R_k)$, we have 
 \begin{align}
     & \sum_{k=1}^K \sum_{h=1}^{H} u_{2,k,h}^{\pi_k,\tilde{P}_k}(s_{k,h},a_{k,h},R_k) \\
     \leq & c_3 \left(\sqrt{dH^4K \tau} + (d^{2.5}H^2+ d^2H^3) \tau + \frac{1}{\tilde{c}_3H} \sum_{k=1}^{K} \sum_{h=1}^{H} V^{\pi_{k}, \tilde{P}_{k}}_{k,h+1}(s_{k,h+1},R_k)\right),
 \end{align}
 where $c_3 \geq 1$ is a constant here. We set $\tilde{c}_3 = c_3$, then we have 
 
  \begin{align}
    \label{inq: upper bound of u2}
     & \sum_{k=1}^K \sum_{h=1}^{H} u_{2,k,h}^{\pi_k,\tilde{P}_k}(s_{k,h},a_{k,h},R_k) \\
     \leq & c_3 \left(\sqrt{dH^4K \tau} + (d^{2.5}H^2+ d^2H^3)\tau\right) + \frac{1}{H} \sum_{k=1}^{K} \sum_{h=1}^{H} V^{\pi_{k}, \tilde{P}_{k}}_{k,h+1}(s_{k,h+1},R_k),
 \end{align}
\paragraph{Bounding the summation of $V_{k, 1}^{\pi_{k}, \tilde{P}_{k}}\left(s_{1}, R_{k}\right)$} In the above analysis, we derive the upper bound of Eqn~\ref{eqn: Bernstein regret upper bound, part 1, u1} and Eqn~\ref{eqn: Bernstein regret upper bound, part 1, u2} (Inq~\ref{inq: upper bound of u1} and Inq~\ref{inq: upper bound of u2}). Now we can finally bound $\sum_{k=1}^{K} V_{k, 1}^{\pi_{k}, \tilde{P}_{k}}\left(s_{k,1}, R_{k}\right)$. For a constant $c_4$,
\begin{align}
    & \sum_{k=1}^{K} V_{k, 1}^{\pi_{k}, \tilde{P}_{k}}\left(s_{k,1}, R_{k}\right) \\
    \leq &c_4\sqrt{(dH^4+d^2H^3)K \tau}) + c_4(d^{2.5}H^2 + d^2H^3)\tau + \frac{1}{H} \sum_{k=1}^{K} \sum_{h=1}^{H} V^{\pi_{k}, \tilde{P}_{k}}_{k,h+1}(s_{k,h+1},R_k).
\end{align}

Following the same analysis, the above inequality actually also holds for any step $h \in [H]$:
\begin{align}
    & \sum_{k=1}^{K} V_{k, h}^{\pi_{k}, \tilde{P}_{k}}\left(s_{k,h}, R_{k}\right) \\
    \leq &c_4\sqrt{(dH^4+d^2H^3)K \tau}) + c_4(d^{2.5}H^2 + d^2H^3) \tau + \frac{1}{H} \sum_{k=1}^{K} \sum_{h_1=h}^{H} V^{\pi_{k}, \tilde{P}_{k}}_{k,h_1+1}(s_{k,h_1+1},R_k).
\end{align}

Define $G = c_4\left(\sqrt{(dH^4+d^2H^3)K \tau}) + c_4(d^{2.5}H^2 + d^2H^3)\tau\right)$ and $a_h = \sum_{k=1}^{K} V_{k, h}^{\pi_{k}, \tilde{P}_{k}}\left(s_{k,h}, R_{k}\right)$. The above inequality can be simplified into the following form:
\begin{align}
    a_h \leq G + \frac{1}{H} \sum_{h_1 = h}^{H} a_{h_1},
\end{align}
and we have $a_{H+1} = 0$. From the elementary calculation, we can prove that $a_1 \leq (1+\frac{1}{H})^H G \leq e G$. Therefore, we have 
\begin{align}
    &\sum_{k=1}^{K} V_{k, 1}^{\pi_{k}, \tilde{P}_{k}}\left(s_{k,1}, R_{k}\right) \\
    \leq &c_5\left(\sqrt{(dH^4+d^2H^3)K \log(KH/\delta)\log^2(KH^4B^2)}) + (d^{2.5}H^2 + d^2H^3)  \log(KH/\delta)\log^2(KH^4B^2)\right).
\end{align}
for a constant $c_5 = e c_4$.
\end{proof}

\section{The Lower Bound for Reward-free Exploration}
\label{appendix: lower bound}
In this section, we prove that even in the setting of non-plug-in reward-free exploration, the sample complexity in the exploration phase to obtain an $\epsilon$-optimal policy is at least $\Omega(\frac{d^2H^3}{\epsilon^2})$. We say an algorithm can $(\epsilon,\delta)$-learns the linear mixture MDP $\mathcal{M}$ if this algorithm returns an $\epsilon$-optimal policy for $\mathcal{M}$ with probability at least $1-\delta$ in the planning phase after receiving samples for $K$ episodes in the exploration phase. Our theorem is stated as follows.

\begin{theorem}
\label{thm: lower bound}
Suppose $\epsilon \leq \min\left(C_1\sqrt{H}, C_2\sqrt{dH^4B}\right), B > 1, H > 4, d > 3$ for certain positive constants $C_1$ and $C_2$. Then for any algorithm $\mathcal{ALG}$ there exists an linear mixture MDP instance $\mathcal{M} = (\mathcal{S}, \mathcal{A}, P,R,H,\nu)$ such that if $\mathcal{ALG}$  $(\epsilon,\delta)$-learns the problem $\mathcal{M}$, $\mathcal{ALG}$ needs to collect at least $K = Cd^2H^3/\epsilon^2$ episodes during the exploration phase, where $C$ is an absolute constant, and $0 < \delta < 1$ is a positive constant that has no dependence on $\epsilon, H, d, K$.
\end{theorem}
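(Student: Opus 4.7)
The strategy is to reduce the sample complexity of reward-free exploration to a regret lower bound for standard online exploration on linear mixture MDPs. The plan is to construct a hard family of instances on which any reward-free algorithm can be converted into an online learner, and then use a regret lower bound to constrain how small the exploration budget can be.

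First I would construct (or invoke from the literature, e.g.\ in the style of Zhou et al.\ 2020) a family of hard linear mixture MDPs $\{\mathcal{M}_v\}_{v \in \mathcal{V}}$ indexed by vectors $v \in \{-1,+1\}^d$, together with a specific reward function $R$. Each instance should be a valid linear mixture MDP with parameter bound $B$, and on this family any online algorithm must incur worst-case regret at least $\Omega(dH^{3/2}\sqrt{K})$ over $K$ episodes. The construction should ensure that optimal values across instances differ by gaps controlled by $\epsilon$, and that the hypothesized parameter regime $\epsilon \le \min(C_1\sqrt{H}, C_2\sqrt{dH^4 B})$ keeps the family inside the allowed class.

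Next, given any reward-free algorithm $\mathcal{ALG}$ that $(\epsilon,\delta)$-learns the family using $K_0$ exploration episodes, I would build an online learner $\mathcal{ALG}'$: during episodes $1,\dots,K_0$, execute $\mathcal{ALG}$'s exploration phase while observing the reward signal in parallel; then feed the learned model to the planner together with the observed reward $R$ to obtain an $\epsilon$-optimal policy $\hat{\pi}$, and play $\hat{\pi}$ for the remaining $T-K_0$ episodes. By the $(\epsilon,\delta)$-learning guarantee, the regret of $\mathcal{ALG}'$ is at most $HK_0 + \epsilon(T-K_0) \le HK_0 + \epsilon T$ with high probability. Comparing with the regret lower bound on the hard family yields, for every $T$, an inequality of the form $HK_0 + \epsilon T \ge c\, dH^{3/2}\sqrt{T}$. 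Choosing $T$ of order $d^2H^3/\epsilon^2$ and rearranging then produces the target lower bound $K_0 = \Omega(d^2H^3/\epsilon^2)$, after quantitative tuning that also verifies the hypothesized constraints on $\epsilon, H, d, B$.

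The main obstacle is ensuring the correct $H^3/\epsilon^2$ dependence: a naive explore-then-exploit bookkeeping only gives roughly $\Omega(d^2H^2/\epsilon)$, so the reduction must be sharpened. A promising route is to exploit that the reward-free algorithm's exploration policies are reward-agnostic, so that the constructed online learner's regret during the exploration phase can be lower bounded instance-by-instance (not just bounded by $HK_0$), or alternatively to invoke a gap- or variance-dependent regret lower bound tailored to the constructed family. Getting this accounting tight, together with verifying that the hard family actually lives in the linear mixture class with the prescribed bounds, is the principal technical work behind the lower bound.
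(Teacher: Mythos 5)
Your overall architecture matches the paper's: reduce the reward-free sample complexity to the $\Omega(dH\sqrt{HT})$ regret lower bound of Zhou et al.\ (2020) for linear mixture MDPs by converting the reward-free learner into an explore-then-commit online algorithm. You also correctly diagnose that the naive accounting --- bounding the exploration phase's regret by $HK_0$ --- only yields $\Omega(d^2H^2/\epsilon)$. However, the step you leave open is exactly the step the paper's proof turns on, and neither of your two proposed fixes is the one that works. The paper does \emph{not} lower bound the exploration-phase regret instance-by-instance, nor does it use a gap- or variance-dependent regret lower bound. Instead it uses an \emph{upper} bound on the per-episode regret that is specific to the hard family: in Zhou et al.'s construction the suboptimality gap of \emph{every} policy is at most $\frac{dH}{4\sqrt{2}}\sqrt{H/K_1}$, because the instances are deliberately ``low-gap'' (the perturbations that distinguish them are scaled like $1/\sqrt{K_1}$). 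Consequently, if the total horizon is $K_1 = cK$ with $c$ a sufficiently large constant, the first $K$ exploration episodes can contribute at most $\frac{1}{4\sqrt{2}c}\,dH\sqrt{HK_1}$ to the regret --- a constant fraction of the $c_2\,dH\sqrt{HK_1}$ total lower bound --- so the remaining $(c-1)K$ committed episodes must carry regret $\Omega(dH\sqrt{HK_1})$, forcing $\mathbb{E}\bigl[V^*(s_1)-V^{\hat\pi}(s_1)\bigr] \gtrsim dH\sqrt{H/K}$.

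The same per-episode bound is then used a second time to handle the failure probability: the $(\epsilon,\delta)$-guarantee gives $\mathbb{E}\bigl[V^*(s_1)-V^{\hat\pi}(s_1)\bigr] \le (1-\delta)\epsilon + \delta\cdot\frac{dH}{4\sqrt{2}}\sqrt{H/(cK)}$, and since the second term is again only a constant fraction of the lower bound for $\delta$ below an absolute constant, one concludes $\epsilon \gtrsim dH\sqrt{H/K}$, i.e.\ $K \gtrsim d^2H^3/\epsilon^2$. Without this structural property of the hard family your inequality $HK_0 + \epsilon T \ge c\,dH^{3/2}\sqrt{T}$ genuinely cannot be rearranged into the claimed bound, so as written the proposal has a real gap at its central step; the repair is to replace the trivial $H$ per-episode bound by the family-specific bound $O(dH\sqrt{H/K_1})$ and to couple the commit horizon to the exploration budget via $K_1 = cK$.
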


Compared with the lower bound proposed by \cite{zhang2021reward}, we further improve their result by a factor of $d$. This lower bound also indicates that our sample complexity bound in Theorem~\ref{theorem: new main} is statistically optimal. We prove Theorem~\ref{thm: lower bound} in Appendix~\ref{appendix: proof of the lower bound}.

\subsection{Proof of Theorem~\ref{thm: lower bound}}
\label{appendix: proof of the lower bound}
Our basic idea to prove Theorem~\ref{thm: lower bound} is to connect the sample complexity lower bound with the regret lower bound of another constructed learning algorithm in the standard online exploration setting. 

To start with, we notice that the reward-free exploration is strictly harder than the standard non-reward-free online exploration setting (e.g. \cite{jin2018q,zhou2021provably,zhou2020nearly}), where the reward function is deterministic and known during the exploration. Therefore, if we can prove the sample complexity lower bound in the standard online exploration setting, this bound can also be applied to the reward-free setting.

For readers who are not familiar with the formulation of online exploration setting studied in this section, we firstly introduce the preliminaries. Compared with the reward-free exploration, the only difference is that the reward $R(s,a)$ is fixed and known to the agent. In each episode, the agent starts from an initial state $s_{k,1}$ sampled from the distribution $\nu$. At each step $h \in [H]$, the agent observes the current state $s_{k,h} \in \mathcal{S}$, takes action $a_{k,h} \in \mathcal{A}$, receives the deterministic reward $R_h(s_{k,h},a_{k,h})$, and transits to state $s_{k,h+1}$ with probability $P_{h}(s_{k,h+1}|s_{k,h},a_{k,h})$. The episode ends when $s_{H+1}$ is reached. The agent's goal is to find a $\epsilon$-optimal policy $\pi$ after $K$ episodes. We say a policy $\pi$ is $\epsilon$-optimal if 
$$\mathbb{E}\left[\sum_{h=1}^{H} R_h\left(s_{h}, a_{h}\right) \mid \pi\right] \geq \mathbb{E}\left[\sum_{h=1}^{H} R_h\left(s_{h}, a_{h}\right) \mid \pi^{*}\right]-\epsilon,$$
where $\pi^*$ is the optimal policy for the MDP $(\mathcal{S}, \mathcal{A}, P,R,H,\nu)$.

\begin{theorem}
\label{thm: lower bound, online exploration}
Suppose $\epsilon \leq \min\left(C_1\sqrt{H}, C_2\sqrt{dH^4B}\right), B > 1, H > 4, d > 3$. Then for any algorithm $\mathcal{ALG}_1$ solving the non-reward-free online exploration problem, there exists an linear mixture MDP $\mathcal{M} = (\mathcal{S}, \mathcal{A}, P,R,H,\nu)$ such that $\mathcal{ALG}_1$ needs to collect at least $K = Cd^2H^3/\epsilon^2$ episodes to output an $\epsilon$-optimal policy for the linear mixture MDP $\mathcal{M}$ with probability at least $1-\delta$, where $C$ is an absolute constant, and $0 < \delta < 1$ is a positive constant that has no dependence on $\epsilon, H, d, K$.
\end{theorem}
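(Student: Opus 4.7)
The plan is to construct a family of hard linear mixture MDPs in the standard online exploration setting and apply an information-theoretic (Assouad / Fano-type) lower bound, following the template of the regret lower bounds for linear mixture MDPs in the literature (e.g., Zhou et al., 2021). Since the reward is given in the online setting, any such instance is also a valid reward-free instance, so Theorem~\ref{thm: lower bound} is obtained by reducing to Theorem~\ref{thm: lower bound, online exploration}. Concretely, I introduce a family indexed by $\vec{\theta}=(\theta_1,\dots,\theta_H)$ with $\theta_h\in\{-\Delta,+\Delta\}^{d-1}$ for a scale $\Delta$ to be chosen, and use three states $\{x_0,x_+,x_-\}$ with $x_+,x_-$ absorbing, rewards $R(x_+,\cdot)=1$ and $R(x_-,\cdot)=0$, and action set $\mathcal{A}\subseteq\{-1,+1\}^{d-1}$. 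From $x_0$ at step $h$ the transition is $P_h(x_+\mid x_0,a)=\tfrac12+\langle\theta_h,a\rangle$, realized as a linear mixture via $\phi(x_0,a,x_\pm)=\bigl(\pm a/(2\sqrt{d}),\,1/4\bigr)$ together with $\theta_h^{\mathrm{full}}=(2\sqrt{d}\,\theta_h,\,2)$; the extra coordinate fixes the normalization $\sum_{s'}P_h(s'\mid x_0,a)=1$. One checks $\|\phi_V\|_2\le 1$ for every $V\colon\mathcal{S}\to[0,1]$ and $\|\theta_h^{\mathrm{full}}\|_2\le B$ provided $d\Delta\lesssim B$, and that transitions lie in $[0,1]$ provided $\Delta$ is small.

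Next I quantify value gaps. Because $x_+,x_-$ are absorbing, reaching $x_+$ at step $h$ earns a remaining reward of $H-h$, so the value of policy $\pi$ starting at $x_0$ is a known affine function of the per-step success biases $\langle\theta_h,\pi_h(x_0)\rangle$. A short computation shows that for any policy $\pi$ the suboptimality against $\vec\theta$ satisfies $V_1^{*,\vec\theta}-V_1^{\pi,\vec\theta}=\Theta\!\bigl(\sum_{h=1}^H(H-h)\,\bigl(\|\theta_h\|_1-\langle\theta_h,\pi_h(x_0)\rangle\bigr)\bigr)$, which reduces to a sum of $H$ independent Hamming-distance penalties scaled by $\Delta$. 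Hence an $\epsilon$-optimal policy must recover each $\sign(\theta_h)$ in all but $O(d\epsilon/(H^2\Delta))$ coordinates on average.

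On the information side, a single visit to $x_0$ at step $h$ yields one Bernoulli sample whose KL divergence between any two instances differing only at coordinate $i$ of $\theta_h$ is $O(\Delta^2)$. Summing over the $K$ episodes and using the chain rule, the total KL between two such product trajectory distributions is $O(K\Delta^2)$ per coordinate per layer. Assouad's lemma applied to the $dH$-dimensional cube then gives that recovering $\sign(\theta_h)$ up to the required Hamming accuracy with constant success probability forces $K\Delta^2\gtrsim d$ per layer, and hence $K\gtrsim d/\Delta^2$. Balancing with the value-gap constraint by choosing $\Delta=\Theta(\epsilon/H^{3/2})$ (the largest value still compatible with $\epsilon$-optimality demanding $\Omega(d)$ correct coordinates in every layer) yields $K=\Omega(d^2H^3/\epsilon^2)$, which is the claimed bound.

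The main obstacle is the joint bookkeeping in Step~3: Assouad's lemma has to be applied so that each of the $dH$ bits independently contributes a factor of $1/\Delta^2$ to the sample complexity (rather than merely $\log(dH)$ as in a naive Fano bound), which requires selecting a product-structured packing and handling the per-coordinate testing errors via the Bretagnolle--Huber inequality. Secondary annoyances include verifying that the chosen $\Delta$ lies in the admissible regime imposed by the hypotheses $\epsilon\le\min(C_1\sqrt{H},C_2\sqrt{dH^4B})$ and $B>1$, $H>4$, $d>3$, and ensuring the three-state construction keeps $\|\theta_h^{\mathrm{full}}\|_2\le B$ and $P_h\in[0,1]$ simultaneously. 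Once these constants are tracked carefully, the lower bound follows.
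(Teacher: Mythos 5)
Your overall strategy (build a hard family of linear mixture MDPs and run an Assouad-type argument directly in the PAC setting) is a legitimate alternative to what the paper does --- the paper instead proves this theorem by a reduction to the \emph{regret} lower bound of \cite{zhou2020nearly}: it wraps $\mathcal{ALG}_1$ inside an explore-then-commit algorithm $\mathcal{ALG}_2$ that runs $\mathcal{ALG}_1$ for $K$ episodes and then replays the output policy for $(c-1)K$ more episodes, invokes the $\Omega(dH\sqrt{HK_1})$ regret bound together with a per-episode regret cap on the hard instances, and converts the resulting expected suboptimality of $\hat\pi$ into the high-probability statement. Your route avoids needing the per-step regret cap and the expectation-to-probability step, but it requires you to actually rebuild the hard instance, and that is where your proposal breaks down.

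The concrete gap is in your construction and in the exponent bookkeeping. With $P_h(x_+\mid x_0,a)=\tfrac12+\langle\theta_h,a\rangle$ and $P_h(x_-\mid x_0,a)=\tfrac12-\langle\theta_h,a\rangle$, the two probabilities sum to one, so the agent leaves $x_0$ with certainty after step $1$ and never returns ($x_\pm$ are absorbing). Hence only $\theta_1$ is ever probed: the claimed value decomposition $\sum_{h=1}^H (H-h)\bigl(\|\theta_h\|_1-\langle\theta_h,\pi_h(x_0)\rangle\bigr)$ and the ``one Bernoulli sample at $x_0$ at each step $h$'' premise of your KL accounting are both false for this instance, and the $dH$-dimensional Assouad argument collapses to a $d$-dimensional one. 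Even setting that aside, your numbers do not produce the claimed bound: with suboptimality per layer scaling as $H\,d\,\Delta$ when $\Omega(d)$ coordinates are wrong, $\epsilon$-optimality forces $\Delta\lesssim \epsilon/(dH)$, not $\Delta=\Theta(\epsilon/H^{3/2})$, and your own conclusion $K\gtrsim d/\Delta^2$ evaluated at $\Delta=\epsilon/H^{3/2}$ gives $dH^3/\epsilon^2$, not $d^2H^3/\epsilon^2$. Finally, the $H^3$ (rather than $H^2$) dependence is not achievable with a state that is exited after one step; it comes from a construction in which the agent survives at the informative state for $\Theta(H)$ steps with a small baseline escape probability $\delta\asymp 1/H$ (this is exactly the structure of the hard instance in \cite{zhou2020nearly} that the paper leans on). To make your direct argument work you would need to adopt that surviving-state construction, redo the KL computation for Bernoulli parameters near $1/H$, and rebalance $\Delta$; as written, the proof does not establish the theorem.
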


From the above discussion, Theorem~\ref{thm: lower bound} can be directly proved by reduction if Theorem~\ref{thm: lower bound, online exploration} is true.

\begin{proof}
(Proof of Theorem~\ref{thm: lower bound}) The theorem can be proved by contradiction. Suppose there is an algorithm $\mathcal{ALG}$ which can $(\epsilon,\delta)$-learns any linear mixture MDP instance $\mathcal{M}$ with only $K' \leq C d^2H^3/\epsilon^2$ episodes. Then we can use this algorithm to solve the online exploration problem by simply ignoring the information about the reward function and directly calling the exploration algorithm of $\mathcal{ALG}$ during the exploration phase. Then in the planning phase, we use the planning algorithm of $\mathcal{ALG}$ to output a policy based on the reward function as well as samples collected in the exploration phase. Therefore, this indicates that $\mathcal{ALG}$ can output an $\epsilon$-optimal policy for the non-reward-free online exploration problem with probability at least $1-\delta$ after only $K' \leq C d^2H^3/\epsilon^2$ episodes. This contradicts the sample complexity lower bound in Theorem~\ref{thm: lower bound, online exploration}.
\end{proof}

Now we discuss on how to prove Theorem~\ref{thm: lower bound, online exploration}. 
\begin{proof}
(Proof of Theorem~\ref{thm: lower bound, online exploration})
Set $K_1 = cK$ for a positive constant $c \geq 2$. We construct another algorithm $\mathcal{ALG}_2$ for any possible $\mathcal{ALG}_1$ in the following way: In $\mathcal{ALG}_2$, the agent firstly runs $\mathcal{ALG}_1$ for $K_1/c = K$ episodes. After $K$ episodes, suppose $\mathcal{ALG}_1$ outputs a policy $\hat{\pi}$ according to certain policy distribution $\nu_{\pi}$.  $\mathcal{ALG}_2$ executes the policy $\hat{\pi}$ in the following $\frac{c-1}{c} K_1$ episodes. The interaction ends after $K_1$ episodes. $\mathcal{ALG}_2$ can be regarded as an algorithm which firstly runs the online exploration algorithm $\mathcal{ALG}_1$ for $K$ episodes, and then evaluates the performance of the policy $\hat{\pi}$ in the following episodes. We study the total regret of $\mathcal{ALG}_2$ from episode $K+1$ to episode $K_1$.

Recently, \cite{zhou2020nearly} proposed the following regret lower bound for linear mixture MDPs. In order to avoid confusion, we use the notation $K_2$ instead of $K$ to denote the number of total episodes of the regret minimization problem.
\begin{lemma}
\label{thm: lower bound, regret}
(Theorem 5.6 in \cite{zhou2020nearly})
 Let $B > 1$ and suppose $K_2 \geq \max\left\{(d-1)^2H/2,(d-1)/(32H(B-1))\right\}$, $d \geq 4$, $H \geq 3$. Then for any algorithm there exists an episodic, $B$-bounded linear mixture MDP parameterized by $\Theta = (\theta_1, \cdots, \theta_{H})$ such that the expected regret is at least $\Omega(dH\sqrt{HK_2})$.
\end{lemma}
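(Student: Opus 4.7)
The plan is to construct a hard parametric family of linear mixture MDPs indexed by a Boolean cube and apply an Assouad-style (coordinate-wise Le Cam) information-theoretic argument to lower bound the expected regret by $\Omega(dH\sqrt{HK_2})$. It is useful to factor the target rate as $d \cdot \sqrt{K_2} \cdot H \cdot \sqrt{H}$, corresponding respectively to: the classical $d$-dimensional linear bandit rate, the horizon-length amplification of a single one-step mistake, and the $\sqrt{H}$ blow-up of the effective observation noise arising because per-episode returns are sums of $H$ Bernoulli-like outcomes.

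First, I would set up a small-state hard instance. Take $\mathcal{S}$ to contain an ``active'' state plus one or two absorbing states (a good absorbing state with reward $1$ and a bad absorbing state with reward $0$), and let the agent deterministically start at the active state. The action set is (a discretization of) the unit sphere in $\mathbb{R}^{d-1}$. The layer-$h$ parameter is set to $\theta_h = (\delta\, \sigma_h,\, c) \in \mathbb{R}^d$ for a sign vector $\sigma_h \in \{-1,+1\}^{d-1}$, a normalizing constant $c$, and a scale $\delta>0$; the feature $\phi$ is designed so that $\phi_V(s,a)$ is a unit-norm vector encoding $a$, and the transition bias out of the active state equals $\tfrac12 + \delta\langle \sigma_h, a\rangle$. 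The family is then parameterized by $\Theta \in \{-1,+1\}^{H(d-1)}$; choosing $\delta$ sufficiently small keeps probabilities in $[0,1]$ and enforces $\|\theta_h\|_2 \leq B$, which will pin down the range hypotheses.

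Second, I would reduce the regret calculation to a per-layer gap argument. The optimal action at layer $h$ is $a_h^\star = \sigma_h/\sqrt{d-1}$, so a sub-optimal action $a$ loses transition bias $\delta(\sqrt{d-1} - \langle\sigma_h,a\rangle)$ and hence, through the remaining $H-h$ steps of reward collection, incurs expected per-episode regret of order $\delta\sqrt{d-1}\cdot H$ for each layer where the sign vector is misidentified. Summed across $K_2$ episodes and $H$ layers, total regret is lower bounded by
\begin{equation*}
    \Omega\!\left(K_2 \cdot H \cdot \delta \sqrt{d-1} \cdot \sum_{h=1}^{H}\sum_{i=1}^{d-1}\mathbb{P}\bigl[\text{mis-identify } \sigma_{h,i}\bigr]\right).
\end{equation*}

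Third, I would apply Assouad's lemma coordinate-wise over the cube. For each $(h,i)$, compare the two instances differing only in $\sigma_{h,i}$: a Bernoulli KL bound together with the fact that any algorithm pulls the active state at layer $h$ at most $K_2$ times gives a trajectory-level KL of order $K_2 \delta^2$. Choosing $\delta = \Theta(\sqrt{(d-1)H/K_2})$ then makes every coordinate non-identifiable with constant probability, so the expected Hamming loss summed over $H(d-1)$ coordinates is $\Omega(H(d-1))$. Plugging into the display above and simplifying produces $\Omega(K_2 \cdot H \cdot \delta \sqrt{d-1} \cdot H(d-1)/K_2) = \Omega(dH\sqrt{HK_2})$, as claimed.

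The main obstacle is the simultaneous satisfaction of the $B$-boundedness of $\theta_h$, the unit-norm constraint on $\phi_V$, and the validity of transition probabilities in $[0,1]$, all while preserving the intended per-step gap of order $\delta\sqrt{d-1}$; this is precisely what forces the technical conditions $K_2 \geq (d-1)^2H/2$ and $K_2 \geq (d-1)/(32H(B-1))$ in the statement. A secondary subtlety is the bookkeeping that converts per-coordinate Hamming error into expected regret through the horizon amplification factor $H$ and the noise-scaling $\sqrt{H}$; this step must carefully couple the per-layer visit counts used in the KL bound with the cumulative-reward amplification argument, which is exactly where the extra $\sqrt{H}$ over naive per-layer bandit stacking emerges.
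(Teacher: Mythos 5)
You should first note what the paper actually does here: it does not prove this lemma at all, but imports it verbatim as Theorem 5.6 of \cite{zhou2020nearly}, remarking only that the proof there constructs a class of hard linear mixture MDPs extending the linear-bandit lower-bound instances of Lattimore--Szepesv\'ari. Your high-level plan (a sign-hypercube family of instances plus an Assouad-style coordinate argument) is the same family of argument as the cited source, but your quantitative execution has genuine gaps. First, an internal inconsistency: you bound the trajectory-level KL for a single coordinate flip by $O(K_2\delta^2)$, yet choose $\delta=\Theta(\sqrt{(d-1)H/K_2})$, which makes that KL of order $(d-1)H\gg 1$; with this choice every coordinate is statistically identifiable and Assouad yields nothing. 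Second, the final arithmetic does not close: with your $\delta$, the quantity $K_2\cdot H\delta\sqrt{d-1}\cdot H(d-1)/K_2$ equals $H^{5/2}(d-1)^2/\sqrt{K_2}$, which coincides with the target $dH\sqrt{HK_2}$ only when $K_2\asymp dH$. Third, there is a unit mismatch in the regret display: you multiply the \emph{whole-layer} gap $\delta\sqrt{d-1}$ (the cost of losing the entire sign vector $\sigma_h$) by a \emph{per-coordinate} Hamming sum; the per-coordinate gap is $\delta/\sqrt{d-1}$, so the display overcounts by a factor $d-1$.

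The deeper structural failure is in the construction itself. With transition bias $\tfrac12+\delta\langle\sigma_h,a\rangle$ from the active state into good/bad absorbing states, the episode is effectively decided the first time the active state is left (after $O(1)$ steps), so you cannot simultaneously have (a) all $H$ layers observed and incurring regret in each episode and (b) each layer's one-step mistake amplified by $\Theta(H)$ future reward. Indeed, if your layer-additive, $H$-amplified accounting were sound with the consistent Assouad scale $\delta\asymp\sqrt{(d-1)/K_2}$, it would ``prove'' $\Omega(dH^2\sqrt{K_2})$, exceeding the known $\tilde{O}(d\sqrt{H^3K_2})$ upper bound of \cite{zhou2020nearly} --- a certificate that some step fails. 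The true source of the extra $\sqrt{H}$ is essentially the opposite of your ``noisy sum of $H$ Bernoullis'' heuristic: the hard instance must keep the agent in the informative state for the whole horizon by making the per-step leave probability $\Theta(1/H)$. The resulting low-variance Bernoulli observations then carry KL of order $H\Delta'^2$ per step (KL between $\mathrm{Bern}(p)$ and $\mathrm{Bern}(p+\Delta')$ scales like $\Delta'^2/p$ with $p\asymp 1/H$), which forces the perturbation down to $\Delta\asymp\sqrt{(d-1)/(HK_2)}$; the survival for all $H$ layers combined with the $(H-h)$ reward amplification then gives $\sum_{h}(H-h)\Delta\sqrt{d-1}\,K_2\asymp (d-1)H^{3/2}\sqrt{K_2}$, and the requirement that per-episode regret stay below $H$ is exactly what produces the hypothesis $K_2\gtrsim (d-1)^2H$. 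Your sketch is missing this coupling between survival probability, per-observation information, and amplification, and without it the claimed rate does not follow.
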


To prove the above regret lower bound, \cite{zhou2020nearly} construct a class of hard instances which is an extension of the hard instance class for linear bandits (cf. Theorem 24.1 in \cite{lattimore2020bandit}). They show that for any algorithm, there exists a hard instance in the instance class such that the regret is at least $\Omega(dH\sqrt{HK_2})$. By Theorem~\ref{thm: lower bound, regret} and setting $K_2 = K_1$, we know that for any possible algorithm $\mathcal{ALG}_2$, there exists a hard instance $\mathcal{M}$ such that $\sum_{k=1}^{K_1} \mathbb{E}\left(V^*(s_1) - V^{\pi_k}(s_1)\right) \geq c_2 dH\sqrt{HK_1}$ for a positive constant $c_2$, where $\pi_k$ denotes the policy used in episode $k$ for algorithm $\mathcal{ALG}_2$. The expectation is over all randomness of the algorithm and the environment.

Note that in the hard instance constructed in \cite{zhou2020nearly}, the per-step regret is at most $\mathbb{E}\left(V^*(s_1) - V^{\pi_k}(s_1)\right) \leq \frac{dH}{4\sqrt{2}} \sqrt{\frac{H}{K_1}}$ for any episode $k \leq K_1$.  By choosing $c = \max\left\{\frac{1}{2\sqrt{2}c_2}, 2\right\}$, we know that for the instance $\mathcal{M}$, 
\begin{align}
    \sum_{k=K_1/c+1}^{K_1} \mathbb{E}\left(V^*(s_1) - V^{\pi_k}(s_1)\right) \geq \left(c_2 - \frac{1}{4\sqrt{2}c} \right) dH\sqrt{HK_1} \geq \frac{c_2}{2} dH\sqrt{HK_1}.
\end{align}

By the definition of $\mathcal{ALG}_2$, we have $\pi_k = \hat{\pi}$ if $k > K_1/c$. Therefore, we have
\begin{align}
    (c-1) K \mathbb{E}_{\hat{\pi} \sim \mu_{\pi}, s_1 \sim \mu}\left(V^*(s_1) - V^{\hat{\pi}}(s_1)\right) \geq \frac{c_2}{2} dH\sqrt{HK_1} = \frac{c_2c}{2} dH\sqrt{HK}.
\end{align}

Dividing both sides by $(c-1)K$, we have 
\begin{align}
    \mathbb{E}_{\hat{\pi} \sim \mu_{\pi}, s_1 \sim \mu}\left(V^*(s_1) - V^{\hat{\pi}}(s_1)\right) \geq  \frac{c_2 c}{2 (c-1)} dH\sqrt{H/K}. 
\end{align}

The above inequality indicates that, for any algorithm $\mathcal{ALG}_1$, there exists an instance $\mathcal{M}$ such that the expected sub-optimality gap of the policy returned by $\mathcal{ALG}_1$ is at least $\frac{c_2 c}{2 (c-1)} dH\sqrt{H/K}$ after collecting samples for $K$ episodes.

Suppose $\mathcal{ALG}_1$ returns an $\epsilon$-optimal policy $\hat{\pi}$ with probability at least $1-\delta$. Recall that the per-step sub-optimality gap is at most $\mathbb{E}\left(V^*(s_1) - V^{\pi_k}(s_1)\right) \leq \frac{dH}{4\sqrt{2}} \sqrt{\frac{H}{cK}}$ in the constructed hard instance. We have
\begin{align}
    (1-\delta) \cdot \epsilon + \delta \cdot \frac{dH}{4\sqrt{2}} \sqrt{\frac{H}{cK}} \geq \mathbb{E}_{\hat{\pi} \sim \mu_{\pi}, s_1 \sim \mu}\left(V^*(s_1) - V^{\hat{\pi}}(s_1)\right) \geq  \frac{c_2 c}{2 (c-1)} dH\sqrt{H/K}.
\end{align}
We set $\delta$ to be a constant satisfying $0 < \delta < \min\left\{1, \frac{2\sqrt{2}c^{1.5}c_2}{c-1}\right\}$. Solving the above inequality, we have $K \geq \frac{Cd^2H^3}{\epsilon^2}$ for a positive constant $C$.
 \end{proof}

\section{Improved Bound for Reward-free Exploration in Linear MDPs}
\label{appendix: sample complexity, linear MDP}
In this section, we explain how our choice of the exploration-driven reward can be used to improve the sample complexity bound for reward-free exploration in linear MDPs~\citep{wang2020reward}. We study the same reward-free setting as that in \cite{wang2020reward}, which is briefly explained in Appendix~\ref{appendix: preliminaries, linear MDPs}. We describe our algorithm and bound in Appendix~\ref{appendix: alg and thm for linear MDPs}, and prove our theorem in Appendix~\ref{appendix: proof for linear MDPs}.

\subsection{Preliminaries}
\label{appendix: preliminaries, linear MDPs}
For the completeness of explanation, we briefly restate the reward-free setting studied in \cite{wang2020reward}. Compared with the setting in this work, the main differences are twofold: Firstly, they study the linear MDPs setting instead of linear mixture MDPs in this work. Secondly, they study the standard reward-free exploration setting without the constraints of the plug-in solver.

The linear MDP assumption, which was first introduced in \cite{yang2019sample,jin2020provably}, states that the model of the MDP can be represented by linear functions of given features.
\begin{assumption}
\label{assumption: linear MDP}
(Linear MDP) an MDP $\mathcal{M} = (\mathcal{S}, \mathcal{A}, P, R, H, \nu)$ is said to be a linear MDP if the following hold:
\begin{itemize}
    \item There are $d$ unknown signed measures $\mu_h = (\mu_h^{(1)},\mu_h^{(2)}, \cdots, \mu_h^{(d)})$ such that for any $(s,a,s') \in \mathcal{S}\times \mathcal{A} \times \mathcal{S}$, $P_{h}\left(s^{\prime} \mid s, a\right)=\left\langle\mu_{h}\left(s^{\prime}\right), \phi(s, a)\right\rangle$.
    \item There exists $H$ unknown vectors $\eta_1, \eta_2, \cdots, \eta_H \in \mathbb{R}^d$ such that for any $(s,a) \in \mathcal{S} \times \mathcal{A}$, $R_h(s,a) = \left\langle\phi(s, a), \eta_{h}\right\rangle$.
\end{itemize}
We assume for all $(s,a) \in \mathcal{S} \times \mathcal{A}$ and $h \in [H]$, $\left\|\phi(s,a)\right\| \leq 1$, $\left\|\mu_h(s)\right\|_2 \leq \sqrt{d}$ and $\left\|\eta\right\|_2 \leq \sqrt{d}$.
\end{assumption}

For the reward-free exploration studied in \cite{wang2020reward}, the agent can collects a dataset of visited state-action pairs $\mathcal{D}=\left\{\left(s_{h}^{k}, a_{h}^{k}\right)\right\}_{(k, h) \in[K] \times[H]}$ which will be used in the planning phase. Then during the planning phase, the agent can follow a certain designed learning algorithm to calculate an $\epsilon$-optimal policy w.r.t. the reward function $R$ using the dataset $\mathcal{D}$.

\subsection{Algorithm and Theorem}
\label{appendix: alg and thm for linear MDPs}
Our algorithm can be divided into two parts. The exploration phase of the algorithm is presented in Algorithm~\ref{alg: exploration phase, linear MDPs}, and the planning phase is presented in Algorithm~\ref{alg: planning phase, linear MDPs}.

\begin{algorithm}
\caption{Reward-free Exploration for Linear MDPs: Exploration Phase}
\label{alg: exploration phase, linear MDPs}
  \begin{algorithmic}[5]
  \State Input: Failure probability $\delta>0$ and target accuracy $\epsilon > 0$
  \State $ \beta \leftarrow c_{\beta} \cdot d H \sqrt{\log \left(d H \delta^{-1} \varepsilon^{-1}\right)}$ for some $c_{\beta} > 0$
  \State $K \leftarrow c_{K} \cdot d^{3} H^{4} \log \left(d H \delta^{-1} \varepsilon^{-1}\right) / \varepsilon^{2}$ for some $c_K > 0$
    \For { episode $k = 1,2,\cdots, K$}
        \State ${Q}_{k,H+1}(\cdot,\cdot) \leftarrow 0$, ${V}_{k,H+1}(\cdot) \leftarrow 0$
        \For{step $h=H,H-1,\cdots, 1$}
            \State ${\Lambda}_{k,h} \leftarrow \sum_{t=1}^{k-1}  {\phi}(s_{t,h},a_{t,h}) {\phi}(s_{t,h},a_{t,h})^{\top} +  I$
            \State $u_{k,h}(\cdot, \cdot) \leftarrow \beta  \sqrt{ {\phi}(\cdot, \cdot)^{\top}\left( {\Lambda}_{k,h}\right)^{-1}  {\phi}(\cdot, \cdot)}$
            \State Define the exploration-driven reward function $R_{k,h}(\cdot,\cdot) = u_{k,h}(\cdot,\cdot)$
            \State $\hat{ {w}}_{k,h} \leftarrow \left( {\Lambda}_{k,h}\right)^{-1} \sum_{t=1}^{k-1}  {\phi}(s_{t,h},a_{t,h}) {V}_{t,h+1}(s_{t,h+1}) $
            \State $Q_{k,h}(\cdot, \cdot) \leftarrow \min \left\{\hat{ {w}}_{k,h}^{\top}  {\phi}(\cdot,\cdot)+R_{k,h}(\cdot, \cdot)+u_{k,h}(\cdot, \cdot), H\right\}$ 
            \State $V_{k,h}(s)\leftarrow \max_{a\in \mathcal{A}} Q_{k,h}(s,a)$, $\pi_{k,h}(s) = \argmax_{a \in \mathcal{A}} Q_{k,h}(s,a)$
        \EndFor
        \For{step $h = 1,2,\cdots, H$}
            \State Take action $a_{k,h} = \pi_{k,h}(s_{k,h})$ and observe $s_{k,h+1} \sim P_h(s_{k,h},a_{k,h})$
        \EndFor
    \EndFor
    \State Output: $\mathcal{D} \leftarrow\left\{\left(s_{k,h}, a_{k,h}\right)\right\}_{(k, h) \in[K] \times[H]}$.
  \end{algorithmic}
\end{algorithm}

\begin{algorithm}
\caption{Reward-free Exploration for Linear MDPs: Planning Phase}
\label{alg: planning phase, linear MDPs}
  \begin{algorithmic}[5]
  \State Input: Dataset $\mathcal{D} \leftarrow\left\{\left(s_{k,h}, a_{k,h}\right)\right\}_{(k, h) \in[K] \times[H]}$, reward functions $R = \{R_h\}_{h \in [H]}$
    \State ${Q}_{k,H+1}(\cdot,\cdot) = 0$, ${V}_{k,H+1}(\cdot) = 0$
    \For{step $h=H,H-1,\cdots, 1$}
        \State ${\Lambda}_{h} \leftarrow \sum_{t=1}^{K}  {\phi}(s_{t,h},a_{t,h}) {\phi}(s_{t,h},a_{t,h})^{\top} +  I$
         \State $u_{h}(\cdot, \cdot) \leftarrow \min \left\{ \beta  \sqrt{ {\phi}(\cdot, \cdot)^{\top}\left( {\Lambda}_{k,h}\right)^{-1}  {\phi}(\cdot, \cdot)}, H \right\}$
        \State $\hat{ {w}}_{h} \leftarrow \left( {\Lambda}_{h}\right)^{-1} \sum_{t=1}^{K}  {\phi}(s_{t,h},a_{t,h}) {V}_{t,h+1}(s_{t,h+1}) $
        \State $Q_{h}(\cdot, \cdot) \leftarrow \min \left\{\hat{ {w}}_{h}^{\top}  {\phi}(\cdot,\cdot)+R_{h}(\cdot, \cdot)+u_{h}(\cdot, \cdot), H\right\}$ 
        \State $V_{h}(s)\leftarrow \max_{a\in \mathcal{A}} Q_{h}(s,a)$, $\pi_{h}(s) = \argmax_{a \in \mathcal{A}} Q_{h}(s,a)$
    \EndFor
    \State Output: $\pi = \{\pi_h\}_{h \in [H]}$
  \end{algorithmic}
\end{algorithm}

Compared with the algorithm in \cite{wang2020reward}, the main difference is that we set $R_{k,h} = u_{k,h}$ in the exploration phase, instead of $R_{k,h} = u_{k,h}/H$. The following theorem states the complexity bound of our algorithms.

\begin{theorem}
\label{thm: sample complexity. linear MDP}
With probability at least $1-\delta$, after collecting $O(d^3H^4\log(dH\delta^{-1}\epsilon^{-1})/\epsilon^2)$ trajectories during the exploration phase, our algorithm outputs an $\epsilon$-optimal policy for any given reward during the planning phase.
\end{theorem}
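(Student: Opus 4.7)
The plan is to specialize the analysis of Section~\ref{sec: reward-free algorithm} to the linear MDP setting. The critical design choice is $R_{k,h}=u_{k,h}$ rather than $u_{k,h}/H$ as in \cite{wang2020reward}: with this scaling the exploration-driven value $\tilde{V}^*_1(s_1,R_k)$ directly upper bounds the expected uncertainty accumulated along a trajectory without an artificial $1/H$ factor, which is exactly what saves $H^2$ in the final complexity (one $H$ in the exploration bound, one $H$ in converting exploration value to planning sub-optimality). Define $\tilde{V}^*_h(\cdot,R)$ as the truncated optimal value of the true MDP with reward $R$, analogous to Eqn.~\ref{eqn: tildeV for Alg 1}. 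The overall strategy has three pieces: (i) bound the planning-phase sub-optimality of $\hat{\pi}$ by $O(\tilde{V}^*_1(s_1,R_K))$; (ii) establish exploration-phase optimism $V_{k,h}(s)\ge \tilde{V}^*_h(s,R_k)$; and (iii) control $\sum_k V_{k,1}(s_{k,1})$ by a regret-style telescoping, so that the pointwise monotonicity $R_{k+1}\le R_k$ converts this summation into a bound on $\tilde{V}^*_1(s_1,R_K)$.

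For piece (ii), I would first show, via the standard covering-plus-self-normalized concentration argument for linear MDPs (cf.\ \cite{jin2020provably,wang2020reward}), that with probability at least $1-\delta/2$, $\|\hat{w}_{k,h}-w^*_{k,h}\|_{\Lambda_{k,h}}\le \beta=\tilde{O}(dH)$ uniformly over $k,h$, where $w^*_{k,h}$ is the true parameter representing $s\mapsto P_hV_{k,h+1}(s,\cdot)$. Combined with the extra $u_{k,h}$ bonus in Line 11, a backward induction on $h$ gives $V_{k,h}(s)\ge \tilde{V}^*_h(s,R_k)$. For piece (iii), expanding $Q_{k,h}(s_{k,h},a_{k,h})$ and substituting the concentration bound produces the recursion
\begin{align}
V_{k,h}(s_{k,h})\le 3\beta\min\!\bigl\{1,\|\phi(s_{k,h},a_{k,h})\|_{\Lambda_{k,h}^{-1}}\bigr\} + V_{k,h+1}(s_{k,h+1}) + \xi_{k,h},
\end{align}
with $\xi_{k,h}$ a martingale difference. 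Summing over $h,k$ and invoking Azuma-Hoeffding together with the elliptical-potential lemma (Lemma~\ref{lemma: data accumulation}) yields $\sum_k V_{k,1}(s_{k,1})=\tilde{O}(d^{3/2}H^2\sqrt{K})$. Since $\Lambda_{k,h}^{-1}$ is monotone non-increasing in $k$, so are $u_{k,h}$ and hence $R_k$; therefore $K\tilde{V}^*_1(s_1,R_K)\le \sum_k \tilde{V}^*_1(s_1,R_k)\le \sum_k V_{k,1}(s_{k,1})$, giving $\tilde{V}^*_1(s_1,R_K)=\tilde{O}(d^{3/2}H^2/\sqrt{K})$.

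For piece (i), Algorithm~\ref{alg: planning phase, linear MDPs} is a standard LSVI-UCB on the collected dataset with final covariance $\Lambda_h=\Lambda_{K,h}$, and the usual optimism analysis gives $V^*_1(s_1,R)-V^{\hat{\pi}}_1(s_1,R)\le 2\,\mathbb{E}_{\hat{\pi},P}\!\left[\sum_h u_h(s_h,a_h)\right]$; this right-hand side is exactly the true-transition value of $\hat{\pi}$ under reward $R_K$, hence at most $\tilde{V}^{\hat{\pi}}_1(s_1,R_K)\le \tilde{V}^*_1(s_1,R_K)$. Combining with piece (iii), setting $K=c_Kd^3H^4\log(dH\delta^{-1}\epsilon^{-1})/\epsilon^2$ for a large enough constant $c_K$ makes the total sub-optimality at most $\epsilon$ with probability at least $1-\delta$. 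The main obstacle I anticipate is the covering argument underlying piece (ii): the value functions $V_{k,h+1}$ depend on all of history through both $\Lambda_{k,h+1}$ and the exploration bonus, so the metric-entropy bound for this data-dependent class has to be established carefully. Reassuringly, the modification $R_{k,h}=u_{k,h}$ only changes constants inside the $\min\{w^\top\phi+\beta\|\phi\|_{\Lambda^{-1}},H\}$ parametrization already covered in \cite{jin2020provably,wang2020reward}, so the covering carries over with only logarithmic changes in $\beta$.
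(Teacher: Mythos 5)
Your proposal follows essentially the same route as the paper's proof in Appendix~\ref{appendix: sample complexity, linear MDP}: the truncated value $\tilde V^*_h(\cdot,R)$ of Eqn.~\ref{eqn: tildeV for linear MDP}, exploration-phase optimism plus the elliptical-potential bound $\sum_k V_{k,1}(s_{k,1})=\tilde O(d^{3/2}H^2\sqrt K)$ (Lemma~\ref{lemma: to lemma 3.1}), monotonicity of $R_k$ to deduce $\tilde V^*_1(s_1,R_K)=\tilde O(d^{3/2}H^2/\sqrt K)$ (Lemma~\ref{lemma: to lemma 3.2}), and an LSVI-UCB planning analysis (Lemma~\ref{lemma: to lemma 3.3}). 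Your accounting of why $R_{k,h}=u_{k,h}$ rather than $u_{k,h}/H$ saves the factor $H^2$ is also the paper's.

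The one step you should repair is in piece (i). You bound the planning sub-optimality by the \emph{untruncated} quantity $2\,\mathbb{E}_{\hat\pi,P}\bigl[\sum_h u_h(s_h,a_h)\bigr]$ and then assert this is at most the truncated value $\tilde V^{\hat\pi}_1(s_1,R_K)$. That inequality goes the wrong way: by construction the truncated value takes a $\min\{H,\cdot\}$ at every step, so $\tilde V^{\hat\pi}_1(s_1,u)\le\mathbb{E}_{\hat\pi}\bigl[\sum_h u_h\bigr]$, and the gap between the two can be real precisely in the regime you have created by dropping the $1/H$ scaling, where the per-step bonus is no longer $O(1)$. The fix is to apply the truncation \emph{inside} the telescoping rather than after it: since $0\le V_h, V^{\hat\pi}_h\le H$, each step of the error recursion satisfies $V_h(s_h)-V_h^{\hat\pi}(s_h,R)\le\min\bigl\{H,\;2u_h(s_h,\hat\pi_h(s_h))+P_h\bigl(V_{h+1}-V_{h+1}^{\hat\pi}\bigr)\bigr\}$, and unrolling this from $h=1$ to $H$ yields directly $\mathbb{E}\bigl[V_1-V_1^{\hat\pi}\bigr]\le\mathbb{E}\bigl[\tilde V_1^{\hat\pi}(s,2u)\bigr]\le 2\,\mathbb{E}\bigl[\tilde V_1^{*}(s,u)\bigr]$, which is exactly the quantity Lemma~\ref{lemma: to lemma 3.2} controls. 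This is how the paper's proof of Theorem~\ref{thm: sample complexity. linear MDP} proceeds; with that substitution your argument closes.
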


\subsection{Proof of Theorem~\ref{thm: sample complexity. linear MDP}}
\label{appendix: proof for linear MDPs}
The proof of Theorem~\ref{thm: sample complexity. linear MDP} follows the proof framework in \cite{wang2020reward} with a slight modification. Therefore, we only sketch the proof and mainly focus on explaining the differences. Firstly, we introduce the value function $\tilde{V}^*_h(s,R)$, which is recursively defined from step $H+1$ to step $1$: 
\begin{align}
    \label{eqn: tildeV for linear MDP}
    &\tilde{V}^*_{H+1}(s,R) = 0, \forall s \in \mathcal{S} \\
    \tilde{V}^*_{h}(s,R) = & \max_{a \in \mathcal{A}} \left\{\min\left\{R_h(s,a) + P_h\tilde{V}^*_{h+1}(s,a,R),H\right\}\right\}, \forall s \in \mathcal{S}, h \in [H]
\end{align}
Compared with the definition of $V_h^*(s,R)$, the main difference is that we take minimization over the value and $H$ at each step. We can similarly define $\tilde{Q}^*_{h}(s,a,R)$, $\tilde{V}^{\pi}_{h}(s,R)$ and $\tilde{Q}^{\pi}_{h}(s,a,R)$.

\begin{lemma}
\label{lemma: to lemma 3.1}
With probability $1-\delta/2$, for all $k \in [K]$, 
\begin{align}
    \tilde{V}^*_{1}(s_{1,k},R_{k}) \leq V_{k,1}(s_{1,k})
\end{align}
and
\begin{align}
    \sum_{k=1}^{K} V_{k,1}(s_{k,1}) \leq c\sqrt{d^3H^4K \log(dKH/\delta)}
\end{align}
for some constant $c>0$ where $V_{k,1}$ is as defined in Algorithm~\ref{alg: exploration phase, linear MDPs}.
\end{lemma}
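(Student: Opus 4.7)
The lemma has two parts: (i) an optimism statement comparing the truncated optimal value $\tilde V^\ast_1(\cdot,R_k)$ against the algorithm's estimate $V_{k,1}$, and (ii) a regret-style bound on $\sum_k V_{k,1}(s_{k,1})$. My plan is to follow the standard LSVI-UCB template for linear MDPs, but exploit the choice $R_{k,h}=u_{k,h}$ (rather than $u_{k,h}/H$ as in Wang et al.) to save a factor of $H^2$ when summing bonuses.

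For part (i), the plan is backward induction on $h$. The base case $h=H+1$ is trivial. For the inductive step, by the standard linear-MDP concentration lemma (a self-normalized bound on the least-squares estimator $\hat w_{k,h}$), one has, on a high-probability event $\mathcal{E}$ of probability at least $1-\delta/2$,
\[
\bigl|\hat w_{k,h}^{\top}\phi(s,a)-P_h V_{k,h+1}(s,a)\bigr|\le u_{k,h}(s,a)\quad\forall (s,a,k,h),
\]
with $\beta=c_\beta dH\sqrt{\log(dH\delta^{-1}\varepsilon^{-1})}$. Combined with the inductive hypothesis $\tilde V^\ast_{h+1}(\cdot,R_k)\le V_{k,h+1}(\cdot)$, this gives
\[
\tilde Q^\ast_h(s,a,R_k)=\min\{R_{k,h}(s,a)+P_h\tilde V^\ast_{h+1}(s,a,R_k),H\}\le \min\{\hat w_{k,h}^{\top}\phi(s,a)+R_{k,h}(s,a)+u_{k,h}(s,a),H\}=Q_{k,h}(s,a),
\]
and maximizing over $a$ yields $\tilde V^\ast_h(\cdot,R_k)\le V_{k,h}(\cdot)$.

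For part (ii), I would combine the same concentration event with the identity $R_{k,h}=u_{k,h}$ to obtain the per-step bound
\[
V_{k,h}(s_{k,h})=Q_{k,h}(s_{k,h},a_{k,h})\le \min\{H,\,3u_{k,h}(s_{k,h},a_{k,h})+P_h V_{k,h+1}(s_{k,h},a_{k,h})\}.
\]
Using the elementary inequality $\min\{H,A+B\}\le \min\{H,A\}+B$ valid for $B=P_hV_{k,h+1}\ge 0$, this splits cleanly into a clipped-bonus contribution plus a one-step expected continuation. Telescoping over $h$, adding a martingale-difference sequence $P_hV_{k,h+1}-V_{k,h+1}(s_{k,h+1})$, and summing over $k$ gives
\[
\sum_{k=1}^K V_{k,1}(s_{k,1})\le 3\sum_{k,h}\min\{H,u_{k,h}(s_{k,h},a_{k,h})\}+\sum_{k,h}\bigl(P_hV_{k,h+1}-V_{k,h+1}(s_{k,h+1})\bigr)(s_{k,h},a_{k,h}).
\]
Azuma--Hoeffding handles the martingale term by $O(\sqrt{H^3K\log(1/\delta)})$. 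For the bonus sum, since $\beta\ge H$, $\min\{H,u_{k,h}\}\le \beta\min\{1,\|\phi\|_{\Lambda_{k,h}^{-1}}\}$. Cauchy--Schwarz plus the elliptical potential lemma (Lemma~\ref{lemma: data accumulation}) yield
\[
\sum_{k,h}\min\{1,\|\phi(s_{k,h},a_{k,h})\|_{\Lambda_{k,h}^{-1}}\}\le \sqrt{KH\cdot 2dH\log(1+K)},
\]
so the bonus sum is $O(\beta\sqrt{dKH^2\log(\cdot)})=O(\sqrt{d^3H^4K\log^2(dH\delta^{-1}\varepsilon^{-1})})$, which dominates the martingale term and matches the claimed bound.

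The main obstacle is the interaction between the $\min\{\cdot,H\}$ truncation in $Q_{k,h}$ and the recursive accumulation of bonuses: naively dropping the clip costs a factor of $H$, while clipping too aggressively loses the martingale telescoping. The calculation above handles this by clipping only the bonus summand (via $\min\{H,A+B\}\le\min\{H,A\}+B$), which is exactly what the choice $R_{k,h}=u_{k,h}$ enables. Crucially, using $R_{k,h}=u_{k,h}/H$ as in Wang et al.\ forces one to multiply through by an extra factor of $H$ when rescaling the auxiliary reward back to the planning-phase accuracy, which is the source of their extra $H^2$.
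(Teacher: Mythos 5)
Your proposal is correct and follows essentially the same route as the paper, which simply defers to the proof of Lemma~3.1 in Wang et al.\ (2020) with $V^*_h$ replaced by the truncated $\tilde V^*_h$; your optimism-by-backward-induction plus the decomposition $\min\{H,A+B\}\le\min\{H,A\}+B$ into clipped bonuses, a martingale term, and the elliptical-potential bound is exactly that argument (and mirrors the paper's own Lemma~\ref{lemma: upper bound of sum V_k} in the linear mixture case). The only step you gloss over is that the self-normalized concentration for the data-dependent $V_{k,h+1}$ requires the usual covering argument over the value-function class, which is where the full factor of $d$ in $\beta$ comes from — but your stated $\beta$ already accounts for this.
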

This lemma corresponds to Lemma~3.1 in \cite{wang2020reward}. The main difference is that we replace $V^*_1$ with $\tilde{V}^*_{1}$. Note that $\tilde{V}^*_{h}(s,R) \leq H$ by the definition of $\tilde{V}^*_{h}(s,R)$. Lemma~\ref{lemma: to lemma 3.1} can be similarly proved following the proof of Lemma~3.1 in \cite{wang2020reward} and replacing $V^*_h$ by $\tilde{V}^*_{h}$ during the proof.

\begin{lemma}
\label{lemma: to lemma 3.2}
With probability $1-\delta/4$, for the function $u_h(\cdot, \cdot)$ defined in Line 5 in Algorithm~\ref{alg: planning phase, linear MDPs}, we have
\begin{align}
    \mathbb{E}_{s \sim \mu}\left[\tilde{V}_{1}^{*}\left(s, u_{h} \right)\right] \leq c^{\prime} \sqrt{d^{3} H^{4} \cdot \log (d K H / \delta) / K}
\end{align}
\end{lemma}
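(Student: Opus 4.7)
The plan is to reduce the bound on the planning-phase bonus $u_h$ to the exploration-phase guarantee of Lemma~\ref{lemma: to lemma 3.1} via two monotonicity observations. First, since $\Lambda_h$ is formed from all $K$ trajectories while $\Lambda_{K,h}$ uses only the first $K-1$, we have $\Lambda_h \succeq \Lambda_{K,h}$, and therefore $u_h(s,a) \leq u_{K,h}(s,a) = R_{K,h}(s,a)$ pointwise. Second, the sequence $\Lambda_{k,h}$ is non-decreasing in $k$, so $R_{k,h}(s,a) = u_{k,h}(s,a)$ is non-increasing in $k$, which gives $R_{K,h}(s,a) \leq R_{k,h}(s,a)$ for every $k \leq K$.

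Interpreting the argument $u_h$ of $\tilde V_1^*(s,u_h)$ as a reward function that is $0$ at every step except step $h$, where it equals $u_h$, the two inequalities above yield $u_h \leq R_K \leq R_k$ coordinate-wise as reward functions. A straightforward induction on $h'$ from $H+1$ down to $1$, using the recursion in Eqn.~\ref{eqn: tildeV for linear MDP}, shows that $\tilde V_1^*(s, \cdot)$ is monotone in its reward argument: addition, the expectation $P_{h'}$, the $\min$ with $H$, and the $\max$ over actions each preserve pointwise monotonicity. Hence
$$K \cdot \tilde V_1^*(s, u_h) \;\leq\; \sum_{k=1}^{K} \tilde V_1^*(s, R_k) \qquad \forall\, s \in \mathcal{S}.$$

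Next I would take expectation over $s \sim \mu$ and push it inside the sum. Because the initial state $s_{k,1}$ is drawn i.i.d.\ from $\mu$ and is independent of all information available at the start of episode $k$ (which fully determines $R_k$), the terms $\tilde V_1^*(s_{k,1}, R_k) - \mathbb{E}_{s \sim \mu}[\tilde V_1^*(s, R_k)]$ form a bounded martingale difference sequence with increments in $[-H, H]$. Azuma's inequality then gives, with probability at least $1-\delta/4$,
$$\sum_{k=1}^{K} \mathbb{E}_{s \sim \mu}\bigl[\tilde V_1^*(s, R_k)\bigr] \;\leq\; \sum_{k=1}^{K} \tilde V_1^*(s_{k,1}, R_k) + O\bigl(H\sqrt{K \log(1/\delta)}\bigr).$$

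Finally, Lemma~\ref{lemma: to lemma 3.1} supplies the optimism $\tilde V_1^*(s_{k,1}, R_k) \leq V_{k,1}(s_{k,1})$ together with $\sum_{k=1}^K V_{k,1}(s_{k,1}) \leq c\sqrt{d^3 H^4 K \log(dKH/\delta)}$. Chaining these bounds and dividing by $K$ yields the claimed rate, since the martingale contribution is only $O(H\sqrt{\log(1/\delta)/K})$ and is dominated by the $\sqrt{d^3 H^4 \log/K}$ term. The only non-routine part of the argument is verifying that the min-with-$H$ clipping in $\tilde V_1^*$ preserves monotonicity in the reward; once that is in hand, the proof reduces to the telescoping inequality driven by the monotonicity of the covariance matrices and a standard Azuma bound.
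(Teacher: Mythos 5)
Your proof is correct and takes essentially the same route as the paper, which establishes this lemma by adapting the proof of Lemma~3.2 of \cite{wang2020reward}: pointwise monotonicity of the bonus in $k$ (so $u_h \leq R_{K,h} \leq R_{k,h}$), monotonicity of $\tilde V_1^*$ in its reward argument (the clipping at $H$ indeed preserves this), an Azuma step to pass from $\mathbb{E}_{s\sim\mu}$ to the realized initial states, and then the optimism and summation bounds of Lemma~\ref{lemma: to lemma 3.1} followed by division by $K$. The only cosmetic point is that the argument $u_h$ in $\tilde V_1^*(s,u_h)$ should be read as the full collection $\{u_{h'}\}_{h'\in[H]}$ rather than a reward supported only at step $h$, but your chain of inequalities applies verbatim coordinate-wise in that case.
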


Compared with Lemma~3.2 in \cite{wang2020reward}, we replace the term ${V}_{1}^{*}\left(s, u_{h}/H \right)$ with  $\tilde{V}_{1}^{*}\left(s, u_{h} \right)$. Note that with our choice of exploration-driven reward in the exploration phase (i.e. $R_{k,h} = u_{k,h}$), we replace the term $u_h/H$ with $u_h$ in the expectation. This lemma can be proved by following the proof of Lemma~3.2 in \cite{wang2020reward}.

\begin{lemma}
\label{lemma: to lemma 3.3}
With probability $1-\delta/2$, for any reward function satisfying Assumption~\ref{assumption: linear MDP} and all $h \in [H]$, we have 
\begin{align}
    Q_{h}^{*}(\cdot, \cdot, r) \leq Q_{h}(\cdot, \cdot) \leq R_{h}(\cdot, \cdot)+\sum_{s^{\prime}} P_{h}\left(s^{\prime} \mid \cdot, \cdot\right) V_{h+1}\left(s^{\prime}\right)+2 u_{h}(\cdot, \cdot).
\end{align}
\end{lemma}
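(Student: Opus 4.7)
The plan is to prove both inequalities simultaneously by backward induction on $h$, after first establishing a uniform self-normalized concentration bound. The key workhorse is the following claim: with probability at least $1-\delta/2$, for every $h\in[H]$ and every value function $V$ in a suitable data-independent class $\mathcal{V}$ that contains all $V_{h+1}$ produced by Algorithm~\ref{alg: planning phase, linear MDPs} (for any reward satisfying Assumption~\ref{assumption: linear MDP}), we have
\begin{equation*}
    \bigl|\hat{w}_h^\top\phi(s,a)-P_h V(s,a)\bigr|\le u_h(s,a),\qquad\forall(s,a)\in\mathcal{S}\times\mathcal{A}.
\end{equation*}
I would derive this by combining the linear MDP identity $P_h V(s,a)=\langle\phi(s,a),\int V(s')\,d\mu_h(s')\rangle$ (which makes $P_hV$ linear in $\phi$) with Lemma~\ref{lemma: self-normalized bound} applied to the least-squares estimator $\hat{w}_h$ in Algorithm~\ref{alg: planning phase, linear MDPs}, and then a covering argument over $\mathcal{V}$ to make the bound uniform in $V$. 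The parameter $\beta=c_\beta\cdot dH\sqrt{\log(dH\delta^{-1}\varepsilon^{-1})}$ is chosen exactly so the union bound over the cover (of cardinality $\log\mathcal{N}(\mathcal{V})=\tilde O(d)$ in log-scale) and over $h\in[H]$ absorbs into the stated $u_h$.

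Granting this concentration event, the \emph{upper} inequality $Q_h\le R_h+P_hV_{h+1}+2u_h$ is immediate from the definition of $Q_h$ in Line~7 of Algorithm~\ref{alg: planning phase, linear MDPs}: if the min with $H$ is inactive, then $Q_h=\hat w_h^\top\phi+R_h+u_h\le P_hV_{h+1}+R_h+2u_h$ by concentration applied to $V=V_{h+1}$; if the min is active at $H$, then since $R_h+P_hV_{h+1}\ge 0$ the inequality holds trivially because $u_h$ is nonnegative and, after capping, $Q_h$ only decreases.

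The \emph{lower} inequality $Q_h^*(\cdot,\cdot,R)\le Q_h$ is the inductive half. Base case: $Q_{H+1}^*=0=Q_{H+1}$, so $V_{H+1}^*\le V_{H+1}$. For the inductive step, assume $V_{h+1}^*\le V_{h+1}$ pointwise. Then $P_h V_{h+1}^*\le P_h V_{h+1}$, and by concentration $P_h V_{h+1}\le \hat w_h^\top\phi+u_h$, hence
\begin{equation*}
    Q_h^*(s,a,R)=R_h(s,a)+P_h V_{h+1}^*(s,a)\le R_h(s,a)+\hat w_h^\top\phi(s,a)+u_h(s,a)\le Q_h(s,a),
\end{equation*}
where the last step also uses $Q_h^*\le H$ to handle the $\min\{\cdot,H\}$ cap. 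Taking max over $a$ gives $V_h^*\le V_h$, closing the induction.

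The main obstacle will be the uniform-in-$V$ concentration step. The class of $V_{h+1}$'s generated in the planning phase is indexed by the reward $R$, but since the dataset $\mathcal{D}$ is fixed before planning and $R$ enters only through the low-dimensional parameter $\eta_h\in\mathbb{R}^d$ (Assumption~\ref{assumption: linear MDP}), one can exhibit $V_{h+1}$ as lying in a parametric family $\{\min(\max_a[\,w^\top\phi(s,a)+\eta^\top\phi(s,a)+\beta\sqrt{\phi^\top\Lambda_h^{-1}\phi}\,],H):\|w\|_2\le W,\|\eta\|_2\le\sqrt d\}$ for some explicit $W$ controlled by $\hat w_{h+1},\ldots,\hat w_H$. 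A standard $\epsilon$-net argument on this finite-dimensional family (as in the linear-MDP literature) then yields covering number $\log|\mathcal{N}|=\tilde O(d)$, which the choice of $\beta$ is calibrated to absorb; care must be taken to chain the $H$ steps so that the net at step $h$ is built consistently with the already-fixed value functions at steps $h+1,\dots,H$.
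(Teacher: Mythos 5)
Your plan is correct and follows essentially the same route as the paper, whose own proof of this lemma simply defers to Lemma 3.3 of \cite{wang2020reward} — i.e., exactly the uniform self-normalized concentration bound (via the linear-MDP identity $P_hV(s,a)=\langle\phi(s,a),\int V\,d\mu_h\rangle$ plus a covering argument) followed by the backward-induction optimism step that you outline. The only quibble is that the log-covering number of the value-function class is $\tilde{O}(d^2)$ rather than $\tilde{O}(d)$, since one must also cover the matrix parameter inside the bonus $\beta\sqrt{\phi^{\top}\Lambda_h^{-1}\phi}$ (this is precisely why $\beta$ scales as $dH$), but this does not affect the validity of your argument because the algorithm's $\beta$ already absorbs it.
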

Since our algorithm in the planning phase is exactly the same with that of \cite{wang2020reward}, this lemma shares the same idea of Lemma~3.3 in \cite{wang2020reward}.

Now we can prove Theorem~\ref{thm: sample complexity. linear MDP} with the help of Lemma~\ref{lemma: to lemma 3.2} and Lemma~\ref{lemma: to lemma 3.3}.

\begin{proof}
(Proof of Theorem~\ref{thm: sample complexity. linear MDP})
We condition on the events defined in Lemma~\ref{lemma: to lemma 3.2} and Lemma~\ref{lemma: to lemma 3.3}, which hold with probability at least $1-\delta$. By Lemma~\ref{lemma: to lemma 3.3}, we have for any $s\in \mathcal{S}$,
\begin{align}
    V_1(s) = \max_{s} Q_1(s,a) \geq \max_a Q^*_1(s,a,R) = V^*_1(s,R),
\end{align}
which implies 
\begin{align}
    \mathbb{E}_{s_{1} \sim \mu}\left[V_{1}^{*}\left(s_{1}, R\right)-V_{1}^{\pi}\left(s_{1}, r\right)\right] \leq \mathbb{E}_{s_{1} \sim \mu}\left[V_{1}\left(s_{1}\right)-V_{1}^{\pi}\left(s_{1}, R\right)\right].
\end{align}
Note that  $0 \leq V_h(s) \leq H$ and $0 \leq V^{\pi}_h(s,R) \leq H$ since $0 \leq R(s,a) \leq 1$. Therefore, we always have  $V_h(s) - V^{\pi}_h(s,R) \leq H$. For any $s_h$, we have
\begin{align}
    &V_h(s_h) - V_h^{\pi}(s_h,R) \\
    \leq & \min\left\{H,  P_h V_{h+1}(s_h,\pi_h(s_h)) + 2u_h(s_h,\pi_h(s_h)) -  P_h V_{h+1}^{\pi}(s_{h+1},R)\right\} 
\end{align}
By recursively decomposing $V_h(s_h) - V_h^{\pi}(s_h,R)$ from step $H$ to $1$, we have
\begin{align}
    \mathbb{E}_{s_{1} \sim \mu}\left[V_{1}\left(s_{1}\right)-V_{1}^{\pi}\left(s_{1}, r\right)\right] \leq \mathbb{E}_{s \sim \mu}\left[\tilde{V}_{1}^{\pi}(s, u)\right].
\end{align}
By definition of $\tilde{V}_1^*(s,u)$, we have $\mathbb{E}_{s \sim \mu}\left[\tilde{V}_{1}^{\pi}(s, u)\right] \leq 2\mathbb{E}_{s \sim \mu}\left[\tilde{V}_{1}^{*}(s, u)\right].$ By Lemma~\ref{lemma: to lemma 3.2}, 
\begin{align}
    \mathbb{E}_{s \sim \mu}\left[\tilde{V}_{1}^{*}(s, u)\right] \leq c^{\prime}  \sqrt{d^{3} H^{4} \cdot \log (d K H / \delta) / K}.
\end{align}

By taking $K = c_K d^3H^4 \log(dH\delta^{-1}\epsilon^{-1})/\epsilon^2$ for a sufficiently large constant $c_K > 0$, we have 
\begin{align}
    \mathbb{E}_{s_{1} \sim \mu}\left[V_{1}^{*}\left(s_{1}, R\right)-V_{1}^{\pi}\left(s_{1}, r\right)\right] \leq  c^{\prime} \sqrt{d^{3} H^{4} \cdot \log (d K H / \delta) / K} \leq \epsilon.
\end{align}
\end{proof}

\section{Auxiliary Lemmas}
\label{appendix: auxiliary lemmas}
\begin{lemma}
\label{lemma: self-normalized bound}
(Self-Normalized Bound for Vector-Valued Martingales, Theorem 1 and 2 in~\cite{abbasi2011improved}) Let $\{F_t\}_{t=0}^{\infty}$ be a filtration. Let $\{\eta_t\}_{t=1}^{\infty}$ be a real-valued stochatic process such that $\eta_t$ is $F_t$-measurable and $\eta_t$ is conditionally $R$-sub-Gaussian for some $R \geq 0$, i.e.
\begin{align}
    \forall \lambda \in \mathbb{R}, \mathbb{E} [e^{\lambda \eta_t}|F_{t-1}] \leq \exp{\left(\frac{\lambda^2R^2}{2}\right)}.
\end{align}
Let $\{X_t\}_{t=1}^{\infty}$ be an $\mathbb{R}^d$-valued stochastic process such that $X_t$ is $F_{t-1}$-measurable. Assume that $V$ is a $d\times d$ positive definite matrix. For any $t \geq 0$, define
\begin{align}
    \bar{V}_t = V + \sum_{s=1}^{t} X_s X_s^{\top}, S_t = \sum_{s=1}^{t} \eta_s X_s.
\end{align}
Then, for any $\delta > 0$, with probability at least $1-\delta$, for all $t \geq 0$,
\begin{align}
    \left\|S_{t}\right\|_{\bar{V}_{t}^{-1}}^{2} \leq 2 R^{2} \log \left(\frac{\operatorname{det}\left(\bar{V}_{t}\right)^{1 / 2} \operatorname{det}(V)^{-1 / 2}}{\delta}\right).
\end{align}

Further, let $V = I \lambda, \lambda > 0$. Define $Y_{t}=\left\langle X_{t}, \theta_{*}\right\rangle+\eta_{t}$ and assume that $\|\theta_{*}\|_2 \leq S$, $\left\|X_{t}\right\|_{2} \leq L,\forall t \geq 1$. Then for any $\delta >0$,with probability at least $1-\delta$, for all $t \geq 0$, $\theta_{*}$ satisfies
\begin{align}
    \left\|\hat{\theta}_{t}-\theta_{*}\right\|_{\bar{V}_{t}} \leq R \sqrt{d \log \left(\frac{1+t L^{2} / \lambda}{\delta}\right)}+\lambda^{1 / 2} S,
\end{align}
where $\hat{\theta}_t$ is the $l^2$-regularized least-squares estimation of $\theta_{*}$ with regularization parameter $\lambda > 0$ based on history samples till step $t$.
\end{lemma}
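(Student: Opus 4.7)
The plan is to mirror the three-step proof structure used for Theorem~\ref{theorem: main}, but replacing Hoeffding-style arguments with Bernstein-style ones. Specifically, I would (i) establish a high-probability event under which all five parameter estimates $\hat{\theta}_{i,k,h}$ are close to $\theta_h$ in the corresponding $\Lambda_{i,k,h}$-norm, (ii) prove an optimism lemma showing that ${V}^{\pi,\tilde{P}}_{k,h}(s,R)$ dominates the ``true-transition'' uncertainty functional $\tilde{V}^{\pi,\tilde{P}}_{k,h}(s,R)$ defined in Eqn.~\ref{eqn:value function with true transition and exploration reward}, and (iii) bound $\sum_{k=1}^K V_{k,1}^{\pi_k,\tilde{P}_k}(s_1,R_k)$ with a refined, variance-aware regret argument. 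Combining these three pieces with the elementary observation that the suboptimality gap of any $\epsilon_{\rm opt}$-optimal plug-in policy for a target reward $R$ is at most twice the estimation-vs-truth gap plus $\epsilon_{\rm opt}$, and then invoking the max over policies/transitions/rewards that defines $\pi_k,\tilde{\theta}_k,R_k$, will yield the desired sample complexity.

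For step (i), the key tool is the self-normalized Bernstein bound (stated as Lemma~\ref{lemma: self-normalized bound, Bernstein} type result referenced in the text) applied to five different filtrations: the variance-normalized regressions for $\hat{V}_{k,h+1}^{\pi_k,\tilde{P}_k}$ and $V_{k,h+1}^{\pi_k,\tilde{P}_k}$, the plain regressions for their squares, and the regression for the auxiliary ``variance value'' $\tilde{Y}_{k,h}$. The crucial subtlety is that variance-normalized noise has variance bounded by $1$ only after we have shown that $\bar{\sigma}_{1,k,h}^2 \geq \mathbb{V}_{1,k,h}$, which itself relies on the third/fourth constraints; this circularity is resolved by first proving a crude $\check{\beta}$-radius bound (which uses only noise boundedness $\le d$), deducing $|\bar{\mathbb{V}}-\mathbb{V}|\le E$, and then bootstrapping to the tighter $\hat{\beta}$-radius bound.

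For step (ii), optimism follows by backward induction: at step $h$, the confidence bonus $u^{\pi,\tilde{P}}_{2,k,h}$ exactly cancels the one-step perturbation $(\tilde{P}_h-P_h){V}_{k,h+1}^{\pi,\tilde{P}}$ by Cauchy-Schwarz against $\|\tilde{\theta}_{k,h}-\theta_h\|_{\Lambda_{2,k,h}}\le 2\hat{\beta}$, and the inductive hypothesis handles the propagated term. Combined with the monotonicity of $\mathcal{U}_{k,h}$ in $k$ (we never drop constraints), this gives that $V_{k,1}^{\pi_k,\tilde{P}_k}(s_1,R_k)$ is non-increasing, so $V_{K,1}^{\pi_K,\tilde{P}_K}(s_1,R_K)\le \frac{1}{K}\sum_{k=1}^K V_{k,1}^{\pi_k,\tilde{P}_k}(s_1,R_k)$.

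The main obstacle, and the heart of the argument, is step (iii). After expanding $V_{k,1}^{\pi_k,\tilde{P}_k}$ along the trajectory, the regret splits into martingale differences plus $\sum_{k,h}\min\{H,u_{1,k,h}\}+2\sum_{k,h}\min\{H,u_{2,k,h}\}$. For each bonus I would factor out $\bar{\sigma}_{i,k,h}$ and apply Cauchy-Schwarz followed by the elliptical potential lemma (Lemma~\ref{lemma: data accumulation}) to obtain $\hat{\beta}\sqrt{\sum_{k,h}\bar{\sigma}_{i,k,h}^2\cdot 2dH\log(\cdot)}$. The remaining task is to bound $\sum_{k,h}\bar{\sigma}_{i,k,h}^2$. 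For $i=1$, the law of total variance applied to the auxiliary value $\tilde{Y}_{k,h}$ (whose reward is $\bar{\mathbb{V}}_{1,k,h}$ and whose transition is $\tilde{P}_k$) gives $\tilde{Y}_{k,1}\le H^2$, and telescoping together with the fifth confidence constraint yields $\sum_{k,h}\bar{\mathbb{V}}_{1,k,h}\le H^2K+\tilde{O}(\sqrt{dH^2K})$; combining with the $E_{1,k,h}$ bound produces a self-referential inequality of the form $X\le A+B\sqrt{X}$ that solves to $\sum_{k,h}\bar{\sigma}_{1,k,h}^2\lesssim H^3K/d+H^2K+d^3H^3\tau$. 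For $i=2$, the corresponding value function $V_{k,h+1}^{\pi_k,\tilde{P}_k}$ is not directly controlled by total variance, so instead I use $(V_{k,h+1})^2\le H\cdot V_{k,h+1}$ to turn the second-moment sum into a first-moment sum over the estimated MDP, pay for the $\tilde{P}_k-P_h$ gap via the second constraint (which generates an extra copy of $\sum u_{2,k,h}$ that can be absorbed by AM-GM), and close the resulting recursion across $h$ via an $a_h\le G+\frac{1}{H}\sum_{h_1\ge h}a_{h_1}$ Gronwall-type argument that loses only a factor of $e$. Plugging the resulting bound on $\sum_k V_{k,1}^{\pi_k,\tilde{P}_k}(s_1,R_k)$ through step (ii) and setting it equal to $\epsilon K$ solves for $K=\tilde{O}\bigl((d^2H^3+dH^4)/\epsilon^2+(d^{2.5}H^2+d^2H^3)/\epsilon\bigr)$, as claimed.
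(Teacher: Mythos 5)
Your proposal does not address the statement it is supposed to prove. The statement is Lemma~\ref{lemma: self-normalized bound}, the self-normalized concentration bound for vector-valued martingales together with the confidence-ellipsoid bound for the ridge estimator; in the paper this is an imported result (Theorems 1 and 2 of \citet{abbasi2011improved}), stated in the auxiliary-lemmas appendix precisely so that it can be \emph{used} in the analyses of Theorems~\ref{theorem: main} and~\ref{theorem: new main}. What you have written instead is an outline of the proof of Theorem~\ref{theorem: new main}: the five confidence constraints, the optimism induction for ${V}^{\pi,\tilde{P}}_{k,h}$, the law-of-total-variance bound via $\tilde{Y}_{k,h}$, and the Gronwall-type recursion $a_h \leq G + \frac{1}{H}\sum_{h_1 \geq h} a_{h_1}$. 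That argument explicitly \emph{consumes} concentration lemmas of exactly the type you were asked to prove (your step (i) invokes Lemma~\ref{lemma: self-normalized bound, Bernstein}, the Bernstein analogue of the target statement), so as a proof of Lemma~\ref{lemma: self-normalized bound} it is circular: nowhere do you construct the object whose deviation is being controlled.

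A genuine proof of the statement requires the method of mixtures. For each fixed $\lambda \in \mathbb{R}^d$, conditional $R$-sub-Gaussianity of $\eta_t$ and $F_{t-1}$-measurability of $X_t$ make $M_t^{\lambda} = \exp\bigl(\frac{1}{R}\langle \lambda, S_t\rangle - \frac{1}{2}\sum_{s=1}^{t}\langle \lambda, X_s\rangle^2\bigr)$ a nonnegative supermartingale with $\mathbb{E}[M_t^{\lambda}] \leq 1$; integrating $\lambda$ against a $\mathcal{N}(0, V^{-1})$ mixture and computing the Gaussian integral yields $\bar{M}_t = \frac{\det(V)^{1/2}}{\det(\bar{V}_t)^{1/2}}\exp\bigl(\frac{1}{2R^2}\|S_t\|^2_{\bar{V}_t^{-1}}\bigr)$, still a supermartingale, and Ville's maximal inequality (with a stopping-time argument to get the bound uniformly over all $t \geq 0$) gives the first display. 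The second display is then algebra, not probability: the ridge estimator satisfies $\hat{\theta}_t - \theta_* = \bar{V}_t^{-1} S_t - \lambda \bar{V}_t^{-1}\theta_*$, whence $\|\hat{\theta}_t - \theta_*\|_{\bar{V}_t} \leq \|S_t\|_{\bar{V}_t^{-1}} + \sqrt{\lambda}\,\|\theta_*\|_2$, and the determinant ratio is controlled by $\det(\bar{V}_t) \leq (\lambda + tL^2/d)^d$ via AM--GM on the eigenvalues, giving the stated radius $R\sqrt{d\log\bigl((1+tL^2/\lambda)/\delta\bigr)} + \lambda^{1/2}S$. None of these ingredients --- the exponential supermartingale, the Gaussian mixture, the maximal inequality, the ridge-regression identity --- appears in your proposal, so the gap is total rather than local.
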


\begin{lemma}
\label{lemma: data accumulation}
(Lemma 11 in~\cite{abbasi2011improved}) Let $\{X_t\}_{t=1}^{\infty}$ be a sequence in $\mathbb{R}^d$, $V$ a $d\times d$ positive definite matrix and define $\bar{V}_t = V + \sum_{s=1}^{t} X_sX_s^{\top}$. Then, we have that 
\begin{align}
    \log \left(\frac{\operatorname{det}\left(\bar{V}_{n}\right)}{\operatorname{det}(V)}\right) \leq \sum_{t=1}^{n}\left\|X_{t}\right\|_{\bar{V}_{t-1}^{-1}}^{2}.
\end{align}
Further , if $\left\|X_{t}\right\|_{2} \leq L$ for all $t$, then 
\begin{align}
    \sum_{t=1}^{n} \min \left\{1,\left\|X_{t}\right\|_{\bar{V}_{t-1}^{-1}}^{2}\right\} \leq 2\left(\log \operatorname{det}\left(\bar{V}_{n}\right)-\log \operatorname{det} V\right) \leq 2\left(d \log \left(\left(\operatorname{trace}(V)+n L^{2}\right) / d\right)-\log \operatorname{det} V\right),
\end{align}
and finally, if $\lambda_{\min}(V) \geq \max(1,L^2)$, then
\begin{align}
    \sum_{t=1}^{n}\left\|X_{t}\right\|_{\bar{V}_{t-1}^{-1}}^{2} \leq 2 \log \frac{\operatorname{det}\left(\bar{V}_{n}\right)}{\operatorname{det}(V)}.
\end{align}
\end{lemma}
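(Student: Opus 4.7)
The plan is to prove all three inequalities from a single telescoping identity obtained via the matrix determinant lemma, and then estimate the individual logarithmic terms in different ways. Specifically, since $\bar V_t = \bar V_{t-1} + X_t X_t^\top$, the rank-one update formula gives $\det(\bar V_t) = \det(\bar V_{t-1})\bigl(1 + X_t^\top \bar V_{t-1}^{-1} X_t\bigr) = \det(\bar V_{t-1})\bigl(1 + \|X_t\|_{\bar V_{t-1}^{-1}}^2\bigr)$. Taking logarithms and summing telescopically from $t=1$ to $n$ yields the master identity
\begin{equation*}
\log\frac{\det(\bar V_n)}{\det(V)} \;=\; \sum_{t=1}^n \log\!\bigl(1 + \|X_t\|_{\bar V_{t-1}^{-1}}^2\bigr).
\end{equation*}

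For the first claim, I would apply the elementary bound $\log(1+x)\le x$ (valid for $x\ge 0$) term by term, which directly produces $\log(\det \bar V_n/\det V) \le \sum_t \|X_t\|_{\bar V_{t-1}^{-1}}^2$. For the second claim, the key inequality is $\min\{1,x\}\le 2\log(1+x)$ for all $x\ge 0$, which I would verify by splitting on whether $x\le 1$ (use $\log(1+x)\ge x/2$, provable by comparing derivatives at $0$ and concavity) or $x>1$ (use $\log(1+x)\ge \log 2 \ge 1/2$). Applied to $x_t = \|X_t\|_{\bar V_{t-1}^{-1}}^2$ and combined with the master identity, this gives $\sum_t \min\{1,\|X_t\|_{\bar V_{t-1}^{-1}}^2\} \le 2(\log\det \bar V_n - \log\det V)$. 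To complete the second chain, I would bound $\det(\bar V_n)$ by the AM--GM inequality on its eigenvalues: $\det(\bar V_n)\le(\operatorname{trace}(\bar V_n)/d)^d$, and note that $\operatorname{trace}(\bar V_n) = \operatorname{trace}(V) + \sum_{t=1}^n \|X_t\|_2^2 \le \operatorname{trace}(V) + nL^2$ under the $\|X_t\|_2\le L$ hypothesis.

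For the third claim, the assumption $\lambda_{\min}(V)\ge \max(1,L^2)$ forces $\bar V_{t-1}\succeq V \succeq L^2 I$, hence $\bar V_{t-1}^{-1}\preceq L^{-2}I$ and therefore $\|X_t\|_{\bar V_{t-1}^{-1}}^2 \le \|X_t\|_2^2/L^2 \le 1$. On this regime, the sharper inequality $x\le 2\log(1+x)$ (which is the $x\le 1$ half of the previous argument) applies to every term, so summing and invoking the master identity gives $\sum_t \|X_t\|_{\bar V_{t-1}^{-1}}^2 \le 2\log(\det \bar V_n/\det V)$.

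No step is really a serious obstacle; the only things to be careful about are (i) the direction of the elementary inequality $\log(1+x)\le x$ versus $\log(1+x)\ge x/2$ (they are used in opposite directions for the first and second claims, so I must not conflate them), and (ii) verifying that the rank-one determinant update is applicable, which requires $\bar V_{t-1}$ to be positive definite. This last point follows by an easy induction from the positive definiteness of $V$ and the fact that $X_t X_t^\top\succeq 0$. Everything else is algebra or standard linear-algebra facts (matrix determinant lemma, AM--GM on eigenvalues, trace additivity), so I would present the argument linearly in the order: telescoping identity, then three short paragraphs applying the appropriate scalar inequality to each claim.
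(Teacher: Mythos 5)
Your proof is correct and is essentially the canonical argument: the paper itself states this lemma by citation to \cite{abbasi2011improved} without reproving it, and your steps --- the rank-one determinant identity $\det(\bar V_t)=\det(\bar V_{t-1})\bigl(1+\|X_t\|_{\bar V_{t-1}^{-1}}^2\bigr)$ telescoped into the master identity, the scalar bounds $\log(1+x)\le x$ and $\min\{1,x\}\le 2\log(1+x)$, the AM--GM trace bound $\det(\bar V_n)\le(\operatorname{trace}(\bar V_n)/d)^d$, and the observation that $\lambda_{\min}(V)\ge\max(1,L^2)$ forces $\|X_t\|_{\bar V_{t-1}^{-1}}^2\le 1$ so the sharper half of the scalar inequality applies termwise --- reproduce exactly the proof in that reference. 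No gaps; the care you flag about the direction of the logarithmic inequalities and the inductive positive definiteness of $\bar V_{t-1}$ is all that is needed.
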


\begin{lemma}
\label{lemma: self-normalized bound, Bernstein}
(Bernstein inequality for vector-valued martingales, Theorem 4.1 in~\cite{zhou2020nearly}) Let $\left\{\mathcal{G}_{t}\right\}_{t=1}^{\infty}$ be a filtration, $\{x_t,\eta_t\}_{t \geq 1}$ a stochastic process so that $x_t \in \mathbb{R}^d$ is $\mathcal{G}_t$-measurable and $\eta_t \in \mathbb{R}$ is $\mathcal{G}_{t+1}$-measurable. Fix $R,L,\sigma,\lambda > 0, \mu^* \in \mathbb{R}^d$. For $t \geq 1$, let $y_t =\left\langle{\mu}^{*}, {x}_{t}\right\rangle+\eta_{t}$ and suppose that $\eta_t, x_t$ also satisfy
\begin{align}
    \left|\eta_{t}\right| \leq R, \mathbb{E}\left[\eta_{t} \mid \mathcal{G}_{t}\right]=0, \mathbb{E}\left[\eta_{t}^{2} \mid \mathcal{G}_{t}\right] \leq \sigma^{2},\left\|{x}_{t}\right\|_{2} \leq L.
\end{align}
Then, for any $0 < \delta < 1$, with probability at least $1-\delta$, we have
\begin{align}
    \forall t>0,\left\|\sum_{i=1}^{t} {x}_{i} \eta_{i}\right\|_{{z}_{t}^{-1}} \leq \beta_{t},\left\|{\mu}_{t}-{\mu}^{*}\right\|_{{z}_{t}} \leq \beta_{t}+\sqrt{\lambda}\left\|{\mu}^{*}\right\|_{2},
\end{align}
where for $t \geq 1, {\mu}_{t}={Z}_{t}^{-1} {b}_{t}, {Z}_{t}=\lambda {I}+\sum_{i=1}^{t} {x}_{i} {x}_{i}^{\top}, {b}_{t}=\sum_{i=1}^{t} y_{i} {x}_{i}$ and 
\begin{align}
    \beta_{t}=8 \sigma \sqrt{d \log \left(1+t L^{2} /(d \lambda)\right) \log \left(4 t^{2} / \delta\right)}+4 R \log \left(4 t^{2} / \delta\right).
\end{align}

\end{lemma}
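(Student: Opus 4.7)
The plan is to bound the sub-optimality gap of the plug-in policy by the quantity $V_{K,1}^{\pi_K,\tilde P_K}(s_1,R_K)$ that is maximized in the exploration phase, and then control the latter by a ``regret-style'' summation along the trajectories. Concretely, I would first condition on the high-probability event $\mathcal{E}_2$ (the concentration event used throughout Section 5) so that $\theta_h \in \mathcal{U}_{k,h}$ for all $(k,h)$, and then decompose, for any reward $R$,
\begin{equation*}
V_1^*(s_1,R)-V_1^{\hat\pi_R}(s_1,R) \le \bigl(V_1^*-\hat V_1^{\pi_R^*,\tilde P_K}\bigr)(s_1,R)+\bigl(\hat V_1^{\hat\pi_R,\tilde P_K}-V_1^{\hat\pi_R}\bigr)(s_1,R)+\epsilon_{\rm opt}.
\end{equation*}
Expanding each simulation-lemma term as an $h$-step telescoping of $(\tilde P_{K,h}-P_h)\hat V_{h+1}^{\pi,\tilde P_K}$ and applying Cauchy--Schwarz together with the confidence constraint $\|\tilde\theta_{K,h}-\theta_h\|_{\Lambda_{1,K,h}}\le 2\hat\beta$, each term becomes an expectation of accumulated bonuses $u_{1,K,h}^{\pi,\tilde P_K}$, which by the optimism lemma for $V_{k,h}^{\pi,\tilde P}$ is bounded by $V_{K,1}^{\pi,\tilde P_K}(s_1,R)$, and by the maximization step defining $(\pi_K,\tilde\theta_K,R_K)$ is further bounded by $V_{K,1}^{\pi_K,\tilde P_K}(s_1,R_K)$.

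Next I would use the fact that the confidence sets $\mathcal{U}_{k,h}$ are \emph{nested-shrinking} (the algorithm only adds constraints), together with the monotonicity of each bonus $u_{i,k,h}^{\pi,\tilde P}$ in the matrix $\Lambda_{i,k,h}$, to argue by backward induction on $h$ that $V_{k,h}^{\pi,\tilde P}(s,R)$ is non-increasing in $k$ for every fixed $(\pi,\tilde P,R,s)$. Combined with the per-episode maximality of $(\pi_k,\tilde\theta_k,R_k)$, this yields $V_{K,1}^{\pi_K,\tilde P_K}(s_1,R_K) \le \tfrac{1}{K}\sum_{k=1}^K V_{k,1}^{\pi_k,\tilde P_k}(s_1,R_k)$, so the whole problem reduces to upper-bounding $\sum_k V_{k,1}^{\pi_k,\tilde P_k}(s_1,R_k)$.

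The main obstacle is this summation bound, which is exactly the Bernstein-style regret analysis that must be adapted to the reward-free linear-mixture setting. I would decompose $V_{k,1}^{\pi_k,\tilde P_k}(s_1,R_k)$ along the sampled trajectory into the exploration bonus $u_{1,k,h}^{\pi_k,\tilde P_k}$, the optimism bonus $u_{2,k,h}^{\pi_k,\tilde P_k}$, the model-error term $(\tilde P_{k,h}-P_h)V_{k,h+1}^{\pi_k,\tilde P_k}$ (controlled by $u_{2,k,h}^{\pi_k,\tilde P_k}$ via Cauchy--Schwarz), and a martingale-difference noise term bounded by Azuma. For the bonus sums I would factor $u_{1,k,h}$ as $\hat\beta\,\bar\sigma_{1,k,h}\cdot\|\phi/\bar\sigma_{1,k,h}\|_{\Lambda_{1,k,h}^{-1}}$, apply Cauchy--Schwarz over $(k,h)$, invoke the elliptical-potential lemma (Lemma~\ref{lemma: data accumulation}) on the normalized features to get a $\sqrt{dHK}$-type factor, and then control $\sum_{k,h}\bar\sigma_{1,k,h}^2$ by the optimistic variance definition $\bar\sigma_{1,k,h}^2\le H^2/d+E_{1,k,h}+\bar{\mathbb V}_{1,k,h}$.

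The delicate part is bounding $\sum_{k,h}\bar{\mathbb V}_{1,k,h}(s_{k,h},a_{k,h})$: because the value functions $V_{k,h}$ are not produced by a single stationary policy on a single MDP, I cannot directly invoke the law of total variance. My plan here is to introduce the auxiliary value $\tilde Y_{k,h}$ defined with transition $\tilde P_k$ and reward $\bar{\mathbb V}_{1,k,h}$, so that $\tilde Y_{k,1}\le H^2$ by the standard law of total variance on the fixed empirical MDP $(\tilde P_k,\pi_k)$; then transfer back to the real transition $P$ using the fifth confidence ellipsoid $\|\tilde\theta_{k,h}-\theta_h\|_{\Lambda_{5,k,h}}\le 2\tilde\beta$ plus another elliptical-potential argument, paying only a $\tilde O(\sqrt{dH^2K})$ lower-order term. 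A parallel but slightly more intricate argument controls $\sum_{k,h}\bar{\mathbb V}_{2,k,h}$ by relating the second moment of $V_{k,h+1}^{\pi_k,\tilde P_k}$ to $H\cdot\mathbb E[V_{k,h+1}^{\pi_k,\tilde P_k}]$ and then back to $\sum_{k,h} V_{k,h+1}^{\pi_k,\tilde P_k}(s_{k,h+1},R_k)$, giving a self-bounding inequality that can be solved by elementary recursion on $h$ (the $(1+1/H)^H\le e$ trick). Solving this recursion yields $\sum_k V_{k,1}^{\pi_k,\tilde P_k}(s_1,R_k) = \tilde O\bigl(\sqrt{(d^2H^3+dH^4)K}+(d^{2.5}H^2+d^2H^3)\bigr)$; dividing by $K$ and combining with the sub-optimality reduction and the $\epsilon_{\rm opt}$ plug-in error gives the required $K$ to achieve $O(\epsilon+\epsilon_{\rm opt})$ sub-optimality, completing the theorem.
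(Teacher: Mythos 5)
You have proved the wrong statement. The lemma you were asked to establish is the self-normalized \emph{Bernstein concentration inequality for vector-valued martingales} (Theorem 4.1 of \cite{zhou2020nearly}, restated in the paper's Appendix E as an auxiliary lemma and imported without proof): a purely probabilistic claim that, under the conditions $|\eta_t|\le R$, $\mathbb{E}[\eta_t\mid\mathcal{G}_t]=0$, $\mathbb{E}[\eta_t^2\mid\mathcal{G}_t]\le\sigma^2$, $\|x_t\|_2\le L$, the self-normalized sum satisfies $\|\sum_{i=1}^t x_i\eta_i\|_{Z_t^{-1}}\le\beta_t$ uniformly over $t$ with probability $1-\delta$, with $\beta_t$ exhibiting the Bernstein-type structure $8\sigma\sqrt{d\log(1+tL^2/(d\lambda))\log(4t^2/\delta)}+4R\log(4t^2/\delta)$ (a $\sqrt{\sigma^2}$ main term plus an $R$-scaled lower-order term, in contrast to the sub-Gaussian bound of Lemma~\ref{lemma: self-normalized bound}). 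Your proposal instead sketches the proof of Theorem~\ref{theorem: new main}: the sub-optimality decomposition, the monotonicity of the confidence sets $\mathcal{U}_{k,h}$, the bound on $\sum_k V_{k,1}^{\pi_k,\tilde P_k}(s_1,R_k)$ via elliptical potentials and the law of total variance, and the $(1+1/H)^H\le e$ recursion. That material reproduces Lemmas~\ref{lemma: new upper bound of sum V star}--\ref{lemma: sub-optimality gap during the planning phase}, which \emph{consume} the concentration lemma (via Lemma~\ref{lemma: high probability events}); it cannot be used to \emph{establish} it, since your argument conditions on exactly the high-probability event $\mathcal{E}_2$ that the lemma is needed to construct.

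Concretely, what is missing is any treatment of the martingale itself. A correct proof (following \cite{zhou2020nearly}) must control the random quadratic form $\|S_t\|_{Z_t^{-1}}$ with $S_t=\sum_{i=1}^t x_i\eta_i$ directly: one applies a scalar Freedman/Bernstein inequality to suitably weighted increments such as $\langle Z_{i-1}^{-1}S_{i-1},x_i\eta_i\rangle$ together with the elliptical-potential identity relating $\|S_t\|^2_{Z_t^{-1}}$ to $\sum_i\|x_i\|^2_{Z_{i-1}^{-1}}$, and then runs a peeling or inductive argument over $t$ (whence the $\log(4t^2/\delta)$ factors from the union bound over time and magnitude scales) to decouple the self-normalization from the randomness of $Z_t$; the conditional-variance bound $\mathbb{E}[\eta_t^2\mid\mathcal{G}_t]\le\sigma^2$ enters the quadratic-variation term and the almost-sure bound $|\eta_t|\le R$ enters the range term, producing the two-term shape of $\beta_t$. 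Finally, the second conclusion is elementary algebra that your sketch also omits: writing $\mu_t-\mu^*=Z_t^{-1}\left(S_t-\lambda\mu^*\right)$ gives $\|\mu_t-\mu^*\|_{Z_t}\le\|S_t\|_{Z_t^{-1}}+\lambda\|\mu^*\|_{Z_t^{-1}}\le\beta_t+\sqrt{\lambda}\|\mu^*\|_2$, using $Z_t\succeq\lambda I$. Note also that the paper itself does not reprove this lemma; it cites it, so even a citation with the reduction above would have matched the paper more closely than the unrelated regret analysis you supplied.
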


\end{document}